\definecolor{cvprblue}{rgb}{0.21,0.49,0.74}
\definecolor{red}{RGB}{238, 136, 116}
\definecolor{green}{RGB}{110, 202, 205}
\definecolor{mygray}{gray}{0.4}
\definecolor{demphcolor}{RGB}{90,90,90}
\newcommand{\tablestyle}[2]{\setlength{\tabcolsep}{#1}\renewcommand{\arraystretch}{#2}\centering\footnotesize}
\newlength\savewidth\newcommand\shline{\noalign{\global\savewidth\arrayrulewidth
  \global\arrayrulewidth 1pt}\hline\noalign{\global\arrayrulewidth\savewidth}}
\newcommand{\cmark}{\color{mygray}\ding{51}}%
\newcommand{\demph}[1]{\textcolor{demphcolor}{#1}}
\newtheorem{theorem}{Theorem}
\newtheorem{lemma}{Lemma}
\newtheorem{proposition}{Proposition}
\title{RayFlow: Instance-Aware Diffusion Acceleration via Adaptive Flow Trajectories}
\def\thanks#1{\protected@xdef\@thanks{\@thanks
        \protect\footnotetext{#1}}}
\author{
Huiyang Shao\quad Xin Xia$^*$ \quad
Yuhong Yang \quad Yuxi Ren \quad Xing Wang \quad
Xuefeng Xiao$^\dagger$ \\\\
\textbf{ByteDance Inc.} \thanks{
$^*$ Corresponding Author}\thanks{$^\dagger$ Project Leader}
}
\begin{document}
\maketitle
\begin{figure*}[!t]
  \centering
  \begin{subfigure}[b]{0.49\textwidth} %
    \includegraphics[width=\textwidth]{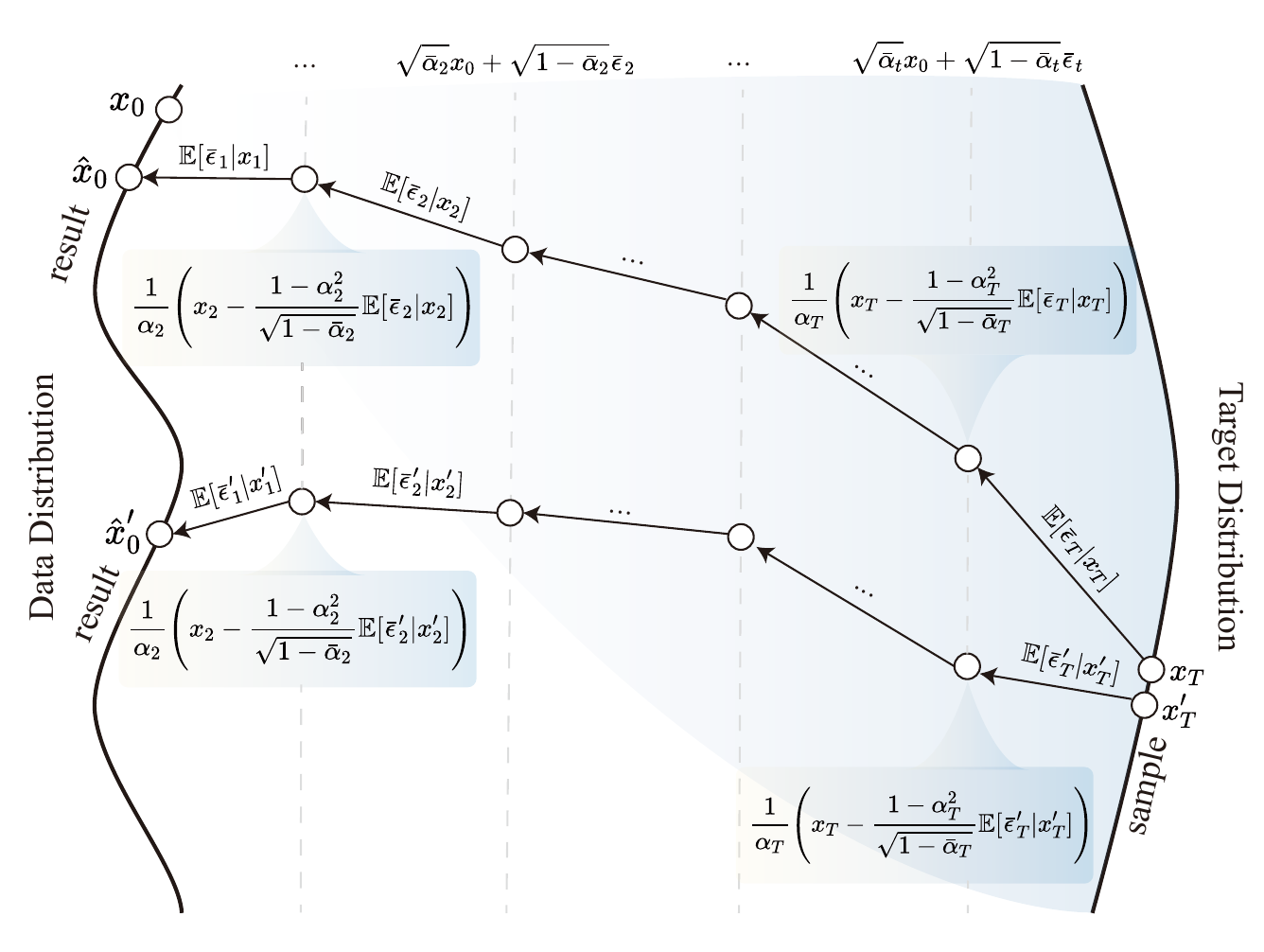}
    \caption{The forward and backward process of traditional diffusion.}
    \label{fig:traditional_diffusion}
  \end{subfigure}
  \hfill
  \begin{subfigure}[b]{0.49\textwidth} 
    \includegraphics[width=\textwidth]{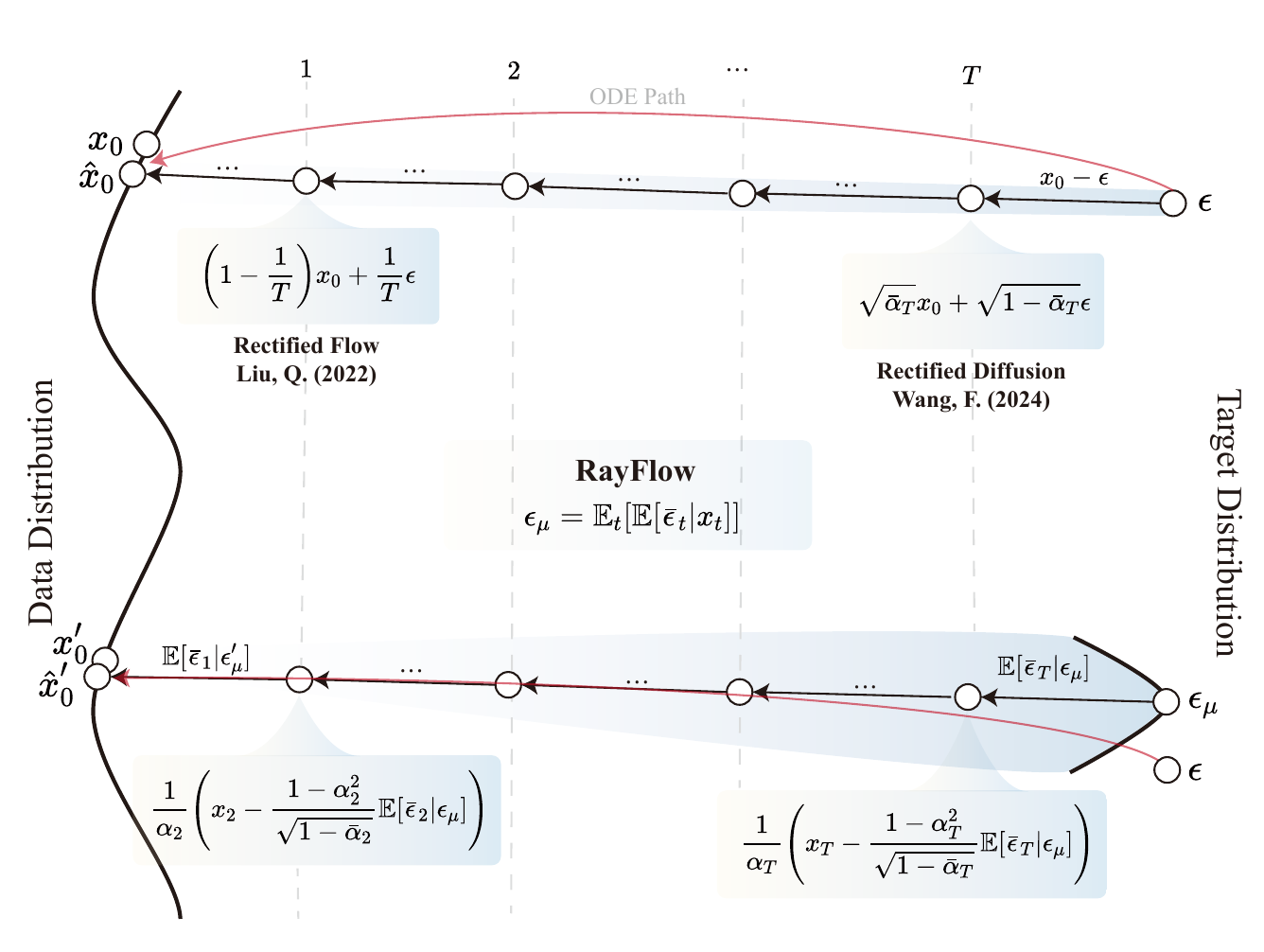}
    \caption{The forward and backward process of our diffusion method.}
    \label{fig:our_diffusion}
  \end{subfigure}
  \caption{The comparision of different diffusion process.}
  \label{fig:diffusion_compare}
\end{figure*}

\begin{abstract}
Diffusion models have achieved remarkable success across various domains. However, their slow generation speed remains a critical challenge. Existing acceleration methods, while aiming to reduce steps, often compromise sample quality, controllability, or introduce training complexities. Therefore, we propose \underline{\textit{RayFlow}}, a novel diffusion framework that addresses these limitations. Unlike previous methods, RayFlow guides each sample along a unique path towards an instance-specific target distribution. \textbf{This method minimizes sampling steps while preserving generation diversity and stability.} Furthermore, we introduce Time Sampler, an importance sampling technique to enhance training efficiency by focusing on crucial timesteps. Extensive experiments demonstrate RayFlow's superiority in generating high-quality images with improved speed, control, and training efficiency compared to existing acceleration techniques.
\end{abstract}
\section{Introduction}

Diffusion models have revolutionized generative AI \citep{sohl2015deep,song2019generative,ho2020ddpm,song2020score}, achieving impressive results in various domains, from text \citep{li2022diffusion, wu2023ar} and images \citep{xie2024addsr, shao2023building,zhao2024wavelet,zhao2024learning} to 3D models \citep{wang2024prolificdreamer,liu2023zero, cai2024digital}, audio \citep{liu2023audioldm,evans2024fast}, and restoration \citep{blattmann2023stable,videoworldsimulators2024}. However, their slow generation speed, often requiring dozens of steps per sample, remains a significant limitation.  Various distillation techniques attempt to accelerate this process, including normal \citep{luhman2021knowledge,zheng2023dfno}, adversarial \citep{wang2022diffusion-stylegan,sauer2023adversarial}, progressive \citep{salimans2022progressive}, and variational score distillation \citep{wang2024prolificdreamer,yin2024dmd,yin2024dmd2,luo2024diff-instruct,xie2024EMD,salimans2024multistepmoment}.  Current distillation methods, despite their objectives, are hindered by challenges including substantial computational overhead, complex training regimens, and limitations in terms of generation speed, sample quality, and effective guidance \citep{luo2023one, mei2024codi, xie2024em, nguyen2024swiftbrush, kang2024distillingdiffusionmodelsinto, zhou2024simple, zhang2024accelerating}.  This underscores the imperative for developing more generalized approaches to fully exploit the capabilities of diffusion models.

The traditional diffusion process, as shown in Fig.\ref{fig:traditional_diffusion}, forms the foundation for most acceleration and distillation methods. However, it's clear from the figure that this process suffers from several issues:
\begin{itemize}
    \item \textcolor{red}{{(1) Varying Expectations.}} The expectations in the backward process differ across timesteps. Achieving high-quality samples necessitates a greater number of sampling steps, making it impossible for accelerated sampling methods to avoid quality degradation.
    \item \textcolor{red}{(2) Overlapping Diffusion Paths.} Since all sample eventually converges to the same standard Gaussian distribution, the diffusion probability paths will overlap.  $\mathbb{E}[\bar{\epsilon}_t|x_t]$ may represent the intersection of multiple probability paths, leading to significant randomness in sampling outcomes and potentially substantial quality loss.
    \item \textcolor{red}{(3) Sampling Instability.} Even with closely positioned sampling points, the final generated results can differ significantly, introducing substantial instability.
\end{itemize}

Several studies try to address these limitations: \citep{liu2022flow} introduced rectified flow (RF) using linear ordinary differential equation (ODE) for straight-path sampling (Fig.\ref{fig:our_diffusion} top-left), while \citep{wang2024rectified} extended RF to first-order curved paths rectified diffusion (RD) for denoising diffusion probabilistic models (DDPM)-like models (Fig.\ref{fig:our_diffusion} top-right). These sample-noise matching approaches have also been applied to diffusion distillation by \citep{liu2023instaflow, yan2024perflow}.

However, existing sample-noise matching algorithms brings new drawbacks while attempting to address above challenges: 
\textcolor{red}{(1*) Path Inconsistency:} the gap between sample-noise matching and actual ODE sampling paths is too large, potentially leading to training difficulties and poor generalization; 
\textcolor{red}{(2*) 
Limited Diversity:} sampling probability paths are severely constrained, significantly reducing model generation diversity; \textcolor{red}{(3*) Theoretical Gap:} the method is highly intuitive but lacks fundamental theoretical derivation to prove its optimality for sampling stability.

To address all the three challenges in traditional diffusion and the drawbacks introduced by existing sample-noise matching methods, we propose \underline{\textit{RayFlow}} (shown in Fig.\ref{fig:our_diffusion} bottom).  \textcolor{green}{\textbf{Consistent Expectations and Path:}} For challenge \textbf{(1)} and \textbf{(1*)}, we leverage pre-trained models to calculate a unified noise expectation $\epsilon_\mu=\mathbb{E}_t[\mathbb{E}[\bar{\epsilon}_t]]$ across all timesteps, enabling efficient step compression without quality degradation. \textcolor{green}{\textbf{Individual Path Design:}} For challenge \textbf{(2)} and \textbf{(2*)}, instead of converging to a common Gaussian, each sample follows a unique diffusion path towards its specific target mean with reduced variance, minimizing path overlaps and sampling randomness.
\textcolor{green}{\textbf{Theoretical Guarantee:}} For challenge \textbf{(3)} and \textbf{(3*)}, we prove that our method maximizes the path probability between the starting point, target mean, and origin, ensuring optimal sampling stability and reliable reconstruction of the original data point.

Moreover, to improve training efficiency, we develop \textbf{Time Sampler}, an advanced importance sampling method that identifies crucial timesteps during training. By combining Stochastic Stein Discrepancies (SSD) with neural networks, Time Sampler approximates the optimal sampling timestep distribution to minimize variance of training loss estimator, thereby reducing computational redundancy and enhancing efficiency.

Our key contributions can be summarized as follows:
\begin{itemize}
    \item \textbf{RayFlow Framework:} We introduce an innovative diffusion framework with instance-independent target means. This approach offers enhanced control over the generative process, allowing for more efficient and precise sampling.
    \item \textbf{Time Sampler:} We develop an timestep importance sampling technique utilizing SSD. This method effectively identifies crucial timesteps during training, reducing computational redundancy and enhancing efficiency.
    \item \textbf{Efficient Algorithms:} We present practical algorithms for training and sampling with RayFlow, including a fast one-step sampling variant for quicker generation.
    \item \textbf{Theoretical Analysis:} We provide a thorough theoretical examination of RayFlow, detailing the derivation of path probabilities and the optimization of parameters to ensure maximal sampling stability.
\end{itemize}
Through extensive experiments, we demonstrate the effectiveness of RayFlow in generating high-quality images with improved efficiency and controllability compared to existing acceleration algorithm. Our work opens up new  approach for exploring and controlling diffusion processes.

\section{Preliminaries}

\subsection{Denoising Diffusion Probabilistic Models \cite{ho2020ddpm}}\label{sec:DDPM}
Consider a dataset of real samples $\bm{x}_0 \in \mathbb{R}^d$ from distribution $p(\bm{x}_0)$. DDPMs learn this distribution through iterative noising and denoising over $T$ discrete time steps, indexed by $t \in \{1,\dots,T\}$. Let $\mathcal{N}(\bm{\mu}, \bm{\Sigma})$ denote the Gaussian distribution with mean $\bm{\mu}$ and covariance $\bm{\Sigma}$.

\textbf{Forward Process.} The forward diffusion process gradually adds Gaussian noise through a Markov chain:
\begin{equation}
p(\bm{x}_t|\bm{x}_{t-1})=\mathcal{N}(\alpha_t\bm{x}_{t-1},\beta_t\bm{I})
\end{equation}
where $\alpha_t\in[0,1]$ is the draft term and $\beta_t\in[0,1]$ is noise variance.  After $T$ steps, $p(\bm{x}_T) = \mathcal{N}(\bm{0}, \bm{I})$. With setting $\alpha_t^2 + \beta_t^2 = 1$, the process simplifies to:
\begin{equation}\label{eq:diffusion_forward_prod}
    p(\bm{x}_t|\bm{x}_0) = \mathcal{N}(\bar{\alpha}_t\bm{x}_0, 1-\bar{\alpha}_t\bm{I})
\end{equation}
where $\bar{\alpha}_t = \prod_{s=1}^t \alpha_s$ is the cumulative product of scaling factors up to step $t$.

\textbf{Backward Process.}
The reverse process reconstructs $\bm{x}_0$ from $\bm{x}_T$ by iteratively denoising:
\begin{equation}
\begin{aligned}
p(\bm{x}_{t-1}|\bm{x}_t,\bm{x}_0) = \mathcal{N}(\tilde{\bm{\mu}}(\bm{x}_t,\bm{x}_0), \hat{\beta}_t\bm{I})
\end{aligned}
\end{equation}
where $\tilde{\bm{\mu}}$ are parameters which need to learn, $\hat{\beta}_t = (1-\bar{\alpha}_{t-1})/(1-\bar{\alpha}_t)\beta_t$.

\subsection{Simulation-Free Flow Matching}
Flow matching (FM) \citep{esser2024sd3, liu2022flow,liu2209rectified,lipman2022flow} consider generative models that learn a mapping between a noise distribution $p(\bm{x}_T)$ and a data distribution $p(\bm{x}_0)$ via an ODE:
\begin{equation}
  \label{eq:ode}
  d\bm{x}_t = \bm{v}_{\bm{\theta}}(\bm{x}_t, t)dt,
\end{equation}
where $\bm{v}_{\bm{\theta}}(\bm{y}_t, t)$ represents a velocity field parameterized by a neural network with weights $\bm{\theta}$.  Let $\bm{u}(\bm{x}_t)$ be the marginal velocity field that generates the marginal probability $p(\bm{x}_t)$.
\begin{equation*}
\psi_t(\cdot |\bm{\epsilon}) = a_{t} \bm{x}_{0} + b_{t} \bm{\epsilon}, \ \
\bm{u}(\bm{x}_t |\bm{\epsilon}) = \psi_{t}^{\prime}\left(\psi_{t}^{-1}(\bm{x}_t |\bm{\epsilon}) |\bm{\epsilon}\right),
\end{equation*}
where $a_t, b_t\in\mathbb{R}$ are coefficient. The FM objective, which aims to directly regress marginal velocity field ($\bm{u}(\bm{x}_t)$ has the same gradient with $\bm{u}(\bm{x}_t|\bm{\epsilon})$ \cite{lipman2022flow}):
\begin{align}\label{Eq.diffusion_train}
   \mathcal{L}_{CFM} =  \mathbb{E}_{t, p(\bm{x}_t|\bm{\epsilon}),p(\bm{\epsilon})} \left[ || \bm{v}_{\bm{\theta}}(\bm{x}_t, t) - \bm{u}(\bm{x}_t|\bm{\epsilon}) ||_2^2 \right].
\end{align}

According to \citep{esser2024sd3}'s setting, we have $\bm{u}_{t}(\bm{x}_{t}|\bm{\epsilon}) = \frac{a_{t}^{\prime}}{a_{t}} \bm{x}_{t} - \frac{b_{t}}{2} \lambda_{t}^{\prime} \epsilon$, where the signal-to-noise ratio $\lambda_t = \log(a_t^2/b_t^2)$, with $\lambda_t' = 2 (\alpha_t'/\alpha_t - b_t'/b)$. Denote $\bm{\epsilon}_{\bm{\theta}}(\bm{x}_t, t) = \frac{-2}{\lambda_t' b_t} (\bm{v}_{\bm{\theta}} - \frac{a_t'}{a_t} \bm{x}_t)$, then we can rewrite Eq.\eqref{Eq.diffusion_train} to:
\begin{equation}\label{eq:fm_diffusion_train}
  \mathcal{L}(\bm{x}_0) = -\frac{1}{2} \mathbb{E}_{t, \bm{\epsilon}}
  \left[ -\frac{1}{2} \lambda_t'^2 b_t^2 \Vert \bm{\epsilon}_{\bm{\theta}}(\bm{x}_t, t) - \bm{\epsilon} \Vert^2 \right]
\end{equation}
Eq.\eqref{eq:fm_diffusion_train} shows the connection between diffusion and FM, demonstrating that their training processes differ only by different weighting factor.

\subsection{Flow Trajectories}
This section examines several trajectory variants.





\textbf{Variance Preserving (VP) \citep{ho2020ddpm}.}  As described in Section~\ref{sec:DDPM}, sets $\alpha_t^2+\beta_t^2=1$ and defines $\beta_t$ linearly between $\beta_0$ and $\beta_{T}$.  VP uses a square-root interpolation of $\sqrt{\bar{\alpha}_t}$, with forward process $\psi_t(\cdot|\bm{\epsilon})=\bm{x}_t=\sqrt{\bar{\alpha}_t}\bm{x}_0+\sqrt{1-\bar{\alpha}_t}\bm{\epsilon}$ and marginal velocity field $\bm{u}(\bm{x}_t|\bm{\epsilon})=\frac{\alpha_t'}{1-\alpha_t^2}(\alpha_t\bm{x}_0-\bm{\epsilon})$

\textbf{Rectified Flow \citep{liu2022flow}.}  RF defines a linear forward process: $\psi_t(\cdot|\bm{\epsilon})=\bm{x}_t=(1-t)\bm{x}_0+t\bm{\epsilon}$, and the marginal velocity field $\bm{u}(\bm{x}_t|\bm{\epsilon})=\frac{\bm{\epsilon}-\bm{x}_0}{1-t}$


\subsection{Importance Sampling}
In variational inference, we frequently seek to estimate expected values of the form \(\bm{\mu} = \mathbb{E}_{\bm{x} \sim p}[\xi(\bm{x})]\), where \(p\) is a probability distribution over \(\mathcal{D} \subseteq \mathbb{R}^d\) and \(\xi: \mathcal{D} \to \mathbb{R}\) is an integrable function. The classical Monte Carlo estimate of \(\bm{\mu}\) is \(\hat{\bm{\mu}}_n = \frac{1}{n} \sum_{i=1}^{n} \xi(\bm{x}_i)\), with \(\bm{x}_i \sim p\) being i.i.d. samples.

However, when \(\xi\) is non-zero primarily in regions where \(p\) is small, standard Monte Carlo sampling becomes inefficient. Importance Sampling addresses this by sampling from an alternative distribution \(q\), chosen to reduce the variance of the estimator.

\textbf{Definition of Importance Sampling}. Let $q\colon \mathcal{D} \to \mathbb{R}$ be a probability density function such that $q(\bm{x}) > 0$ for all $\bm{x} \in \mathcal{D}$ where $p(\bm{x}) \xi(\bm{x}) \neq 0$. We can express the expected value $\bm{\mu}$ as:
\begin{equation}
\bm{\mu} = \int_{\mathcal{D}} \xi(\bm{x}) \frac{p(\bm{x})}{q(\bm{x})} q({x})  \mathrm{d} {\bm{x}} = \mathbb{E}_{\bm{x} \sim q}\left[\xi(\bm{x}) \frac{p(\bm{x})}{q(\bm{x})}\right].
\end{equation}

\textbf{Importance Sampling Estimator}. Given i.i.d. samples $\{\bm{x}_i\}_{i=1}^{n}$ from the importance distribution $q$, the importance sampling estimator $\hat{\bm{\mu}}_q$ is defined as:
\begin{equation}
\hat{\bm{\mu}}_q = \frac{1}{n} \sum_{i=1}^n \xi({x}_i) \frac{p(\bm{x}_i)}{q(\bm{x}_i)}.
\end{equation}
This estimator is unbiased, meaning that its expectation is equal to $\bm{\mu}$. Moreover, we can get variance of Importance Sampling Estimator. Then the variance of the importance sampling estimator is given by $\mathbb{V}_q[\hat{\bm{\mu}}_q] = \frac{1}{n} \sigma_q^2$, where
\begin{equation}
    \sigma_q^2 = \mathbb{E}_{\bm{x} \sim q}\left[\xi^2(\bm{x}) \left(\frac{p(\bm{x})}{q(\bm{x})}\right)^2\right] - \bm{\mu}^2.
\end{equation}

\section{Proposed Method}

This section introduces our proposed method for RayFlow, which involves a novel approach to diffusion processes in generative modeling. We present the theoretical foundations, the forward and backward processes, and the path probability. We also provide the optimal parameters for maximizing the path probability and outline the training and sampling algorithms.

\begin{figure*}[htbp] 
  \centering
  \includegraphics[width=\textwidth]{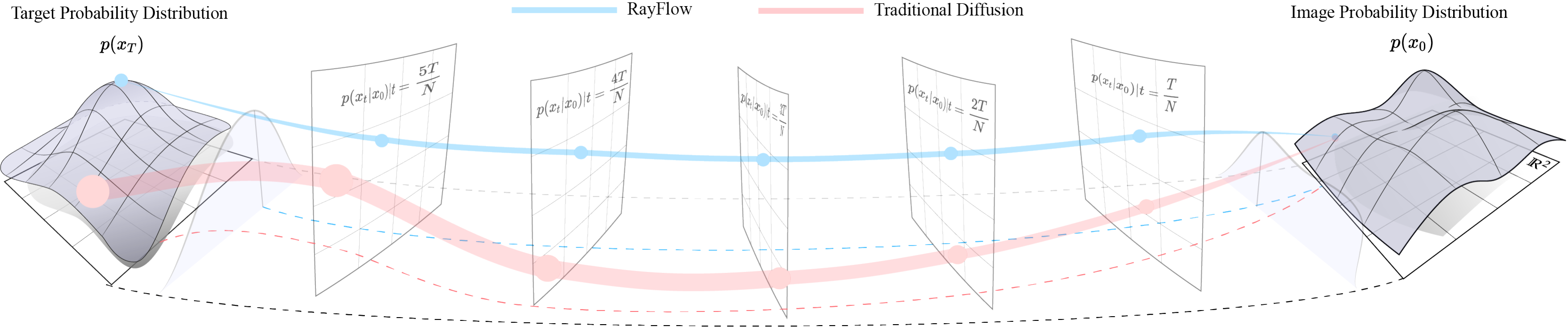} 
  \caption{RayFlow and importance time sampling. Time Sampler can find the key timesteps (five coordinates) of flow matching.} 
  \label{fig:wide} 
\end{figure*}

\subsection{RayFlow}
We introduce RayFlow, a novel framework that transforms data through a precise trajectory. The core of our method is a markov chain that construct a flow that sample $\bm{x}_0$ between target distribution $p(\bm{x}_T) = \mathcal{N}(\bm{\epsilon}_\mu, \sigma^2\bm{I})$, where $\bm{\epsilon}_\mu$ represents a pretrained mean vector and $\sigma$ is the standard deviation. This process is shown in following proposition (proof in Appendix.\ref{sec:RayFlow_trajectory}).
\begin{proposition}\label{prop:RrayFlow}
Given data $\bm{x}_0$, pretrained mean $\bm{\epsilon}_\mu \in \mathbb{R}^d$ and variance $\sigma \in \mathbb{R}_{>0}$, and target diffusion $p(\bm{x}_T) = \mathcal{N}(\bm{\epsilon}_\mu, \sigma^2\bm{I})$, we can describe the diffusion process with the following Markov chain.

\textbf{Flow Trajectories.} Define probability flow path.
\begin{equation}
    \psi_t(\cdot|\bm{\epsilon})=\sqrt{\bar{\alpha}_t}\bm{x}_0 + (1-\sqrt{\alpha}_t)\bm{\epsilon}_\mu + \sqrt{1-\bar{\alpha}_t}\bm{\epsilon}
\end{equation}

\textbf{Forward Process.} Add noise to image data.
\begin{equation*}
p(\bm{x}_t|\bm{x}_{t-1},\bm{\epsilon}_\mu)=\mathcal{N}\left(\alpha_t\bm{x}_{t-1}+(1-\alpha_t)\bm{\epsilon}_\mu, \beta_t^2\sigma^2\bm{I}\right)
\end{equation*}

\textbf{Backward Process.} Denoise from image data.
\begin{equation}
\begin{aligned}
&p(\bm{x}_{t-1}|\bm{x}_{t},\bm{\epsilon}_\mu)=\mathcal{N}\left(\frac{1}{\alpha_t} \bm{x}_t-\frac{1-\alpha_t}{\alpha_t}\bm{\epsilon}_\mu, \tilde{\beta}_t\sigma^2\bm{I}\right)
\end{aligned}  
\end{equation}
\begin{equation}
\tilde{\beta}_t=\left(\frac{(1-\alpha_t^2)(1-\bar{\alpha}_{t-1})}{1-\bar{\alpha}_{t}}\right)
\end{equation}  

\end{proposition}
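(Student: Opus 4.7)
The plan is to derive the three formulas by exploiting the Gaussian--Markov structure: first establish the marginal $p(\bm{x}_t\mid \bm{x}_0, \bm{\epsilon}_\mu)$ (the flow trajectory) by unrolling the one-step forward transition, then apply Bayes' rule to a pair of Gaussians to read off the posterior $p(\bm{x}_{t-1}\mid \bm{x}_t, \bm{x}_0, \bm{\epsilon}_\mu)$, and finally eliminate $\bm{x}_0$ using the inverted trajectory to produce the backward process in the form stated.

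First, for the flow trajectory I would induct on $t$. Writing the forward transition as $\bm{x}_t = \alpha_t\bm{x}_{t-1} + (1-\alpha_t)\bm{\epsilon}_\mu + \beta_t\sigma\bm{z}_t$ with i.i.d.\ $\bm{z}_t \sim \mathcal{N}(\bm{0},\bm{I})$ and substituting the inductive expression for $\bm{x}_{t-1}$, the coefficient of $\bm{x}_0$ accumulates multiplicatively to $\prod_{s\le t}\alpha_s$ (matching the paper's $\sqrt{\bar{\alpha}_t}$ under its convention), the coefficient of $\bm{\epsilon}_\mu$ telescopes via the identity $\alpha_t(1-c_{t-1}) + (1-\alpha_t) = 1-\alpha_t c_{t-1}$ to the stated drift factor, and the independent Gaussian noises combine in quadrature to produce variance $(1-\bar{\alpha}_t)\sigma^2$ once $\alpha_t^2+\beta_t^2=1$ from Section~\ref{sec:DDPM} is used to collapse successive $\beta_s^2$ contributions. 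This reproduces $\psi_t$ exactly.

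Second, for the backward process I would invoke Bayes' rule
\[
p(\bm{x}_{t-1}\mid \bm{x}_t,\bm{x}_0,\bm{\epsilon}_\mu) \;\propto\; p(\bm{x}_t\mid \bm{x}_{t-1},\bm{\epsilon}_\mu)\,p(\bm{x}_{t-1}\mid \bm{x}_0,\bm{\epsilon}_\mu),
\]
expand both log-densities, and complete the square in $\bm{x}_{t-1}$. The quadratic coefficient pins down the covariance as $\tilde{\beta}_t\sigma^2 = \tfrac{(1-\alpha_t^2)(1-\bar{\alpha}_{t-1})}{1-\bar{\alpha}_t}\sigma^2$, while the linear coefficient yields a posterior mean that is affine in $\bm{x}_t$, $\bm{x}_0$, and $\bm{\epsilon}_\mu$. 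Inverting the flow trajectory to write $\bm{x}_0$ in terms of $\bm{x}_t$, $\bm{\epsilon}_\mu$, and $\bm{\epsilon}$, and collecting terms, should then reduce the mean to the claimed $\tfrac{1}{\alpha_t}\bm{x}_t - \tfrac{1-\alpha_t}{\alpha_t}\bm{\epsilon}_\mu$.

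The main obstacle is the bookkeeping in the completing-the-square step: the bias $\bm{\epsilon}_\mu$ injects multiple cross terms that must cancel to collapse into the clean closed form, and the scale $\sigma^2$ has to be tracked consistently so that it survives only in the covariance and never contaminates the mean. A secondary difficulty is reconciling the apparent notational mismatch between the $\sqrt{\bar{\alpha}_t}$ coefficient on $\bm{x}_0$ and the $\alpha_t$ coefficient in the one-step forward transition; once the conventions are aligned, the telescoping in Step~1 and the cancellations in Step~2 should both close without further surprises.
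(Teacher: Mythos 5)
Your proposal is correct and follows essentially the same route as the paper's proof: unrolling the one-step forward transition so that the $\bm{\epsilon}_\mu$ coefficient telescopes and the noises add in quadrature under $\alpha_t^2+\beta_t^2=1$ (with $\bar{\alpha}_t=\prod_s\alpha_s^2$ resolving the notational mismatch you flag), then Bayes' rule plus completing the square in $\bm{x}_{t-1}$ for the covariance $\tilde{\beta}_t\sigma^2$, and finally eliminating $\bm{x}_0$ via the inverted trajectory to collapse the mean to $\frac{1}{\alpha_t}\bm{x}_t-\frac{1-\alpha_t}{\alpha_t}\bm{\epsilon}_\mu$. No substantive differences.
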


\textbf{Path Probability and Optimization}. A key theoretical result of our framework is the characterization of optimal parameters that maximize the path probability - the likelihood of successfully transforming an image through the forward and backward processes. The optimal parameters are defined by following proposition (proof in Appendix.\ref{sec:probability_path}):
\begin{proposition}\label{prop:probability_path}
For any general diffusion defined in Prop.\ref{prop:RrayFlow}, the path probability (the probability of starting from $\hat{\bm{x}}_0$, forward to $\hat{\bm{\epsilon}}_\mu$, and backward to $\hat{\bm{x}}_0$) is given by:
\begin{equation}
\begin{aligned}
&\qquad p(\hat{\bm{x}}_0\rightarrow\hat{\bm{\epsilon}}_\mu\rightarrow \hat{\bm{x}}_0) = \\
&\underbrace{p(\bm{x}_{T}=\hat{\bm{\epsilon}}_\mu|\bm{x}_0=\hat{\bm{x}}_0)}_{\text{Forward Path Prob.}}\cdot \underbrace{p(\bm{x}_0=\hat{\bm{x}}_0|\bm{x}_{T}=\hat{\bm{\epsilon}}_\mu)}_{\text{Backward Path Prob.}}
\end{aligned}
\end{equation}
where the backward path probability is defined by:
\begin{equation}
\begin{aligned}
&\qquad p(\bm{x}_{0}|\bm{x}_T=\hat{\bm{\epsilon}}_\mu)=
\\ 
&\mathcal{N}\left(\frac{\hat{\bm{\epsilon}}_\mu}{\sqrt{\bar{\alpha}_T}}  + \sum_{s=1}^{T}\frac{(e)_s+(c)_s}{\sqrt{\bar{\alpha}_{s-1}/\bar{\alpha}_{t-1}}}, \sum_{s=1}^T\frac{\tilde{\beta}_s\bar{\alpha}_t}{\bar{\alpha}_s}\sigma^2\bm{I}\right)
\end{aligned}
\end{equation}
and $(e)_s$ and $(c)_s$ are defined by:
\begin{equation}
(e)_s = -\frac{\sqrt{\bar{\alpha}_{s-1}}(1-\alpha_s^2)}{(1-\bar{\alpha}_{s})\sqrt{\bar{\alpha}_s}}\mathbb{E}[\bar{\bm{\epsilon}}_s]
\end{equation}
\begin{equation*}
(c)_s = \frac{1-\alpha_s-\bar{\alpha}_{s}+\alpha_s\bar{\alpha}_{s-1}-\sqrt{\bar{\alpha}_{s-1}}+\sqrt{\bar{\alpha}_{s}}\alpha_s}{1-\bar{\alpha}_{s}}\bm{\epsilon}_\mu
\end{equation*}
$\mathbb{E}[\bar{\bm{\epsilon}}_s]$ denotes the noise mean during forward at timestep $s$.
\end{proposition}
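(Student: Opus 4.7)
The plan is to exploit the Markov structure of the RayFlow chain established in Proposition~\ref{prop:RrayFlow} and then reduce everything to linear-Gaussian algebra. First, because the trajectory $\hat{\bm{x}}_0 \to \hat{\bm{\epsilon}}_\mu \to \hat{\bm{x}}_0$ is pinned at the intermediate time $T$, the Markov property immediately gives the product factorization $p(\hat{\bm{x}}_0 \to \hat{\bm{\epsilon}}_\mu \to \hat{\bm{x}}_0) = p(\bm{x}_T = \hat{\bm{\epsilon}}_\mu \mid \bm{x}_0 = \hat{\bm{x}}_0)\, p(\bm{x}_0 = \hat{\bm{x}}_0 \mid \bm{x}_T = \hat{\bm{\epsilon}}_\mu)$. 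The forward factor is then read off directly from the closed-form trajectory $\psi_T$ in Proposition~\ref{prop:RrayFlow} by plugging $\bm{x}_T = \hat{\bm{\epsilon}}_\mu$ into the Gaussian whose mean and variance are dictated by the cumulative schedule $\bar{\alpha}_T$, $1-\bar{\alpha}_T$.

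For the backward factor I would unroll the reverse kernel $p(\bm{x}_{t-1}\mid\bm{x}_t,\bm{\epsilon}_\mu)$ from $t=T$ down to $t=1$, initialised at $\bm{x}_T=\hat{\bm{\epsilon}}_\mu$. Since each reverse transition is affine-Gaussian in $\bm{x}_t$ with slope $1/\alpha_t$ and shift $-\tfrac{1-\alpha_t}{\alpha_t}\bm{\epsilon}_\mu$, the composition all the way to $\bm{x}_0$ is again Gaussian and its mean satisfies a first-order linear recursion. Solving that recursion collapses the homogeneous product $\prod_{s=1}^{T}\alpha_s^{-1}$ into the factor $1/\sqrt{\bar{\alpha}_T}$ multiplying $\hat{\bm{\epsilon}}_\mu$, while the inhomogeneous contributions from each step accumulate into a telescoping sum of the form $\sum_s (\text{per-step shift})_s / \sqrt{\bar{\alpha}_{s-1}/\bar{\alpha}_{t-1}}$, exactly the structure appearing in the proposition.

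The main technical work is then to verify that the per-step shifts collect into $(e)_s$ and $(c)_s$. The $(e)_s$ piece arises by substituting the forward identity $\bm{x}_s = \sqrt{\bar{\alpha}_s}\bm{x}_0 + (1-\sqrt{\bar{\alpha}_s})\bm{\epsilon}_\mu + \sqrt{1-\bar{\alpha}_s}\,\bar{\bm{\epsilon}}_s$ inside the reverse-kernel mean to eliminate the implicit dependence on the unknown $\bm{x}_0$; this injects the coefficient $-\tfrac{\sqrt{\bar{\alpha}_{s-1}}(1-\alpha_s^2)}{(1-\bar{\alpha}_s)\sqrt{\bar{\alpha}_s}}\mathbb{E}[\bar{\bm{\epsilon}}_s]$ at step $s$. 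The $(c)_s$ piece arises by combining (i) the direct $-\tfrac{1-\alpha_s}{\alpha_s}\bm{\epsilon}_\mu$ in the reverse kernel with (ii) the $(1-\sqrt{\bar{\alpha}_s})\bm{\epsilon}_\mu$ picked up from that same substitution and (iii) the rescaling by neighbouring $\sqrt{\bar{\alpha}}$ factors; the numerators rearrange to the quoted expression over $1-\bar{\alpha}_s$. The covariance is handled separately: since the per-step Gaussian increments are independent, the variances add after being rescaled by the squared homogeneous factors, which yields $\sum_s \tfrac{\tilde{\beta}_s\bar{\alpha}_t}{\bar{\alpha}_s}\sigma^2\bm{I}$.

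The hardest part, by a wide margin, will be the algebraic bookkeeping in this last step: every backward iteration mixes $\bm{\epsilon}_\mu$, the forward-noise expectation, and accumulated $\alpha$-products, and showing that the coefficients rearrange cleanly into the compact closed forms $(e)_s$ and $(c)_s$ requires care rather than any new idea. To keep the manipulation tractable I would fix a running ansatz $\bm{m}_t = A_t\hat{\bm{\epsilon}}_\mu + \sum_{s>t} B_{t,s}\mathbb{E}[\bar{\bm{\epsilon}}_s] + \sum_{s>t} C_{t,s}\bm{\epsilon}_\mu$ for the conditional mean at step $t$, derive the induction for $A_{t-1}, B_{t-1,\cdot}, C_{t-1,\cdot}$ from the reverse kernel in Proposition~\ref{prop:RrayFlow}, and then check at $t=0$ that the induced closed forms match $(e)_s$ and $(c)_s$ coefficient by coefficient.
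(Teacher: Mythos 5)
Your proposal follows essentially the same route as the paper's own proof: the factorization is taken from the Markov property, the backward marginal is obtained by unrolling the affine-Gaussian reverse kernels (the paper does this via the tower property for the mean and the law of total variance for the covariance, then closes with induction, which is equivalent to your linear-recursion/ansatz bookkeeping), and the $(e)_s$ term arises exactly as you describe, by substituting $\bm{x}_0=(\bm{x}_s-\mathbb{E}[\bar{\bm{\epsilon}}_s])/\sqrt{\bar{\alpha}_s}$ into the reverse-kernel mean. The approach is correct and matches the paper's argument.
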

By maximizing the probabilistic path, we can minimize the instability of sampling. But how to choose the parameters that make the probabilistic path maximized is a important issue, so we propose the following theorem (proof in Appendix.\ref{sec:optimal_probability_path}).

\begin{theorem}\label{thm:optimal_parameters}
Let $\bm{S}_{\bar{\bm{\epsilon}}}=\{\bar{\bm{\epsilon}}_t\}_{t=1}^T$ be the noise added in the forward process, $\bm{\epsilon}_\mu$, $\sigma$ be the parameters of the target distribution $\mathcal{N}(\bm{\epsilon}_\mu, \sigma^2\bm{I})$. For sample $\hat{\bm{x}}_0$, we can obtain the optimal parameters that maximize the path probability.
\begin{equation}
\begin{aligned}
\underset{\bm{S}_{\bar{\bm{\epsilon}}},\hat{\bm{\epsilon}}_\mu,\bm{\epsilon}_\mu,\sigma}{\arg\max}\ \ p(\bm{x}_{0}=\hat{\bm{x}}_0|\bm{x}_T=\hat{\bm{\epsilon}}_\mu)\prod_{t=1}^T p(\bar{\bm{\epsilon}}_t=\mathbb{E}[\bar{\bm{\epsilon}}_t])
\end{aligned}
\end{equation}
where the optimal parameters are defined by:
\begin{equation}
\begin{aligned}
    &\bm{S}_{\bar{\bm{\epsilon}}}^*=\{(1-\sqrt{\bar{\alpha}_t})\bm{\epsilon}_\mu\}_{t=1}^T, \ \sigma^* \rightarrow 0\\
    &\hat{\bm{\epsilon}}_\mu^* = \sqrt{\bar{\alpha}_T}\hat{\bm{x}}_0 + (1-\sqrt{\bar{\alpha}_T})\bm{\epsilon}_\mu,\ \bm{\epsilon}_\mu^*=\mathbb{E}_t[\mathbb{E}[\bar{\bm{\epsilon}_t}]]
\end{aligned}
\end{equation}
\end{theorem}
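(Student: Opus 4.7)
The plan is to take the logarithm of the joint objective and decompose it as $\log p(\hat{\bm{x}}_0\mid\hat{\bm{\epsilon}}_\mu)+\sum_t\log p(\bar{\bm{\epsilon}}_t=\mathbb{E}[\bar{\bm{\epsilon}}_t])$. Every factor is a Gaussian density, and for any $\mathcal{N}(\bm{m},\bm{\Sigma})$ the value at $\bm{y}$ is jointly maximized by (i) setting $\bm{m}=\bm{y}$, which kills the exponent, and (ii) shrinking $|\bm{\Sigma}|$, which inflates the normalizer $(2\pi)^{-d/2}|\bm{\Sigma}|^{-1/2}$. Because every covariance appearing in Proposition \ref{prop:probability_path} and in the forward-noise distribution is proportional to $\sigma^2$, the variance-minimization subproblem is shared across all summands and immediately yields $\sigma^*\to 0$. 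The mean conditions can then be solved independently of $\sigma$ since they depend only on ratios of the $\alpha$'s.

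Next I would equate the posterior mean in Proposition \ref{prop:probability_path} to $\hat{\bm{x}}_0$:
\begin{equation*}
\frac{\hat{\bm{\epsilon}}_\mu}{\sqrt{\bar{\alpha}_T}}+\sum_{s=1}^{T}\frac{(e)_s+(c)_s}{\sqrt{\bar{\alpha}_{s-1}/\bar{\alpha}_{t-1}}}=\hat{\bm{x}}_0,
\end{equation*}
substitute the closed forms of $(e)_s$ and $(c)_s$ from the proposition, and group the resulting expression by its linear dependence on the free variables $\mathbb{E}[\bar{\bm{\epsilon}}_s]$ and $\bm{\epsilon}_\mu$. Since $(e)_s$ is the unique carrier of $\mathbb{E}[\bar{\bm{\epsilon}}_s]$, equating the $\bar{\bm{\epsilon}}_s$-coefficients gives $\mathbb{E}[\bar{\bm{\epsilon}}_s]^{\,*}=(1-\sqrt{\bar{\alpha}_s})\bm{\epsilon}_\mu$ once the $\sqrt{\bar{\alpha}_{\cdot}}$ ratios telescope; this is exactly $\bm{S}_{\bar{\bm{\epsilon}}}^*$. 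Plugging this back collapses the remaining sum, leaving a scalar equation whose unique solution is $\hat{\bm{\epsilon}}_\mu^{\,*}=\sqrt{\bar{\alpha}_T}\hat{\bm{x}}_0+(1-\sqrt{\bar{\alpha}_T})\bm{\epsilon}_\mu$, matching the flow trajectory $\psi_T$ from Proposition \ref{prop:RrayFlow}.

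The remaining identity $\bm{\epsilon}_\mu^{\,*}=\mathbb{E}_t[\mathbb{E}[\bar{\bm{\epsilon}}_t]]$ follows from a self-consistency step: applying $\mathbb{E}_t[\cdot]$ to the relation $\mathbb{E}[\bar{\bm{\epsilon}}_t]^{\,*}=(1-\sqrt{\bar{\alpha}_t})\bm{\epsilon}_\mu$ and requiring that the pretrained target mean coincide with the trajectory-averaged noise mean isolates the unique scaling under which the forward process is compatible with its prescribed endpoint. The main obstacle will be the coefficient-matching step: the posterior mean is a rational function of $\bar{\alpha}_s$ and $\alpha_s$ with two intertwined unknowns, so a careful telescoping argument is needed to verify that the proposed optimizers simultaneously annihilate all residual terms rather than merely a leading one. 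A secondary subtlety is the joint $\sigma\to 0$ limit, which must be taken \emph{after} matching the means so that the vanishing exponent outruns the diverging normalizer; this is legitimate here because the mean-matching conditions derived above are $\sigma$-independent.
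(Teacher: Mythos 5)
Your plan is, at its computational core, the same as the paper's: everything reduces to showing that the backward-process mean telescopes to $\hat{\bm{x}}_0$ under the stated parameters while every covariance scales as $\sigma^2$, so the objective is maximized in the limit $\sigma\to 0$. The paper runs this in the verification direction --- it substitutes $\mathbb{E}[\bar{\bm{\epsilon}}_t]=(1-\sqrt{\bar{\alpha}_t})\bm{\epsilon}_\mu$, observes that $(e)_t+(c)_t$ collapses to $-\frac{1-\alpha_t}{\alpha_t}\bm{\epsilon}_\mu$, propagates the backward marginals by induction to obtain mean $\hat{\bm{\epsilon}}_\mu/\sqrt{\bar{\alpha}_T}-\frac{1-\sqrt{\bar{\alpha}_T}}{\sqrt{\bar{\alpha}_T}}\bm{\epsilon}_\mu=\hat{\bm{x}}_0$, and declares the resulting degenerate distribution optimal --- whereas you propose to solve the mean-matching equation for the parameters. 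That is a legitimate reframing, but note that the single vector equation ``posterior mean $=\hat{\bm{x}}_0$'' is one equation in $T+2$ unknowns ($\{\mathbb{E}[\bar{\bm{\epsilon}}_s]\}$, $\hat{\bm{\epsilon}}_\mu$, $\bm{\epsilon}_\mu$), so your ``unique solution'' claims from coefficient matching cannot be literally correct; the stated parameters are \emph{a} maximizer, not \emph{the} maximizer (the paper is equally silent on uniqueness, so this is a shared looseness rather than a defect relative to the paper).

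The one concrete gap is your derivation of $\bm{\epsilon}_\mu^*=\mathbb{E}_t[\mathbb{E}[\bar{\bm{\epsilon}}_t]]$. Applying $\mathbb{E}_t$ to the relation $\mathbb{E}[\bar{\bm{\epsilon}}_t]=(1-\sqrt{\bar{\alpha}_t})\bm{\epsilon}_\mu$ yields $\mathbb{E}_t[\mathbb{E}[\bar{\bm{\epsilon}}_t]]=(1-\mathbb{E}_t[\sqrt{\bar{\alpha}_t}])\bm{\epsilon}_\mu$, which equals $\bm{\epsilon}_\mu$ only if $\mathbb{E}_t[\sqrt{\bar{\alpha}_t}]=0$; the self-consistency step does not close as described. (To be fair, the paper's own appendix proof never establishes this identity either --- it appears only in the theorem statement and is never used.) A smaller point: once the mean is matched the exponent is exactly zero, so there is nothing for it to ``outrun''; the density simply diverges as $\sigma\to 0$ and the supremum is attained only in the limit, which is exactly the status of the paper's assertion that the limiting probability equals $1$.
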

This theorem provides us with a way to find the corresponding parameters of the target distribution that leads to a optimal probability path, which ensures that the diffusion process is efficient and informative.

\subsection{Timestep Sampling}
Training diffusion models involves estimating the expectation in Eq.\eqref{Eq.diffusion_train}, which averages the loss over all timesteps t.  While a uniform sampling of t is  inefficient due to high variance and redundant computations.  To address this, we introduce Time Sampler, a novel importance sampling technique for efficient timestep selection during training. Ideally, we want to sample timesteps from a distribution that minimizes the variance of Eq.\eqref{Eq.diffusion_train}. This optimal sampling distribution is shown in following proposition (proof in Appendix.\ref{sec:optimal_sampling_distribution}).

\begin{proposition}\label{prop:optimal_sampling_distribution}
The optimal sampling distribution for Eq.\eqref{Eq.diffusion_train} with minimal variance is:
\begin{equation}
\begin{aligned}
q^*(t|\bm{x}_0,\bm{\epsilon}_\mu)\propto\xi_t(\bm{x}_0,\bm{\epsilon}_\mu)p(t),
\end{aligned}
\end{equation}
where $\xi_t(\bm{x}_0,\bm{\epsilon}_\mu)=\|\bm{\epsilon}_{\bm{\theta}}(\sqrt{\bar{\alpha}_t}\hat{\bm{x}_0}+(1-\sqrt{\bar{\alpha}_t})\bm{\epsilon}_\mu)-\bm{\epsilon}_\mu\|_2^2$.
which means for any probability  distribution $p$, we have
\begin{equation*}
\begin{aligned}
\mathbb{V}_{t\sim q^*(t),(\bm{x}_0,\bm{\epsilon}_\mu)}[\xi_t(\bm{x}_0,\bm{\epsilon}_\mu)]\leq\mathbb{V}_{t\sim p(t),(\bm{x}_0,\bm{\epsilon}_\mu)}[\xi_t(\bm{x}_0,\bm{\epsilon}_\mu)]
\end{aligned}
\end{equation*}

\end{proposition}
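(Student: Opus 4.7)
The plan is to recognize this as the classical optimal-importance-sampling result, specialized to the flow-matching loss of Eq.\eqref{Eq.diffusion_train}, and carry out a constrained variational argument. First I would restate the object of interest: for fixed $(\bm{x}_0,\bm{\epsilon}_\mu)$, the loss is an expectation $\mu=\mathbb{E}_{t\sim p}[\xi_t(\bm{x}_0,\bm{\epsilon}_\mu)]$, and under importance sampling from $q(t)$ the estimator becomes $\hat{\mu}_q=\frac{1}{n}\sum_i \xi_{t_i}\,p(t_i)/q(t_i)$ with variance
\begin{equation*}
\mathbb{V}_q[\hat{\mu}_q]=\tfrac{1}{n}\Bigl(\mathbb{E}_{t\sim q}\bigl[\xi_t^2\,p(t)^2/q(t)^2\bigr]-\mu^2\Bigr),
\end{equation*}
as recalled in the preliminaries on importance sampling. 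Since $\mu$ does not depend on $q$, minimizing $\mathbb{V}_q$ is equivalent to minimizing $J(q)=\int \xi_t^2\,p(t)^2/q(t)\,\mathrm{d}t$ subject to $\int q(t)\,\mathrm{d}t=1$ and $q(t)>0$ wherever $\xi_t\,p(t)\neq 0$.

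Next I would solve this constrained problem in two equivalent ways to make the argument airtight. The slick way is Cauchy--Schwarz: for any admissible $q$,
\begin{equation*}
\Bigl(\int |\xi_t|\,p(t)\,\mathrm{d}t\Bigr)^{2}=\Bigl(\int \tfrac{|\xi_t|\,p(t)}{\sqrt{q(t)}}\cdot\sqrt{q(t)}\,\mathrm{d}t\Bigr)^{2}\le J(q)\cdot\!\int q(t)\,\mathrm{d}t = J(q),
\end{equation*}
with equality iff $\sqrt{q(t)}\propto |\xi_t|\,p(t)/\sqrt{q(t)}$, i.e.\ $q(t)\propto |\xi_t|\,p(t)$. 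The Lagrangian check $\partial_q\bigl[\int \xi_t^2 p^2/q + \lambda(\int q -1)\bigr]=0$ yields the same critical point $q^*(t)=\xi_t\,p(t)/Z$, with $Z=\int \xi_t\,p(t)\,\mathrm{d}t$, and a positive second variation confirming a minimum. Because $\xi_t=\|\bm{\epsilon}_{\bm{\theta}}(\sqrt{\bar\alpha_t}\bm{x}_0+(1-\sqrt{\bar\alpha_t})\bm{\epsilon}_\mu)-\bm{\epsilon}_\mu\|_2^2\ge 0$, the absolute value drops out and the proportionality in the proposition is exactly recovered.

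Finally, I would substitute $q^*$ back to verify the stated inequality $\mathbb{V}_{q^*}\le\mathbb{V}_{p}$ for every admissible $p$. Plugging in gives $J(q^*)=Z^2=\bigl(\mathbb{E}_{t\sim p}[\xi_t]\bigr)^{2}$, hence
\begin{equation*}
\mathbb{V}_{q^*}[\hat{\mu}_{q^*}]=\tfrac{1}{n}\bigl(\mathbb{E}_p[\xi_t]^2-\mathbb{E}_p[\xi_t]^2\bigr)=0\text{ in the idealized case, and in general }\le \mathbb{V}_p,
\end{equation*}
as Cauchy--Schwarz gives $J(q^*)\le J(p)$ for all feasible $p$. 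The main obstacle, and the one I would be most careful about, is not the algebra but the measure-theoretic bookkeeping: the conditioning on $(\bm{x}_0,\bm{\epsilon}_\mu)$ means the optimality is pointwise in those variables, so I would first fix $(\bm{x}_0,\bm{\epsilon}_\mu)$, derive the pointwise optimum $q^*(\cdot\mid\bm{x}_0,\bm{\epsilon}_\mu)$, and then invoke the tower property to lift the inequality to the joint variance $\mathbb{V}_{t\sim q,(\bm{x}_0,\bm{\epsilon}_\mu)}$. I would also note the mild support condition ($q^*>0$ wherever $\xi_t\,p(t)>0$) that guarantees the estimator remains unbiased, as already required in the importance-sampling preliminaries.
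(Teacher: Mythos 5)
Your proposal is correct and follows the same basic strategy as the paper's proof: treat the training loss as an importance-sampling estimator of $\mathbb{E}_{t\sim p}[\xi_t]$, minimize the estimator's variance over $q$ under the normalization constraint, and identify the minimizer as $q^*\propto \xi_t\,p(t)$. The useful difference is that your version is the \emph{correct} instantiation of this strategy, while the paper's own writeup contains a slip: it poses the objective as $\int \xi_t\,p(t)^2/q(t)\,\mathrm{d}t$ (with $\xi_t$ to the first power), whose Lagrangian stationarity condition actually yields $q^*\propto\sqrt{\xi_t}\,p(t)$, yet it then asserts $q^*\propto\xi_t\,p(t)$. By starting from the variance formula in the preliminaries, $\sigma_q^2=\mathbb{E}_{t\sim q}\bigl[\xi_t^2\,p(t)^2/q(t)^2\bigr]-\mu^2$, you minimize the correct functional $J(q)=\int \xi_t^2 p(t)^2/q(t)\,\mathrm{d}t$, and both your Cauchy--Schwarz argument and your Lagrangian check then deliver $q^*\propto|\xi_t|\,p(t)=\xi_t\,p(t)$ (using $\xi_t\ge 0$), matching the stated proposition. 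Your proof also supplies two things the paper leaves implicit: the actual Cauchy--Schwarz computation establishing $J(q^*)\le J(q)$ for every admissible $q$ (the paper only names the inequality), and the pointwise-in-$(\bm{x}_0,\bm{\epsilon}_\mu)$ optimization followed by the tower property to lift the bound to the joint variance. One caveat to keep: the claim $\mathbb{V}_{q^*}=0$ holds only because $\xi_t$ is nonnegative and the normalizer $Z=\mathbb{E}_p[\xi_t]$ is assumed finite and positive; it is worth stating that assumption explicitly, and noting that in practice $\xi_t$ depends on the unknown $\bm{\theta}$, so $q^*$ is only an idealized target that the Time Sampler approximates.
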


However, Estimating $q^*(t|\bm{x}_0,\bm{\epsilon}_\mu)$ presents two key challenges: \textbf{1) limited samples}: At each iteration, we can only access small batch of samples, making it difficult to accurately estimate the full distribution.
\textbf{2) data dependency}: The sampling distribution varies for different $\bm{x}_0$ and $\bm{\epsilon}_\mu$, requiring a flexible approach to capture this dependency.

To overcome these challenges, Time Sampler employs a combination of SSD (please refer to Sec.\ref{sec:stein_discrepancy} for details) and neural networks. SSD provides a powerful framework for fitting distributions with limited samples. It maintains a set of "particles" that are iteratively updated to approximate the target distribution; To capture the dependency of $q^*(t|\bm{x}_0,\bm{\epsilon}_\mu)$ on the input data, we use a neural network $\bm{T}_{\bm{\vartheta}}$ to parameterize the particle locations.

Denoting the empirical distribution $S_t=\{t_i|t_i\in[0,T]\}_{i=1}^n $, to approximate $q^*(t|\bm{x}_0,\bm{\epsilon}_\mu)$. Hence, we desin the Time Sampler $\bm{T}_{\bm{\vartheta}}\colon (\bm{x}_0, \bm{\epsilon}_\mu) \mapsto S_t$. The particles in Time Sampler are updated iteratively using a gradient-based approach.  The update direction is determined by minimizing the KL divergence between the empirical distribution of particles and the target distribution. 


Following \citep{liu2016stein}, let the updated distribution as $q\rightarrow q_{[\varepsilon {\phi}]}$, and we can use the direction of the fastest change in KL divergence as the update amount, recorded as ${\phi}^*$.
\begin{equation}
{\phi}^* = \arg\min_{{\phi} \in \mathcal{B}} \left\{ -\frac{d}{d\epsilon} \text{KL}(q_{[\epsilon {\phi}]} \| q^*) \bigg|_{\epsilon=0} \right\}
\end{equation}
where $\mathcal{B} = \{{\phi} \in \mathcal{H}: \|{\phi}\|_{\mathcal{H}} \le 1\}$. It can be proven that the direction of the fastest gradient is the vector function ${f}^* = {\beta}/\|{\beta}\|_{\mathcal{H}}$ (coms from property of ssd), that is,
\begin{equation}
-\frac{d}{d\varepsilon} \text{KL}(q_{[\varepsilon {\phi}]} \| q^*) \bigg|_{\varepsilon=0} = \mathbb{E}_{t \sim q}[\mathcal{A}_{q^*}^T {\phi}(t)]
\end{equation}
\begin{algorithm}[htbp]
    \caption{RayFlow Distillation Training}
    \label{alg:line_diffusion_training}
         \KwIn{Epochs $E$, Timesteps $T$,Images $S=\emptyset $
         Prompt datasets $S_{\bm{c}}=\{\bm{c}^{(i)}\}_{i=1}^N$,
         Time Sampler $\bm{T}_{\bm{\vartheta}}(\bm{x}_0, \bm{\epsilon}_\mu,t)$
         \\
         \textbf{Tunable parameter:} Network Parameters $\bm{\theta}$ and $\bm{\vartheta}$}
        \KwOut{Denoiser $\bm{\epsilon}_{\bm{\theta}}(\bm{\epsilon}, \bm{c}, t)$}
        \hrulefill \quad {Construct Distillation Data} \quad  \hrulefill \\
        \For{$i=1$ \KwTo $N$ }{
            Sampling $\hat{\bm{\epsilon}}^{(i)}$ from $\mathcal{N}(\bm{0}, \bm{I})$\\
            $(\hat{\bm{x}}_0^{(i)},\hat{\bm{\epsilon}}_\mu^{(i)}) = \bm{\Psi}(\bm{\epsilon}_{\bm{\theta}},\hat{\bm{\epsilon}}^{(i)}, \bm{c}^{(i)}, K)$\\$\min_{\bm{\vartheta}}\hat{\mathbb{E}}_k\|\bm{T}_{\bm{\vartheta}}(\hat{\bm{x}}^{(i)}_0, \hat{\bm{\epsilon}}^{(i)})_{[k]}-\xi_t(\hat{\bm{x}}^{(i)}_0, \hat{\bm{\epsilon}}^{(i)})\|_2^2$\\
            $S=S\cup  \{(\hat{\bm{x}}_0, \hat{\bm{\epsilon}}_\mu^{(i)})\}$\hfill $\triangleright$ Add data pair to dataset
        }
        \hrulefill \quad {RayFlow Training} \quad  \hrulefill \\
        \For{$e=1$ \KwTo $E$}{
        \For{$i=1$ \KwTo $N$}{
            $\{f_t|f_t=\bm{T}_{\bm{\vartheta}}(\hat{\bm{x}}^{(i)}_0, \hat{\bm{\epsilon}}_\mu^{(i)},t)\}_{t=1}^T$\hfill $\triangleright$ Time weight\\
            $p^*(t|\hat{\bm{x}}^{(i)}_0,\hat{\bm{\epsilon}}_\mu)=\frac{|f_t|}{\hat{\mathbb{E}}_{t}[|f_t|]}$\hfill $\triangleright$ Proposition.\ref{prop:RrayFlow}\\
            Sampling $t\sim p^*(t|\hat{\bm{x}}^{(i)}_0, \hat{\bm{\epsilon}_\mu})$\\$\min_{\bm{\theta}}\|\bm{\epsilon}_{\bm{\theta}}(\sqrt{\hat{\alpha}}_t\hat{\bm{x}}_0^{(i)}+(1-\sqrt{\hat{\alpha}}_t\hat{\bm{\epsilon}}_\mu^{(i)}))-\hat{\bm{\epsilon}}_\mu^{(i)}\|_2^2$\\
            $\min_{\bm{\vartheta}}\phi(\bm{T}_{\bm{\vartheta}}(\hat{\bm{x}}^{(i)}_0, \hat{\bm{\epsilon}}_\mu^{(i)})_{[t]})^2$
        }}
        \Return Network Parameters $\bm{\theta}$ and $\bm{\vartheta}$
\end{algorithm}

Here $\mathcal{A}_{q^*}^T {\phi}(t) = [\nabla_{t} \ln q^*(t)]^T {\phi}(t) + \nabla_{t}^T {\phi}(t)$. The gradient direction is
\begin{equation}
\begin{aligned}
{\phi}^*(\cdot) \propto {\beta}(\cdot) = \mathbb{E}_{t \sim q}[\mathcal{A}_{q^*} k(t, \cdot)] =\\ \mathbb{E}_{t \sim q}[\nabla_{t} \ln q^*(t) k(t, \cdot) + \nabla_{t} k(t, \cdot)]
\end{aligned}
\end{equation}
The first term in the parentheses represents the driving term, which makes the particle tend to the target distribution, and the second term represents the diffusion term, which prevents the particles from getting too close.
\begin{equation}
\begin{aligned}
\nabla_{t} \ln q^*(t)=\frac{\partial}{\partial t} \ln\left(\frac{1}{n}\sum_{i=1}^n \xi_t(\bm{x}_0,\bm{\epsilon}_\mu)K(t_i, t)\right)
\end{aligned}
\end{equation}
Then we can update Time Sampler $\bm{T}_{\bm{\vartheta}}$
\begin{equation}
\begin{aligned}
\min_{\bm{\vartheta}} L=\frac{1}{n}\sum_{i=1}^n(\bm{T}_{\bm{\vartheta}}(\bm{x}_0, \bm{\epsilon}_\mu)_{[i]}-(t_i+\varepsilon\phi(t_i)))^2
\end{aligned}
\end{equation}
It's notable that the above loss function is equavilent to $\frac{1}{n}\sum_{i=1}^n \phi(t_i)^2$. It's easy to understand we need to minimize the update distance until convergence.

\begin{algorithm}[t]
    \caption{RayFlow Distillation Sampling}
    \label{alg:line_diffusion_sampling}
        \KwIn{Sampling steps $K$, Prompt $\bm{c}$}
        \KwOut{Sampling result $\hat{\bm{x}}_0$}
        Sampling $\hat{\bm{x}}_K$ from $\mathcal{N}(\bm{0}, \bm{I})$\\
        \For{$k=K$ \KwTo $1$ }{
            $\hat{\bm{\epsilon}}_k = \bm{\epsilon}_{\bm{\theta}}(\hat{\bm{x}}_k, \bm{c}, k)$\hfill $\triangleright$ noise prediction\\
            $\hat{\bm{x}}_{k-1}=\frac{1}{\alpha_t}\hat{\bm{x}}_k-\frac{1-\alpha_t}{\alpha_t}\hat{\bm{\epsilon}}_k + \tilde{\beta}_t \bm{\epsilon}$\hfill $\triangleright$ $\bm{\epsilon}\sim \mathcal{N}(\bm{0}, \bm{I})$
        }
        \Return $\hat{\bm{x}}_0$
\end{algorithm}
\begin{algorithm}[t]
    \caption{RayFlow One-step Sampling}
    \label{alg:line_diffusion_one_step_sampling}
        \KwIn{Prompt $\bm{c}$}
        \KwOut{Sampling result $\hat{\bm{x}}_0$}
        Sampling $\hat{\bm{x}}_T$ from $\mathcal{N}(\bm{0}, \bm{I})$\\
        $\hat{\bm{\epsilon}}_T = \bm{\epsilon}_{\bm{\theta}}(\hat{\bm{x}}_T, \bm{c}, T)$\hfill $\triangleright$ noise prediction\\
        $\hat{\bm{x}}_{0}=\frac{1}{\sqrt{\bar{\alpha}_T}}\hat{\bm{x}}_T-\frac{1-\sqrt{\bar{\alpha}_T}}{\sqrt{\bar{\alpha}_T}}\hat{\bm{\epsilon}}_T + \tilde{\beta}_T \bm{\epsilon}$\hfill $\triangleright$ $\bm{\epsilon}\sim \mathcal{N}(\bm{0}, \bm{I})$\\
        \Return $\hat{\bm{x}}_0$
\end{algorithm}

\subsection{Algorithms}
We present the key algorithms that implement our RayFlow framework, consisting of the training procedure (Algo.\ref{alg:line_diffusion_training}) and two sampling approaches (Algo.\ref{alg:line_diffusion_sampling} and Algo.\ref{alg:line_diffusion_one_step_sampling}).

The training algorithm consists of two main phases: data construction and model training. In the data construction phase, we generate synthetic training pairs by sampling noise vectors. The Time Sampler $\bm{T}_{\bm{\vartheta}}$ learns to predict important timesteps for denoising. In the training phase, we optimize the denoising network $\bm{\epsilon}_{\bm{\theta}}$ to predict and remove noise at each timestep, guided by the learned time weights. $\bm{\Psi}(\bm{\epsilon}_{\bm{\theta}},\hat{\bm{\epsilon}}^{(i)}, \bm{c}^{(i)}, K)$ represents ODE solvers which sampling image from noise $\hat{\bm{\epsilon}}^{(i)}$ with condition $\bm{c}^{(i)}$ in $K$ steps, $\hat{\bm{\epsilon}}_\mu^{(i)}=\hat{\mathbb{E}}_k[\hat{\bm{\epsilon}}_k^{(i)}]$, where $\hat{\bm{\epsilon}}_k^{(i)}$ represents sampling result at timestep $k$.

we provide two sampling algorithms. The standard sampling  Algo.\ref{alg:line_diffusion_sampling} iteratively denoises the input over $K$ steps using the trained denoiser. Starting from pure noise $\hat{\bm{x}}_K$, it progressively refines the output through repeated noise prediction. For efficiency, we also present a one-step sampling variant Algo.\ref{alg:line_diffusion_one_step_sampling} that generates the final output directly using a single denoising step at timestep $T$.

\section{Experiment}
We present a series of experiments evaluating quality, scalability and robustness of RayFLow. Our training costs around 2.5 8 * A100 GPU days. We train LoRA instead of UNet for convenience. We employ \textit{AdamW} optimizer with learning rate $1e-6$, 16 batch size, and 200 epochs. We adopt gaussian kernel $K(x,x_i)=\exp\left(\frac{-\|x-x_i\|^2}{2h^2}\right)$ for Time Sampler, where $h$ is bandwidth, and set it $0.25$. For all of competitors, we follow their paper's setting.

\subsection{Implementation Details}
\paragraph{Dataset.} Our experiments utilize a carefully curated subset of the LAION \citep{schuhmann2022laion} and COYO datasets \citep{kakaobrain2022coyo-700m}, following the data selection approach outlined in previous methods \citep{lin2024sdxl, ren2024hyper}. The evaluation is performed on COCO-5k \citep{lin2014microsoft}, ImageNet \citep{deng2009imagenet},  Cifar 10 \citep{krizhevsky2009learning}, Cifar 100 \citep{krizhevsky2009learning} datasets.

\paragraph{Evaluation Metrics.} We employ multiple complementary metrics to assess the quality and performance. The aesthetic predictor, pre-trained on the LAION dataset, evaluates visual appeal, while CLIP score (ViT-B/32) measures text-to-image alignment. Moreover, we incorporate recent metrics including Image Reward \citep{xu2024imagereward} and Pick Score \cite{kirstain2023pick}.

\paragraph{Base Models.} Our experimental framework is built upon three existing popular models: stable-diffusion-v1-5 (SD15) \citep{rombach2022high} with UNet architecture, stable-diffusion-xl-v1.0-base (SDXL) \citep{rombach2022high} with UNet architecture, and PixArt \citep{chen2023pixart} implementing the DiT architecture. These models serve as benchmarks for comparing various acceleration schemes, with detailed performance comparisons presented in Tab.\ref{tab:quantitative_variant}.

\begin{figure}
    \centering
    \includegraphics[width=0.99\linewidth]{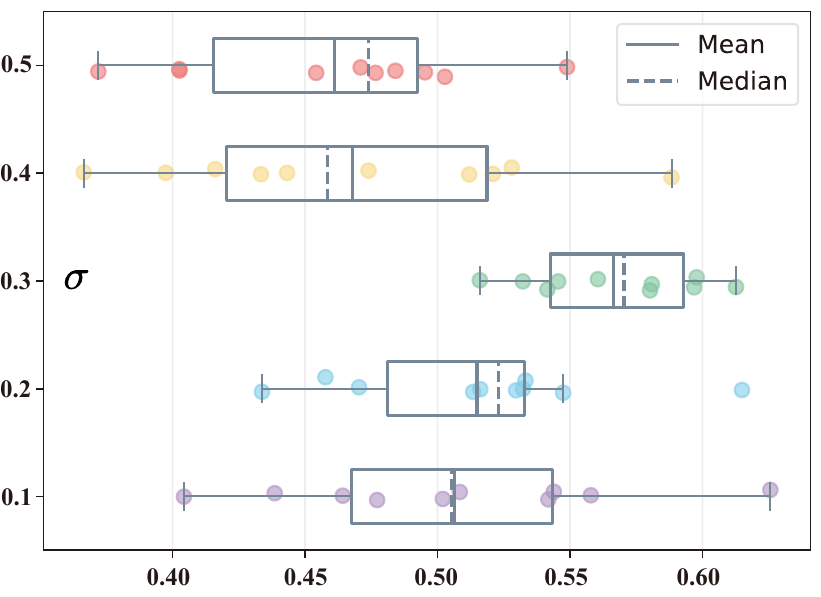}
    \vspace{-3mm}
    \caption{Sensitivity analysis on COCO-5K dataset where Aesthetis score for with respect to different values of $\sigma$.}
    \vspace{-1mm}\label{fig:sensitivity_analysis}
\end{figure}
\begin{figure}
    \centering
    
    \tablestyle{5pt}{1.0}
\setlength\tabcolsep{1.5pt}
\scalebox{1}{
    \begin{tabular}{l|ccc|ccc|ccc|ccc}
          &\multicolumn{3}{c|}{\textbf{COCO-5k}} & \multicolumn{3}{c|}{\textbf{ImageNet}} & \multicolumn{3}{c}{\textbf{Cifar-100}}&\multicolumn{3}{c}{\textbf{Cifar-10}} \\
    \textbf{Module} & Clip  & Aes & FID & Clip & Aes & FID & Clip & Aes & FID & Clip & Aes & FID \\
    \shline
    \rowcolor[rgb]{ .949,  .949,  .949} w/ $\bm{T}_{\bm{\vartheta}}$    &  \textbf{34.3}   & \textbf{5.9}  & \textbf{4.0}   & \textbf{36.0}   & \textbf{5.6}   & \textbf{1.9}   & \textbf{28.9}   & \textbf{4.9}   & \textbf{1.6} & \textbf{29.1}   & \textbf{4.9}   & \textbf{1.7} \\
    w/o $\bm{T}_{\bm{\vartheta}}$    & 31.9    & 5.7    & 4.8   & 33.9    & 5.0    & 2.8    & 26.4    & 4.7    & 3.5 & 28.0   & 4.7   & 3.9 \\
    \end{tabular}%
    } 
    \includegraphics[width=0.99\linewidth]{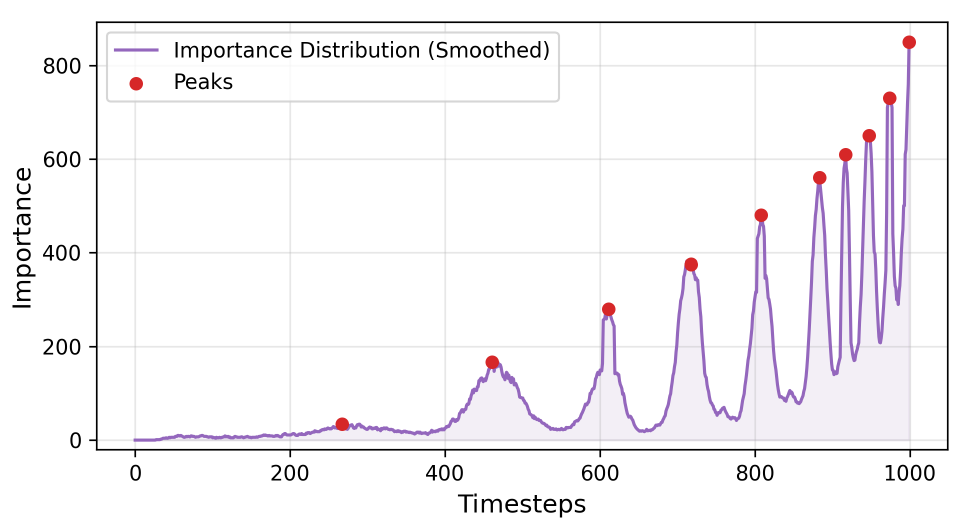}
    
    \vspace{-2mm}
  \caption{
  Visualization of the importance distribution of timestep. Performance of Time Sampler (SDXL) on different datasets.
  }
  \label{tab:timestep_importance}
  \vspace{-4mm}
\end{figure}

\begin{table*}[!t]
\setlength{\linewidth}{0.98\textwidth}
\tablestyle{5pt}{1.0}
\setlength\tabcolsep{4.5pt}  
\begin{tabular*}{\linewidth}{@{\extracolsep{\fill}}lcccc|cccc|cccc@{}}
    \multirow{2}[0]{*}{\textbf{Method}} & \multicolumn{4}{c|}{\textbf{Model Information}} & \multicolumn{4}{c|}{\textbf{FID-Score}} & \multicolumn{4}{c}{\textbf{Aesthetics-Score}} \\[-1.5pt]
          & Type     & Params     & Flow  & Distill Cost   & 1-Step     & 2-Step     & 4-Step     & 8-Step   & 1-Step     & 2-Step     & 4-Step     & 8-Step \\
    \shline
    \multicolumn{13}{c}{ \demph{ \textit{\textbf{Stable Diffusion} V1.5 Comparision} } }\\
    \hline
    SD15-Base \citep{rombach2022high}  & UNet  & 0.98B  & VP  & -  & 19.8\tiny{$\pm$.03}  & 12.1\tiny{$\pm$.06}  & 11.6\tiny{$\pm$.04}  & 9.02\tiny{$\pm$.02}   & 3.66\tiny{$\pm$.03}   & 3.89\tiny{$\pm$.05}  & 4.11\tiny{$\pm$.04}   & 4.54\tiny{$\pm$.01} \\
    SD15-PeRFlow \citep{yan2024perflow}  & LoRA  & 67.5M  &  RF & 6.4 Day  & 5.37\tiny{$\pm$.06} & 5.33\tiny{$\pm$.03} & 5.21\tiny{$\pm$.08} & 5.15\tiny{$\pm$.07} & 5.69\tiny{$\pm$.04} & 5.78\tiny{$\pm$.09} & 5.59\tiny{$\pm$.07} & 5.96\tiny{$\pm$.03} \\
    SD15-LCM \citep{luo2023latent} &LoRA  & 67.5M  & VP  & 4.5 Day & 5.29\tiny{$\pm$.05} & 5.05\tiny{$\pm$.07} & 5.22\tiny{$\pm$.04} & 5.03\tiny{$\pm$.09} & 5.71\tiny{$\pm$.06} & 5.83\tiny{$\pm$.07} & 5.86\tiny{$\pm$.05} & 5.96\tiny{$\pm$.04} \\
    SD15-TCD \citep{zheng2024trajectory} & LoRA  & 67.5M  & VP  &  5.5 Day  & 5.56\tiny{$\pm$.07} & 5.10\tiny{$\pm$.04} & 5.23\tiny{$\pm$.09} & 5.19\tiny{$\pm$.06} & 5.45\tiny{$\pm$.05} & 5.81\tiny{$\pm$.08} & 5.63\tiny{$\pm$.03} & 5.98\tiny{$\pm$.06}   \\
    Hyper-SD15 \citep{ren2024hyper} & LoRA  & 67.5M  & VP  &  12 Day & 5.41\tiny{$\pm$.04} & 5.05\tiny{$\pm$.06} & 5.21\tiny{$\pm$.05} & 5.02\tiny{$\pm$.04} & 5.62\tiny{$\pm$.03} & 5.85\tiny{$\pm$.04} & 5.64\tiny{$\pm$.05} & 6.00\tiny{$\pm$.05} \\
   \rowcolor[rgb]{ .949,  .949,  .949}\textbf{SD15-Ray (ours)} & LoRA  & 67.5M  & \textbf{Ray}  &  \textbf{2.5 Day} &  \textbf{5.10\tiny{$\pm$.03} } & \textbf{4.86\tiny{$\pm$.05} } & \textbf{4.78\tiny{$\pm$.04} } & \textbf{4.69\tiny{$\pm$.04} } & \textbf{5.92\tiny{$\pm$.02} } & \textbf{6.03\tiny{$\pm$.03} } & \textbf{6.13\tiny{$\pm$.04} } & \textbf{6.25\tiny{$\pm$.02} }\\
    \hline\\[-2.4ex]
    \multicolumn{13}{c}{ \demph{ \textit{\textbf{Stable Diffusion} XL Comparision} } }\\
    \hline
    SDXL-Base \citep{rombach2022high}  & UNet  & 3.5B  & VP  &  -&
    15.7\tiny{$\pm$.03}  & 10.5\tiny{$\pm$.02}  & 9.52\tiny{$\pm$.01}  & 8.42\tiny{$\pm$.04}   & 3.98\tiny{$\pm$.02}   & 4.14\tiny{$\pm$.05}   & 4.26\tiny{$\pm$.03}   & 4.61\tiny{$\pm$.02} \\
    SDXL-Turbo \citep{sauer2023adversarial}  & UNet  & 3.5B  & VP  &  N Day &4.32\tiny{$\pm$.07} & 4.16\tiny{$\pm$.05} & 4.00\tiny{$\pm$.04} & 3.80\tiny{$\pm$.06} & 5.75\tiny{$\pm$.08} & 5.85\tiny{$\pm$.04} & 5.90\tiny{$\pm$.09} & 5.70\tiny{$\pm$.05} \\
    SDXL-PeRFlow \citep{yan2024perflow}   & Unet  & 3.5B  & RF  &  6.4 Day &
    4.20\tiny{$\pm$.05} & 4.22\tiny{$\pm$.07} & 4.05\tiny{$\pm$.06} & 3.78\tiny{$\pm$.09} & 5.68\tiny{$\pm$.03} & 5.65\tiny{$\pm$.08} & 5.85\tiny{$\pm$.04} & 5.60\tiny{$\pm$.06} \\
    SDXL-LCM \citep{luo2023latent} & LoRA  & 197M  & VP  &  4.5 Day & 4.27\tiny{$\pm$.07} & 4.22\tiny{$\pm$.04} & 4.17\tiny{$\pm$.05} & 3.81\tiny{$\pm$.09} & 5.60\tiny{$\pm$.03} & 5.49\tiny{$\pm$.08} & 5.97\tiny{$\pm$.06} & 5.74\tiny{$\pm$.10} \\
    SDXL-TCD \citep{zheng2024trajectory} & LoRA  & 197M  & VP  &  5.5 Day&
    4.50\tiny{$\pm$.08} & 4.10\tiny{$\pm$.03} & 4.20\tiny{$\pm$.07} & 3.90\tiny{$\pm$.04} & 5.55\tiny{$\pm$.06} & 5.43\tiny{$\pm$.10} & 5.89\tiny{$\pm$.05} & 5.65\tiny{$\pm$.03} \\
    SDXL-Lightning \citep{lin2024sdxl} & LoRA  & 197M  & VP  &  N Day& 4.35\tiny{$\pm$.06} & 4.18\tiny{$\pm$.09} & 4.08\tiny{$\pm$.03} & 3.84\tiny{$\pm$.07} & 5.75\tiny{$\pm$.04} & 5.48\tiny{$\pm$.05} & 5.95\tiny{$\pm$.09} & 5.68\tiny{$\pm$.07} \\
    Hyper-SDXL \citep{ren2024hyper}& LoRA  & 197M  & VP  &  12 Day& 4.27\tiny{$\pm$.07} & 4.22\tiny{$\pm$.04} & 4.17\tiny{$\pm$.05} & 3.81\tiny{$\pm$.09} & 5.60\tiny{$\pm$.03} & 5.49\tiny{$\pm$.08} & 5.97\tiny{$\pm$.06} & 5.74\tiny{$\pm$.10} \\
    SDXL-DMD2 \citep{yin2024improved}& LoRA  & 197M  & VP  &  12 Day& 4.19\tiny{$\pm$.02} & 4.06\tiny{$\pm$.03} & 3.98\tiny{$\pm$.02} & 3.71\tiny{$\pm$.05} & 5.90\tiny{$\pm$.03} & 5.99\tiny{$\pm$.06} & 6.07\tiny{$\pm$.04} & 6.14\tiny{$\pm$.01} \\
    \rowcolor[rgb]{ .949,  .949,  .949} \textbf{SDXL-Ray (ours)} & LoRA  & 197M  & \textbf{Ray}  &  \textbf{2.5 Day} & \textbf{4.15}\tiny{$\pm$.03}  & \textbf{4.02}\tiny{$\pm$.02} & \textbf{3.90}\tiny{$\pm$.02} & \textbf{3.67}\tiny{$\pm$.04} & \textbf{5.96}\tiny{$\pm$.01} & \textbf{6.03}\tiny{$\pm$.02} & \textbf{6.15}\tiny{$\pm$.03} & \textbf{6.24}\tiny{$\pm$.01} \\
    \hline\\[-2.4ex]
    \multicolumn{13}{c}{ \demph{ \textit{\textbf{PixArt} DiT Comparision} } }\\
    \hline
    PixArt-$\alpha$ \citep{chen2023pixart}  & DiT  & 610M  & VP  &  - &
    13.6\tiny{$\pm$.02}  & 11.4\tiny{$\pm$.03}  & 10.14\tiny{$\pm$.01}  & 9.21\tiny{$\pm$.04}   & 3.78\tiny{$\pm$.02}   & 4.12\tiny{$\pm$.01}   & 4.42\tiny{$\pm$.03}   & 4.51\tiny{$\pm$.04} \\
    PixArt-$\Sigma$ \citep{chen2024pixart}  & DiT  & 610M  & VP  &  -  &12.9\tiny{$\pm$.02}  & 11.3\tiny{$\pm$.02}  & 10.04\tiny{$\pm$.03}  & 9.19\tiny{$\pm$.01}   & 3.84\tiny{$\pm$.04}   & 4.24\tiny{$\pm$.02}   & 4.56\tiny{$\pm$.01}   & 4.70\tiny{$\pm$.02} \\
    PixArt-$\delta$ \citep{chen2024pixart}   & DiT  & 610M  & VP  &  1 Day  &
    4.92\tiny{$\pm$.07}  & 4.76\tiny{$\pm$.05} & 4.47\tiny{$\pm$.09} & 4.10\tiny{$\pm$.04} & 5.54\tiny{$\pm$.06} & 5.70\tiny{$\pm$.08} & 5.85\tiny{$\pm$.03} & 5.95\tiny{$\pm$.09} \\
    \rowcolor[rgb]{ .949,  .949,  .949} \textbf{PixArt-Ray (ours)} & DiT  & 610M  & \textbf{Ray}  &  \textbf{1 Day} & \textbf{4.78}\tiny{$\pm$.02}  & \textbf{4.59}\tiny{$\pm$.03} & \textbf{4.12}\tiny{$\pm$.03} & \textbf{3.96}\tiny{$\pm$.01} & \textbf{5.84}\tiny{$\pm$.04} & \textbf{5.92}\tiny{$\pm$.02} & \textbf{6.10}\tiny{$\pm$.03} & \textbf{6.17}\tiny{$\pm$.03} \\
\end{tabular*}
\caption{
Quantitative comparison of state-of-the-arts models across various architectures and steps for FID and Aesthetic scores on the COCO-5k datasets. The best result is highlighted in \textbf{bold}. Distill Cost means 8 A100 days.}
\label{tab:quantitative_variant}
\vspace{-5mm}
\end{table*}

\subsection{Main Results}
\textbf{Timestep Importance Sampling.} We conduct some experiments to verify the performance of our proposed Time Sampler, shown in Tab.\ref{tab:timestep_importance}.
Model with Time Sampler outperforms its counterpart without the module across all metrics and datasets. We provide visualization of the timestep importance distribution, which supports the effectiveness of the module.  This visualization demonstrates how the module learns to adaptively focus on different timesteps during the diffusion process. This comprehensive evaluation across diverse datasets and metrics validates that the proposed Time Sampler module is indeed effective in enhancing the model's performance in image generation tasks.

\textbf{Sensitivity Analysis.}
For the variance coefficient $\sigma$ in RayFlow, we conduct experiment with different coefficients. We distill the SDXL model on the COCO-5k dataset using various $\sigma$. Fig.\ref{fig:sensitivity_analysis}  presents the sensitivity analysis results. We can observe that the model performs better when $\sigma$ is set to 0.3. When $\sigma$ is set to either 0.5 or 0.1, the results are relatively poor. This phenomenon might be explained by the fact that larger variance values can cover more samples and enhance diversity, while smaller variance values can reduce sampling instability. With moderate variance values, all of these beneficial characteristics are combined very well.

\textbf{Quantitative Comparison Between Acceleration Algorithms.} We conduct extensive experiments to validate the effectiveness of our proposed RayFlow framework, with results shown in Tab.\ref{tab:quantitative_variant}.Our experiments demonstrate consistent performance advantages across different model architectures. For SD15-based models, our SD15-Ray achieves the best performance across 1-8 sampling steps, outperforming both the baseline and other acceleration methods like LCM and TCD. In SDXL variants, while SDXL-PeRFlow shows superior single-step performance, our SDXL-Ray demonstrates better overall efficiency in multi-step scenarios, achieving FID scores for 2-8 steps. For PixArt architectures, PixArt-Ray maintains the leading position with the best FID and aesthetic scores. Notably, all our variants consistently show better performance, indicating that our acceleration approach maintains image quality while reducing computational requirements.

\begin{table}[h!]
    \centering
    \small

    \setlength\tabcolsep{4pt} 
   \begin{tabular}{m{2.5cm}m{0.6cm}m{0.4cm}m{0.7cm}m{0.6cm}m{0.8cm}m{0.75cm}}
      \toprule[1pt]

      \makebox[2.5cm][c]{\multirow{2}[-2]{*}{\textbf{Model}}}   & 
        \makebox[0.4cm][c]{\multirow{2}[-2]{*}{\textbf{Type}}}   & 
        \makebox[0.4cm][c]{\multirow{2}[-2]{*}{\textbf{FID}}} & 
      \makebox[0.7cm][c]{\makecell{\textbf{CLIP}\\\textbf{Score}}} & 
      \makebox[0.6cm][c]{\makecell{\textbf{Aes}\\\textbf{Score}}} & 
      \makebox[0.8cm][c]{\makecell{\textbf{Image}\\\textbf{Reward}}} & 
      \makebox[0.75cm][c]{\makecell{\textbf{Pick}\\\textbf{Score}}}\\
      
      \midrule[1pt]
      
      \makebox[2.5cm][c]{\textcolor{gray}{SD15-Base (25 step)}}  & 
      \makebox[0.6cm][c]{\textcolor{gray}{UNet}} & 
            \makebox[0.4cm][c]{\textcolor{gray}{5.08}} & 
      \makebox[0.7cm][c]{\textcolor{gray}{31.16}}    & 
      \makebox[0.6cm][c]{\textcolor{gray}{5.85}}   & 
      \makebox[0.8cm][c]{\textcolor{gray}{0.193}}  & 
      \makebox[0.75cm][c]{\textcolor{gray}{0.215}}  \\

\makebox[2.5cm][c]{SD15-ReFlow \citep{liu2022flow}} & 
      \makebox[0.6cm][c]{UNet} & 
      \makebox[0.4cm][c]{5.92} & 
      \makebox[0.7cm][c]{26.63} & 
      \makebox[0.6cm][c]{5.47} & 
      \makebox[0.8cm][c]{0.175} & 
      \makebox[0.75cm][c]{0.225} \\

       \makebox[2.5cm][c]{SD15-PeRFlow \citep{zheng2024trajectory}} & 
      \makebox[0.6cm][c]{UNet} & 
      \makebox[0.4cm][c]{5.56} & 
      \makebox[0.7cm][c]{28.77} & 
      \makebox[0.6cm][c]{5.61} & 
      \makebox[0.8cm][c]{0.137} & 
      \makebox[0.75cm][c]{0.225} \\

      \makebox[2.5cm][c]{SD15-ReDiff \citep{wang2024rectified}} & 
      \makebox[0.6cm][c]{UNet} & 
      \makebox[0.4cm][c]{5.69} & 
      \makebox[0.7cm][c]{30.31} & 
      \makebox[0.6cm][c]{5.41} & 
      \makebox[0.8cm][c]{0.184} & 
      \makebox[0.75cm][c]{0.204} \\
      
      \makebox[2.5cm][c]{SD15-InstaFlow \citep{liu2023instaflow}} & 
      \makebox[0.6cm][c]{LoRA} &
      \makebox[0.4cm][c]{6.10} & 
      \makebox[0.7cm][c]{29.45} & 
      \makebox[0.6cm][c]{5.76} & 
      \makebox[0.8cm][c]{0.128} & 
      \makebox[0.75cm][c]{0.223} \\

     \rowcolor{gray!10} \makebox[2.5cm][c]{\textbf{SD15-Ray (ours)}} & 
    \makebox[0.6cm][c]{LoRA}  & 
    \makebox[0.4cm][c]{\textbf{5.10}} & 
     \makebox[0.7cm][c]{\textbf{31.94}}   & 
     \makebox[0.6cm][c]{\textbf{5.92}} & 
     \makebox[0.8cm][c]{\textbf{0.205}} & 
     \makebox[0.75cm][c]{\textbf{0.228}} \\
  
      \midrule[0.5pt]
  
       \makebox[2.5cm][c]{\textcolor{gray}{SDXL-Base (25 step)}}  & 
       \makebox[0.6cm][c]{\textcolor{gray}{UNet}} & 
       \makebox[0.4cm][c]{\textcolor{gray}{4.28}} & 
       \makebox[0.7cm][c]{\textcolor{gray}{35.15}} & 
       \makebox[0.6cm][c]{\textcolor{gray}{5.86}}   & 
       \makebox[0.8cm][c]{\textcolor{gray}{0.905}}  & 
       \makebox[0.75cm][c]{\textcolor{gray}{0.222}}  \\

\makebox[2.5cm][c]{SDXL-ReFlow \citep{liu2022flow}} & 
       \makebox[0.6cm][c]{UNet} & 
       \makebox[0.4cm][c]{4.95} & 
       \makebox[0.7cm][c]{31.95} & 
       \makebox[0.6cm][c]{5.84} & 
       \makebox[0.8cm][c]{0.624} & 
       \makebox[0.75cm][c]{0.235} \\
       
        \makebox[2.5cm][c]{SDXL-PeRFlow \citep{zheng2024trajectory}} & 
       \makebox[0.6cm][c]{UNet} & 
       \makebox[0.4cm][c]{4.99} & 
       \makebox[0.7cm][c]{30.21} & 
       \makebox[0.6cm][c]{5.62} & 
       \makebox[0.8cm][c]{0.630} & 
       \makebox[0.75cm][c]{0.233} \\

        \makebox[2.5cm][c]{SDXL-ReDiff \citep{wang2024rectified}} & 
       \makebox[0.6cm][c]{UNet} & 
       \makebox[0.4cm][c]{4.51} & 
       \makebox[0.7cm][c]{34.02} & 
       \makebox[0.6cm][c]{5.24} & 
       \makebox[0.8cm][c]{0.785} & 
       \makebox[0.75cm][c]{0.220} \\
       
       \makebox[2.5cm][c]{SDXL-InstaFlow \citep{liu2023instaflow}} & 
       \makebox[0.6cm][c]{LoRA} & 
       \makebox[0.4cm][c]{4.69} & 
       \makebox[0.7cm][c]{30.81} & 
       \makebox[0.6cm][c]{5.57} & 
       \makebox[0.8cm][c]{0.442} & 
       \makebox[0.75cm][c]{0.240} \\
       
       \rowcolor{gray!10} \makebox[2.5cm][c]{\textbf{SDXL-Ray (ours)}} & 
       \makebox[0.6cm][c]{LoRA}  & 
       \makebox[0.4cm][c]{\textbf{4.15}}  & 
       \makebox[0.7cm][c]{\textbf{34.39}}  & 
       \makebox[0.6cm][c]{\textbf{5.96}} & 
       \makebox[0.8cm][c]{\textbf{1.021}} & 
       \makebox[0.75cm][c]{\textbf{0.251}} \\

      \bottomrule[0.5pt]
    \end{tabular}
    \vspace{-0.2cm}
        \caption{Quantitative comparisons on SD15, SDXL architectures between different noisy pair matching methods (two step).}\label{tab:quantitative_noise_match}
        \vspace{-6mm}
\end{table}

\begin{figure*}[t]
\centering
\includegraphics[width=1.0\textwidth]{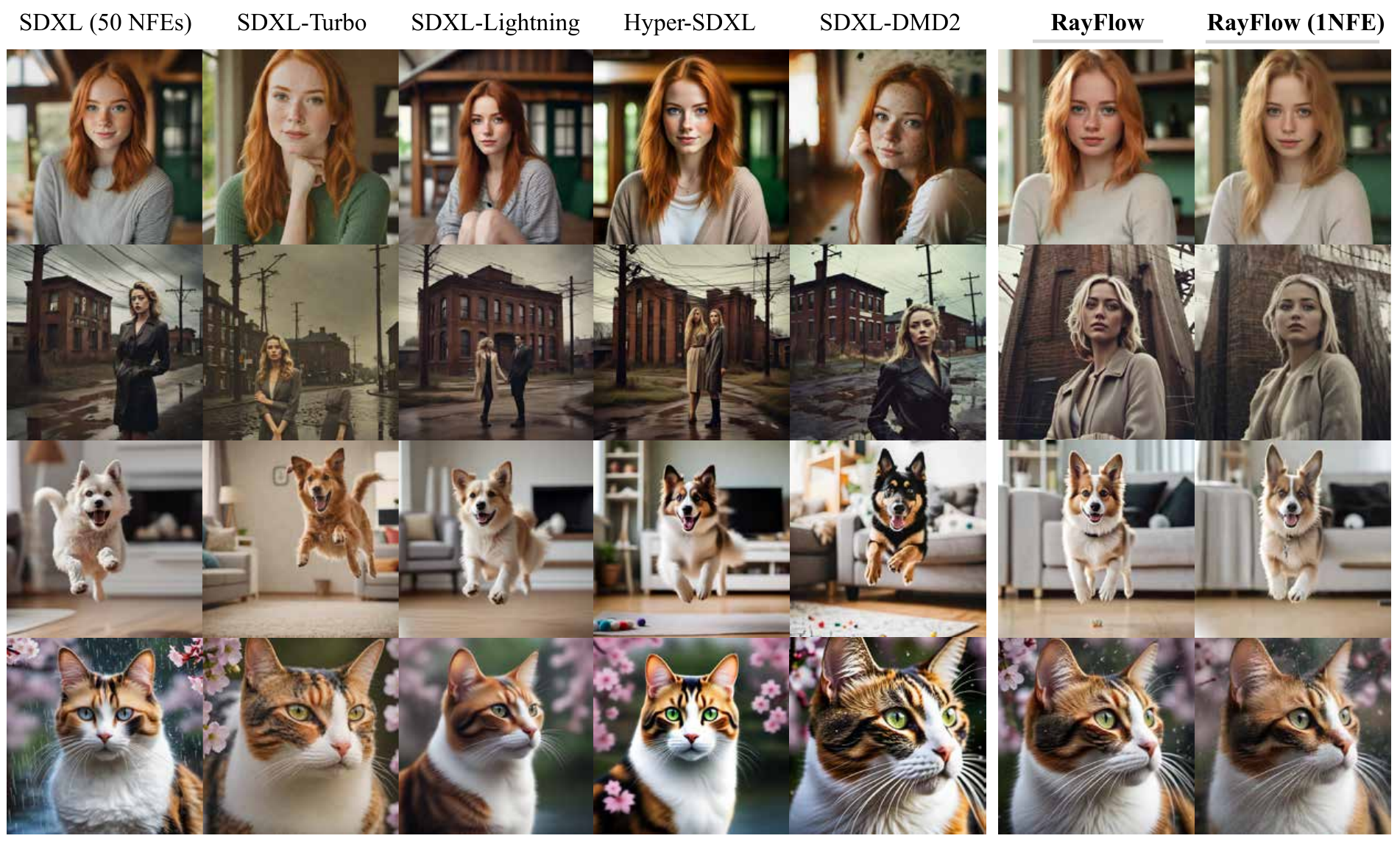}
\caption{Qualitative comparison of RayFlow against other few-step text-to-image models. Please zoom in to check details, lighting, and aesthetic performances. \textbf{All methods that do not have an NFE in place default to 4.}}
\label{fig:qualitative}
\vspace{-3mm}
\end{figure*}
\textbf{Quantitative Comparison Between Different Noise Matching Methods.} We  evaluate various noise matching methods derived from ReFlow \citep{liu2022flow} across SD15 and SDXL architectures, where shown in Tab.\ref{tab:quantitative_noise_match}. On both architectures, our method consistently outperforms existing approaches (ReFlow \citep{liu2022flow}, PeRFlow \cite{yan2024perflow}, RDiff \citep{rectified_flow}, and InstaFlow \citep{liu2023instaflow}) across multiple metrics. For SD15, RayFlow achieves the best score of most metrics, while for SDXL, it attains state-of-the-art performance. Notably, our LoRA-based implementation demonstrates superior performance in image reward, indicating better preservation of image quality during the distillation process compared to UNet-based approaches.


\textbf{Qualitative Comparison.}
Fig.\ref{fig:qualitative} presents a visual comparison between RayFlow and several state-of-the-art text-to-image models, including SDXL (50 NFEs), SDXL-Turbo, SDXL-Lightning, Hyper-SDXL, and DMD2. The comparison spans across diverse scenarios - portrait photography, urban landscapes, dynamic pet captures, and detailed animal close-ups. RayFlow demonstrates remarkable image generation capabilities, particularly in preserving fine details and maintaining visual coherence. Even in its single-NFE configuration, RayFlow produces images with comparable or superior quality to models requiring multiple steps. The results align with our quantitative findings in Tab.\ref{tab:quantitative_variant}, where RayFlow achieves exceptional aesthetic scores. These comprehensive evaluations establish RayFlow as a leading solution in high-efficiency text-to-image generation, effectively balancing quality with computational efficiency. Additional comparative results can be found in our supplementary materials.

\begin{table}
\tablestyle{5pt}{1.0}
\setlength\tabcolsep{3.5pt}

    \begin{tabular}{cc|cccc|cccc}
    \multicolumn{2}{c|}{\textbf{Module}} & \multicolumn{4}{c|}{\textbf{COCO-5k}} & \multicolumn{4}{c}{\textbf{ImageNet}} \\
    $\bm{T}_{\bm{\vartheta}}$  & $\bm{\epsilon}_\mu$ & Clip      & Aes &Img.R       & {FID}    & Clip      & Aes &Img.R       & {FID}  \\
    \shline
    \multicolumn{10}{c}{ \demph{ \it{Stable Diffusion V1.5 Ablation} } }\\
    \hline
    \rowcolor[rgb]{ .949,  .949,  .949} \cmark & \cmark & \textbf{31.94}  & \textbf{5.92}  & \textbf{0.205} & \textbf{5.10} & \textbf{34.24} & \textbf{5.47} & \textbf{0.214} & \textbf{2.42} \\    
    \cmark &   & 29.75 & 5.62 & 0.197 & 5.79 & 31.14 & 5.30 & 0.197 & 3.18 \\
          & \cmark & 30.96 & 5.66 & 0.198 & 5.89 & 31.96 & 5.00 & 0.196 & 3.29 \\
    \hline
    \multicolumn{10}{c}{ \demph{ \it{Stable Diffusion XL Ablation} } }\\
    \hline
    \rowcolor[rgb]{ .949,  .949,  .949} \cmark & \cmark  & \textbf{34.39}  & \textbf{5.96}  & \textbf{1.021} & \textbf{4.05} & \textbf{36.06} & \textbf{5.65} & \textbf{0.985} & \textbf{1.98} \\        
    \cmark &       & 32.67 & 5.48 & 0.980 & 4.77 & 32.77 & 5.31 & 0.897 & 2.92 \\
          & \cmark & 31.97 & 5.72 & 0.937 & 4.85 & 33.90 & 5.09 & 0.963 & 2.82 \\
    \hline
    \multicolumn{10}{c}{ \demph{ \it{PixArt DiT Ablation} } }\\
    \hline
    \rowcolor[rgb]{ .949,  .949,  .949} \cmark & \cmark  & \textbf{33.18}  & \textbf{5.84}  & \textbf{0.258} & \textbf{4.78} & \textbf{36.24} & \textbf{5.71} & \textbf{0.814} & \textbf{2.12} \\        
    \cmark &       & 31.23 & 5.61 & 0.230 & 5.46 & 34.43 & 5.25 & 0.792 & 2.93 \\
          & \cmark & 30.23 & 5.43 & 0.251 & 5.50 & 33.29 & 5.42 & 0.764 & 3.04 \\
    \end{tabular}%
   \vspace{-2mm}
  \caption{Performance (two steps) of ablation studies under different settings. $\bm{\epsilon}_\mu$ represents the RayFlow.
  }
  \label{tab:ablation-branch}%
  \vspace{-7mm}
\end{table}

\textbf{Ablation Studies.} We evaluate the effectiveness of two key components: the Time Sampler ($\bm{T}_{\bm{\vartheta}}$) and the RayFlow trajectory ($\bm{\epsilon}_\mu$). Experiments across SD15, SDXL, and PixArt architectures show that using $\bm{\epsilon}_\mu$ alone yields limited improvements, while $\bm{T}_{\bm{\vartheta}}$ alone achieves better but suboptimal results. The combination of both components consistently achieves the best performance, with significant improvements in evaluation metrics, demonstrating their complementary effects in enhancing generation quality.
\section{Conclusion}

In this work, we introduce RayFlow, a novel diffusion framework designed to address the inherent limitations of traditional diffusion models and existing acceleration techniques. By guiding samples along unique paths towards pre-computed target means, RayFlow ensures consistent expectations, minimizes path overlap, and improves sampling stability, all theoretically grounded in path probability maximization. Our Time Sampler, utilizing Stochastic Stein Discrepancies for importance sampling, further optimizes training efficiency by identifying crucial timesteps. Extensive experiments demonstrate RayFlow's superior performance in terms of sample quality, generation speed, and computational efficiency compared to existing acceleration methods. Future work includes extending RayFlow to other data modalities and exploring advanced strategies for target mean determination.
{
    \small
    \bibliographystyle{ieeenat_fullname}
    \bibliography{main}
}


\appendix
\onecolumn
\clearpage

\tableofcontents
\clearpage

\section{RayFlow}
\subsection{Flow Trajectory}\label{sec:RayFlow_trajectory}
\renewcommand{\theproposition}{\ref{prop:RrayFlow}}
\begin{proposition}
Given data $\bm{x}_0$, pretrained mean $\bm{\epsilon}_\mu \in \mathbb{R}^d$ and variance $\sigma \in \mathbb{R}_{>0}$, and target diffusion $p(\bm{x}_T) = \mathcal{N}(\bm{\epsilon}_\mu, \sigma^2\bm{I})$, we can describe the diffusion process with the following Markov chain.

\textbf{Flow Trajectories.} Define probability flow path.
\begin{equation}
    \psi_t(\cdot|\bm{\epsilon})=\sqrt{\bar{\alpha}_t}\bm{x}_0 + (1-\sqrt{\alpha}_t)\bm{\epsilon}_\mu + \sqrt{1-\bar{\alpha}_t}\bm{\epsilon}
\end{equation}

\textbf{Forward Process.} Add noise to image data.
\begin{equation*}
p(\bm{x}_t|\bm{x}_{t-1},\bm{\epsilon}_\mu)=\mathcal{N}\left(\alpha_t\bm{x}_{t-1}+(1-\alpha_t)\bm{\epsilon}_\mu, \beta_t^2\sigma^2\bm{I}\right)
\end{equation*}

\textbf{Backward Process.} Denoise from image data.
\begin{equation}
\begin{aligned}
&p(\bm{x}_{t-1}|\bm{x}_{t},\bm{\epsilon}_\mu)=\mathcal{N}\left(\frac{1}{\alpha_t} \bm{x}_t-\frac{1-\alpha_t}{\alpha_t}\bm{\epsilon}_\mu, \tilde{\beta}_t\sigma^2\bm{I}\right)
\end{aligned}  
\end{equation}
\begin{equation}
\tilde{\beta}_t=\left(\frac{(1-\alpha_t^2)(1-\bar{\alpha}_{t-1})}{1-\bar{\alpha}_{t}}\right)
\end{equation}  
\end{proposition}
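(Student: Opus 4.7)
The plan is to follow the standard DDPM-style derivation, modified to accommodate the instance-dependent target mean $\bm{\epsilon}_\mu$ and the variance rescaling by $\sigma^2$. The argument proceeds in three steps: (i) take the forward Markov kernel as the definition of the chain and use induction to establish the marginal $p(\bm{x}_t\mid\bm{x}_0,\bm{\epsilon}_\mu)$, which yields the flow trajectory $\psi_t$; (ii) verify the boundary condition $p(\bm{x}_T)=\mathcal{N}(\bm{\epsilon}_\mu,\sigma^2\bm{I})$; and (iii) derive the backward kernel by Bayes' rule and a complete-the-square computation.

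First I would take the one-step forward kernel $p(\bm{x}_t\mid\bm{x}_{t-1},\bm{\epsilon}_\mu)=\mathcal{N}(\alpha_t\bm{x}_{t-1}+(1-\alpha_t)\bm{\epsilon}_\mu,\beta_t^2\sigma^2\bm{I})$ as given and induct on $t$. Under the inductive hypothesis that $\bm{x}_{t-1}\mid\bm{x}_0,\bm{\epsilon}_\mu$ is Gaussian with mean $\sqrt{\bar{\alpha}_{t-1}}\bm{x}_0+(1-\sqrt{\bar{\alpha}_{t-1}})\bm{\epsilon}_\mu$ and covariance $(1-\bar{\alpha}_{t-1})\sigma^2\bm{I}$, reparameterise $\bm{x}_{t-1}=\sqrt{\bar{\alpha}_{t-1}}\bm{x}_0+(1-\sqrt{\bar{\alpha}_{t-1}})\bm{\epsilon}_\mu+\sqrt{1-\bar{\alpha}_{t-1}}\sigma\bm{\eta}_{t-1}$ and substitute into the forward kernel. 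The two independent Gaussian noise contributions collapse using the scheduling constraint $\alpha_t^2+\beta_t^2=1$: the mean rearranges to $\sqrt{\bar{\alpha}_t}\bm{x}_0+(1-\sqrt{\bar{\alpha}_t})\bm{\epsilon}_\mu$ and the variance to $(1-\bar{\alpha}_t)\sigma^2\bm{I}$, reproducing $\psi_t$. The boundary condition then follows because $\bar{\alpha}_T\to 0$ under the standard schedule, so $p(\bm{x}_T)\to\mathcal{N}(\bm{\epsilon}_\mu,\sigma^2\bm{I})$.

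For the backward kernel I would apply Bayes' rule conditioned on $\bm{x}_0$ and $\bm{\epsilon}_\mu$: $p(\bm{x}_{t-1}\mid\bm{x}_t,\bm{x}_0,\bm{\epsilon}_\mu)\propto p(\bm{x}_t\mid\bm{x}_{t-1},\bm{\epsilon}_\mu)\,p(\bm{x}_{t-1}\mid\bm{x}_0,\bm{\epsilon}_\mu)$. Both factors are Gaussian with means linear in $\bm{x}_{t-1}$, so the product is Gaussian and completing the square yields a covariance $\tilde{\beta}_t\sigma^2\bm{I}$ with $\tilde{\beta}_t=(1-\alpha_t^2)(1-\bar{\alpha}_{t-1})/(1-\bar{\alpha}_t)$, exactly matching the claim. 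The conditional mean is then a linear combination of $\bm{x}_t$, $\bm{x}_0$, and $\bm{\epsilon}_\mu$; eliminating $\bm{x}_0$ through the flow trajectory identity $\bm{x}_0=(\bm{x}_t-(1-\sqrt{\bar{\alpha}_t})\bm{\epsilon}_\mu-\sqrt{1-\bar{\alpha}_t}\bm{\epsilon})/\sqrt{\bar{\alpha}_t}$ and taking the expectation under the RayFlow assumption that the marginal noise is centred on $\bm{\epsilon}_\mu$ (cf.\ Theorem~\ref{thm:optimal_parameters}) collapses the remaining terms into $\frac{1}{\alpha_t}\bm{x}_t-\frac{1-\alpha_t}{\alpha_t}\bm{\epsilon}_\mu$.

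The main obstacle is this last simplification rather than the variance bookkeeping. In vanilla DDPM the backward mean carries one term in $\bm{x}_t$ and one in $\bm{x}_0$; here the extra $(1-\alpha_t)\bm{\epsilon}_\mu$ shift in the forward kernel has to combine cleanly with the $\bm{\epsilon}_\mu$-contribution inherited from $\bm{x}_0$ via the flow trajectory, and the expression only collapses to the clean form stated once the unified noise expectation $\bm{\epsilon}_\mu^*=\mathbb{E}_t[\mathbb{E}[\bar{\bm{\epsilon}}_t]]$ is invoked. Tracking these $\bm{\epsilon}_\mu$-linear contributions through the Bayes computation is where the bookkeeping is non-trivial; the variance identity and the forward induction are otherwise direct extensions of the standard DDPM derivation.
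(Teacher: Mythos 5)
Your proposal is correct and follows essentially the same route as the paper: unroll (or induct) the forward kernel under $\alpha_t^2+\beta_t^2=1$ to obtain the marginal $p(\bm{x}_t\mid\bm{x}_0,\bm{\epsilon}_\mu)=\mathcal{N}(\sqrt{\bar{\alpha}_t}\bm{x}_0+(1-\sqrt{\bar{\alpha}_t})\bm{\epsilon}_\mu,(1-\bar{\alpha}_t)\sigma^2\bm{I})$, then apply Bayes' rule and complete the square for the backward kernel, and finally eliminate $\bm{x}_0$ via $\bm{x}_0=(\bm{x}_t-\mathbb{E}[\bar{\bm{\epsilon}}_t])/\sqrt{\bar{\alpha}_t}$. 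One small correction: the collapse of the posterior mean to $\frac{1}{\alpha_t}\bm{x}_t-\frac{1-\alpha_t}{\alpha_t}\bm{\epsilon}_\mu$ requires only the per-timestep identity $\mathbb{E}[\bar{\bm{\epsilon}}_t]=(1-\sqrt{\bar{\alpha}_t})\bm{\epsilon}_\mu$, which is already delivered by the forward marginal you derived, so the cross-timestep average $\bm{\epsilon}_\mu^*=\mathbb{E}_t[\mathbb{E}[\bar{\bm{\epsilon}}_t]]$ from the optimal-parameter theorem need not be invoked here.
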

\renewcommand{\theproposition}{\thesection.\arabic{proposition}}

\begin{proof}
We split the proof process into two parts: \textbf{Forward Process} and \textbf{Backward Process}.

\noindent\makebox[\linewidth]{\rule{\dimexpr0.5\linewidth-40pt}{0.8pt} \ \ \textbf{Forward Process} \ \ \rule{\dimexpr0.5\linewidth-40pt}{0.8pt}}
We set $\alpha_t^2+\beta_t^2=1$ and $\bar{\alpha}_t=\prod_{i=1}^t\alpha_t^2$, According to the forward process, we have:
\begin{equation}
\begin{aligned}
\bm{x}_t&=\alpha_t \bm{x}_{t-1} + (1-\alpha_t)\bm{\epsilon}_\mu + \beta_t \bm{\epsilon}_t\\
&=\alpha_t(\alpha_{t-1} \bm{x}_{t-2} + (1-\alpha_{t-1})\bm{\epsilon}_\mu + \beta_{t-1} \bm{\epsilon}_{t-1}) + (1-\alpha_t)\bm{\epsilon}_\mu+\beta_t\bm{\epsilon}_t\\
&=\alpha_t\alpha_{t-1}\cdots\alpha_1\bm{x}_0 \\
&\quad +\alpha_t\alpha_{t-1}\cdots\alpha_2(1-\alpha_{1})\bm{\epsilon}_\mu+\cdots+\alpha_t\alpha_{t-1}(1-\alpha_{t-2})\bm{\epsilon}_\mu + \alpha_t(1-\alpha_{t-1})\bm{\epsilon}_\mu + (1-\alpha_t)\bm{\epsilon}_\mu\\
&\quad + \alpha_t\alpha_{t-1}\cdots\alpha_2\beta_1\bm{\epsilon}_1+\alpha_t\alpha_{t-1}\cdots\alpha_3\beta_2\bm{\epsilon}_2+\cdots+\alpha_t\beta_{t-1}\bm{\epsilon}_t +\beta_t\bm{\epsilon}_t \\
&= \alpha_t\alpha_{t-1}\cdots\alpha_1\bm{x}_0 + (1-\alpha_t\alpha_{t-1}\cdots\alpha_1)\bm{\epsilon}_\mu \\
& \quad + \alpha_t\alpha_{t-1}\cdots\alpha_2\beta_1\bm{\epsilon}_1+\alpha_t\alpha_{t-1}\cdots\alpha_3\beta_2\bm{\epsilon}_2+\cdots+\alpha_t\beta_{t-1}\bm{\epsilon}_t +\beta_t\bm{\epsilon}_t 
\end{aligned}
\end{equation}
where
\begin{equation}
\begin{aligned}
\alpha_t\alpha_{t-1}\cdots\alpha_2\beta_1\bm{\epsilon}_1&\sim \mathcal{N}(\bm{0},\alpha_t^2\alpha_{t-1}^2\cdots\alpha_2^2\beta_1^2\sigma^2\bm{I} )\\
\alpha_t\alpha_{t-1}\cdots\alpha_3\beta_2\bm{\epsilon}_2&\sim \mathcal{N}(\bm{0},\alpha_t^2\alpha_{t-1}^2\cdots\alpha_3^2\beta_2^2\sigma^2\bm{I} )\\
&\cdots
\end{aligned}
\end{equation}
we set $\alpha_t^2 + \beta_t^2 = 1$, then we have:
\begin{equation}
\begin{aligned}
\alpha_t\alpha_{t-1}\cdots\alpha_2\beta_1\bm{\epsilon}_1&\sim \mathcal{N}(\bm{0},\alpha_t^2\alpha_{t-1}^2\cdots\alpha_2^2(1-\alpha_1^2)\sigma^2\bm{I} )\\
\alpha_t\alpha_{t-1}\cdots\alpha_3\beta_2\bm{\epsilon}_2&\sim \mathcal{N}(\bm{0},\alpha_t^2\alpha_{t-1}^2\cdots\alpha_3^2(1-\alpha_2^2)\sigma^2\bm{I} )\\
&\cdots
\end{aligned}
\end{equation}
based on the properties of Gaussian distributions:
\begin{equation}
\bm{a}\sim\mathcal{N}(\bm{\mu_1}, \sigma_1^2\bm{I}), \bm{b}\sim\mathcal{N}(\bm{\mu_2}, \sigma_2^2\bm{I}),
\end{equation}
\begin{equation}
\bm{a} + \bm{b} \sim \mathcal{N}(\bm{\mu_1} + \bm{\mu_2}, (\sigma_1^2 + \sigma_2^2)\bm{I}),
\end{equation}
then
\begin{equation}
\begin{aligned}
    &\quad \alpha_t^2\alpha_{t-1}^2\cdots\alpha_2^2(1-\alpha_1^2)\sigma^2 + \alpha_t^2\alpha_{t-1}^2\cdots\alpha_3^2(1-\alpha_2^2)\sigma^2\\
    &=\alpha_t^2\alpha_{t-1}^2\cdots\alpha_2^2\sigma^2 - \alpha_t^2\alpha_{t-1}^2\cdots\alpha_1^2\sigma^2+\alpha_t^2\alpha_{t-1}^2\cdots\alpha_3^2\sigma^2 - \alpha_t^2\alpha_{t-1}^2\cdots\alpha_2^2\sigma^2 \\
    &=\alpha_t^2\alpha_{t-1}^2\cdots\alpha_3^2\sigma^2 - \alpha_t^2\alpha_{t-1}^2\cdots\alpha_1^2\sigma^2.
\end{aligned}
\end{equation}
\begin{equation}
\begin{aligned}
&\alpha_t\alpha_{t-1}\cdots\alpha_2\beta_1\bm{\epsilon}_1+
\alpha_t\alpha_{t-1}\cdots\alpha_3\beta_2\bm{\epsilon}_2\\
&\sim \mathcal{N}(\bm{0}, (\alpha_t^2\alpha_{t-1}^2\cdots\alpha_3^2 - \alpha_t^2\alpha_{t-1}^2\cdots\alpha_1^2)\sigma^2\bm{I}).
\end{aligned}
\end{equation}

Let's expand it to $t$-dimension:
\begin{equation}
\begin{aligned}
& \alpha_t\alpha_{t-1}\cdots\alpha_2\beta_1\bm{\epsilon}_1+\alpha_t\alpha_{t-1}\cdots\alpha_3\beta_2\bm{\epsilon}_2\cdots+\beta_t\bm{\epsilon}_t=\bar{\bm{\epsilon}}_t\\
&\bar{\bm{\epsilon}}_t\sim\mathcal{N}\left(\bm{0}, (1-\alpha_t^2\alpha_{t-1}^2\cdots\alpha_1^2)\sigma^2\bm{I}\right),
\end{aligned}
\end{equation}
we have
\begin{equation}
\begin{aligned}
\bm{x}_t=\alpha_t\alpha_{t-1}\cdots\alpha_1\bm{x}_0+ (1-\alpha_t\alpha_{t-1}\cdots\alpha_1)\bm{\epsilon}_\mu + \sqrt{1-\alpha_t^2\alpha_{t-1}^2\cdots\alpha_1^2}\bar{\bm{\epsilon}}_t.
\end{aligned}
\end{equation}
Since, $\bar{\alpha}_t = \alpha_t^2\alpha_{t-1}^2\cdots\alpha_1^2$, then we can get:
\begin{equation}
p(\bm{x}_t|\bm{x}_0,\bm{\epsilon}_\mu)=\mathcal{N}\left(\sqrt{\bar{\alpha}_t}\bm{x}_0+(1-\sqrt{\bar{\alpha}_t})\bm{\epsilon}_\mu, (1-\bar{\alpha}_t)\sigma^2\bm{I}\right)
\end{equation}
Finally, we get:
\begin{equation}
\begin{aligned}
\text{(Noise Process)} \quad  & \quad p(\bm{x}_t|\bm{x}_{t-1},\bm{\epsilon}_\mu)=\mathcal{N}\left(\alpha_t\bm{x}_{t-1}+(1-\alpha_t)\bm{\epsilon}_\mu, \beta_t^2\sigma^2\bm{I}\right)\\
\text{(Noise Injection)} \quad  & \quad p(\bm{x}_t|\bm{x}_0,\bm{\epsilon}_\mu)=\mathcal{N}\left(\sqrt{\bar{\alpha}_t}\bm{x}_0+(1-\sqrt{\bar{\alpha}_t})\bm{\epsilon}_\mu, (1-\bar{\alpha}_t)\sigma^2\bm{I}\right)
\end{aligned}  
\end{equation}

\begin{equation}
\begin{aligned}
\text{(Noise Process)} \quad  & \quad \bm{x}_t=\alpha_t\bm{x}_{t-1}+ \bm{\epsilon}_t, \bm{\epsilon}_t \sim \mathcal{N}((1-\alpha_t)\bm{\epsilon}_\mu , \beta_t^2\sigma^2\bm{I})\\
\text{(Noise Injection)} \quad  &\quad \bm{x}_t=\sqrt{\bar{\alpha}_t}\bm{x}_0 + \bar{\bm{\epsilon}}_t, \bar{\bm{\epsilon}}_t\sim\mathcal{N}((1-\sqrt{\bar{\alpha}_t})\bm{\epsilon}_\mu, (1-\bar{\alpha}_t)\sigma^2\bm{I})
\end{aligned}  
\end{equation}
where $0<\alpha_t<1$ and $0<\beta_t<1$, $\sigma \in \mathbb{R}_{>0}$.

\noindent\makebox[\linewidth]{\rule{\dimexpr0.5\linewidth-40pt}{0.8pt} \ \ \textbf{Backward Process} \ \ \rule{\dimexpr0.5\linewidth-40pt}{0.8pt}}
During the backward process, we are focus on the reverse probability distribution $p(\bm{x}_{t-1}|\bm{x}_t, \bm{\epsilon}_\mu)$. Hence, we have:
\begin{equation}
\begin{aligned}
p(\bm{x}_{t-1}|\bm{x}_t, \bm{\epsilon}_\mu)
&=\frac{p(\bm{x}_{t}|\bm{x}_{t-1}, \bm{\epsilon}_\mu)p(\bm{x}_{t-1}|\bm{x}_0,\bm{\epsilon}_\mu)}{p(\bm{x}_{t}|\bm{x}_0,\bm{\epsilon}_\mu)}\\
&=\frac{\mathcal{N}\left(\alpha_t\bm{x}_{t-1}+(1-\alpha_t)\bm{\epsilon}_\mu, (1-\alpha_t^2)\sigma^2\bm{I}\right)\mathcal{N}\left(\sqrt{\bar{\alpha}_{t-1}}\bm{x}_0+(1-\sqrt{\bar{\alpha}_{t-1}})\bm{\epsilon}_\mu, (1-\bar{\alpha}_{t-1})\sigma^2\bm{I}\right)}{\mathcal{N}\left(\sqrt{\bar{\alpha}_t}\bm{x}_0+(1-\sqrt{\bar{\alpha}_t})\bm{\epsilon}_\mu, (1-\bar{\alpha}_t)\sigma^2\bm{I}\right)}\\
\end{aligned}
\end{equation}
Due to the fact that the multi-varite gaussian distribution has probability density function $p(\bm{x}) = \frac{1}{\sqrt{(2\pi)^k \det(\Sigma)}} \exp\left(-\frac{1}{2} (\bm{x} - \bm{\mu})^T \Sigma^{-1} (\bm{x} - \bm{\mu})\right)$, where $\bm{x}\in \mathbb{R}^k$ and $\bm{x}\sim\mathcal{N}(\bm{\mu}, \bm{\Sigma})$
\begin{equation}
\begin{aligned}
p(\bm{x}_{t-1}|\bm{x}_t, \bm{\epsilon}_\mu)&\propto \exp \left\{-\frac{1}{2\sigma^2}\left[\frac{(\bm{x}_t - \alpha_t\bm{x}_{t-1}-(1-\alpha_t)\bm{\epsilon}_\mu)^2}{(1-\alpha_t^2)}+\frac{(\bm{x}_{t-1} - \sqrt{\bar{\alpha}_{t-1}}\bm{x}_{0}-(1-\sqrt{\bar{\alpha}_{t-1}})\bm{\epsilon}_\mu)^2}{(1-\bar{\alpha}_{t-1})} \right. \right.\\
&\qquad \qquad \qquad \left. \left. - \frac{(\bm{x}_{t} - \sqrt{\bar{\alpha}_{t}}\bm{x}_{t}-(1-\sqrt{\bar{\alpha}_{t}})\bm{\epsilon}_\mu)^2}{(1-\bar{\alpha}_{t})} \right]\right\}
\end{aligned}
\end{equation}
Since $p(\bm{x}_{t-1}|\bm{x}_t, \bm{\epsilon}_\mu)$ is \wrt $\bm{x}_{t-1}$, we can convert this pdf to a simplified formulation (by rearanging terms which has no realtionship with $\bm{x}_{t-1}$ to $C(\bm{x}_t,\bm{x}_0, \bm{\epsilon}_\mu)$):
\begin{equation}
\begin{aligned}
p(\bm{x}_{t-1}|\bm{x}_t, \bm{\epsilon}_\mu)
&\propto\exp \left\{-\frac{1}{2\sigma^2}\left[\frac{\bm{x}_t^2+\alpha_t^2\bm{x}_{t-1}^2+(1-\alpha)^2\bm{\epsilon}_\mu^2-2\alpha_t\bm{x}_t\bm{x}_{t-1}-2(1-\alpha_t)\bm{x}_t\bm{\epsilon}_\mu+2\alpha_t(1-\alpha_t)\bm{x}_{t-1}\bm{\epsilon}_\mu}{1-\alpha_t^2}\right. \right.\\
&+ \left. \left.\frac{\bm{x}_{t-1}^2+\bar{\alpha}_{t-1}\bm{x}_{0}^2+(1-\sqrt{\bar{\alpha}_{t-1}})^2\bm{\epsilon}_\mu^2-2\sqrt{\bar{\alpha}_{t-1}}\bm{x}_0\bm{x}_{t-1}-(2-2\sqrt{\bar{\alpha}_{t-1}})\bm{x}_{t-1}\bm{\epsilon}_\mu+2(\sqrt{\bar{\alpha}_{t-1}}+\bar{\alpha}_{t-1})\bm{x}_{0}\bm{\epsilon}_\mu}{1-\bar{\alpha}_{t-1}} \right. \right.\\
&\qquad \qquad \qquad \left. \left. + C(\bm{x}_t,\bm{x}_0, \bm{\epsilon}_\mu)\right]\right\}\\
&=\exp \left\{-\frac{1}{2\sigma^2}\left[\frac{\alpha_t^2\bm{x}_{t-1}^2-2\alpha_t\bm{x}_t\bm{x}_{t-1}+2\alpha_t(1-\alpha_t)\bm{x}_{t-1}\bm{\epsilon}_\mu}{1-\alpha_t^2}+ \frac{\bm{x}_{t-1}^2-2\sqrt{\bar{\alpha}_{t-1}}\bm{x}_0\bm{x}_{t-1}-(2-2\sqrt{\bar{\alpha}_{t-1}})\bm{x}_{t-1}\bm{\epsilon}_\mu}{1-\bar{\alpha}_{t-1}} \right. \right.\\
&\qquad \qquad \qquad \left. \left. + C(\bm{x}_t,\bm{x}_0, \bm{\epsilon}_\mu)\right]\right\}
\end{aligned}
\end{equation}
Then we can rerange the remaining terms to construct a gaussian distribution formulation which associated with $\bm{x}_{t-1}$.
\begin{equation}
\begin{aligned}
&\quad p(\bm{x}_{t-1}|\bm{x}_t, \bm{\epsilon}_\mu)\\
&\propto\exp \left\{-\frac{1}{2\sigma^2}\left[\left(\frac{\alpha_t^2}{1-\alpha_t^2} + \frac{1}{1-\bar{\alpha}_{t-1}} \right)\bm{x}_{t-1}^2 -2\underbrace{\left(\frac{\alpha_t\bm{x}_t-\alpha_t(1-\alpha_t)\bm{\epsilon}_\mu}{1-\alpha_t^2}+\frac{\sqrt{\bar{\alpha}_{t-1}}\bm{x}_0+(1-\sqrt{\bar{\alpha}_{t-1}})\bm{\epsilon}_\mu}{1-\bar{\alpha}_{t-1}}\right)}_{(b)}\bm{x}_{t-1} \right. \right.\\
&\qquad \qquad \qquad \left. \left. + C(\bm{x}_t,\bm{x}_0, \bm{\epsilon}_\mu)\right]\right\}\\
&=\exp \left\{-\frac{1}{2\sigma^2}\left[\left(\frac{\alpha_t^2-\alpha_t^2\bar{\alpha}_{t-1}+1-\alpha_t^2}{(1-\alpha_t^2)(1-\bar{\alpha}_{t-1})}\right)\bm{x}_{t-1}^2 -2(b)\bm{x}_{t-1} + C(\bm{x}_t,\bm{x}_0, \bm{\epsilon}_\mu)\right]\right\}\\
&=\exp \left\{-\frac{1}{2\sigma^2}\left[\underbrace{\left(\frac{1-\bar{\alpha}_{t}}{(1-\alpha_t^2)(1-\bar{\alpha}_{t-1})}\right)}_{(a)}\bm{x}_{t-1}^2 -2(b)\bm{x}_{t-1} + C(\bm{x}_t,\bm{x}_0, \bm{\epsilon}_\mu)\right]\right\}\\
&=\exp \left\{-\frac{1}{2\sigma^2/(a)}\left[\bm{x}_{t-1}^2 -2\frac{(b)}{(a)}\bm{x}_{t-1} + C(\bm{x}_t,\bm{x}_0, \bm{\epsilon}_\mu)\right]\right\}\\
&=\exp \left\{-\frac{\left(\bm{x}_{t-1}-\frac{(b)}{(a)}\right)^2}{2\sigma^2(a)} + C(\bm{x}_t,\bm{x}_0, \bm{\epsilon}_\mu)\right\}\\
&\propto \mathcal{N}\left(\bm{\mu}_p(\bm{x}_t,\bm{x}_0,\bm{\epsilon}_\mu), \bm{\Sigma}_p(t)\bm{I}\right), \bm{\mu}_p(\bm{x}_t,\bm{x}_0,\bm{\epsilon}_\mu)=\frac{(b)}{(a)}, \bm{\Sigma}_p(t)=\sigma^2/(a)
\end{aligned}
\end{equation}
We can expand the $\bm{\mu}_p(\bm{x}_t,\bm{x}_0,\bm{\epsilon}_\mu)$

\begin{equation}\label{eq.mu_p}
\begin{aligned}
    & \quad \bm{\mu}_p(\bm{x}_t,\bm{x}_0,\bm{\epsilon}_\mu)\\
    &= \frac{(1-\alpha_t^2)(1-\bar{\alpha}_{t-1})}{1-\bar{\alpha}_{t}} \frac{(\alpha_t\bm{x}_t-\alpha_t(1-\alpha_t)\bm{\epsilon}_\mu)(1-\bar{\alpha}_{t-1})+(\sqrt{\bar{\alpha}_{t-1}}\bm{x}_0+(1-\sqrt{\bar{\alpha}_{t-1}})\bm{\epsilon}_\mu)(1-\alpha_t^2)}{(1-\alpha_t^2)(1-\bar{\alpha}_{t-1})}\\
    &=\frac{(\alpha_t\bm{x}_t-\alpha_t(1-\alpha_t)\bm{\epsilon}_\mu)(1-\bar{\alpha}_{t-1})+(\sqrt{\bar{\alpha}_{t-1}}\bm{x}_0+(1-\sqrt{\bar{\alpha}_{t-1}})\bm{\epsilon}_\mu)(1-\alpha_t^2)}{1-\bar{\alpha}_{t}}\\
    &=\frac{\alpha_t(1-\bar{\alpha}_{t-1})\bm{x}_t+\sqrt{\bar{\alpha}_{t-1}}(1-\alpha_t^2)\bm{x}_0+[\alpha_t^2-\alpha_t-\alpha_t^2\bar{\alpha}_{t-1}+\alpha_t\bar{\alpha}_{t-1}+(1-\alpha_t^2-\sqrt{\bar{\alpha}_{t-1}}+\sqrt{\bar{\alpha}_{t-1}}\alpha_t^2)]\bm{\epsilon}_\mu}{1-\bar{\alpha}_{t}}\\
    &=\frac{\alpha_t(1-\bar{\alpha}_{t-1})\bm{x}_t+\sqrt{\bar{\alpha}_{t-1}}(1-\alpha_t^2)\bm{x}_0+[1-\alpha_t-\alpha_t^2\bar{\alpha}_{t-1}+\alpha_t\bar{\alpha}_{t-1}-\sqrt{\bar{\alpha}_{t-1}}+\sqrt{\bar{\alpha}_{t-1}}\alpha_t^2]\bm{\epsilon}_\mu}{1-\bar{\alpha}_{t}}\\
    &=\frac{\alpha_t(1-\bar{\alpha}_{t-1})\bm{x}_t+\sqrt{\bar{\alpha}_{t-1}}(1-\alpha_t^2)\bm{x}_0}{1-\bar{\alpha}_{t}}+\underbrace{\frac{1-\alpha_t-\alpha_t^2\bar{\alpha}_{t-1}+\alpha_t\bar{\alpha}_{t-1}-\sqrt{\bar{\alpha}_{t-1}}+\sqrt{\bar{\alpha}_{t-1}}\alpha_t^2}{1-\bar{\alpha}_{t}}\bm{\epsilon}_\mu}_{(c)}\\
    &=\frac{\sqrt{\bar{\alpha}_{t-1}}(1-\alpha_t^2)}{1-\bar{\alpha}_{t}}\bm{x}_0 + \frac{\alpha_t(1-\bar{\alpha}_{t-1})}{1-\bar{\alpha}_{t}}\bm{x}_t+(c)\\
    &=\frac{\sqrt{\bar{\alpha}_{t-1}}(1-\alpha_t^2)}{1-\bar{\alpha}_{t}}\frac{\bm{x}_t-\mathbb{E}[\bar{\bm{\epsilon}}_t]}{\sqrt{\bar{\alpha}_t}}+ \frac{\alpha_t(1-\bar{\alpha}_{t-1})}{1-\bar{\alpha}_{t}}\bm{x}_t+(c), \bar{\bm{\epsilon}}_t\sim\mathcal{N}((1-\sqrt{\bar{\alpha}_t})\bm{\epsilon}_\mu, (1-\bar{\alpha}_t)\sigma^2\bm{I})\\
    &=\frac{\sqrt{\bar{\alpha}_{t-1}}(1-\alpha_t^2)}{1-\bar{\alpha}_{t}}\frac{\bm{x}_t}{\sqrt{\bar{\alpha}_t}}+ \frac{\alpha_t(1-\bar{\alpha}_{t-1})}{1-\bar{\alpha}_{t}}\bm{x}_t-\frac{\sqrt{\bar{\alpha}_{t-1}}(1-\alpha_t^2)}{(1-\bar{\alpha}_{t})\sqrt{\bar{\alpha}_t}}\mathbb{E}[\bar{\bm{\epsilon}}_t]+(c)\\
    &=\frac{\sqrt{\bar{\alpha}_{t-1}}(1-\alpha_t^2)}{1-\bar{\alpha}_{t}}\frac{\bm{x}_t}{\sqrt{\bar{\alpha}_t}}+ \frac{\alpha_t(1-\bar{\alpha}_{t-1})}{1-\bar{\alpha}_{t}}\bm{x}_t\underbrace{-\frac{\sqrt{\bar{\alpha}_{t-1}}(1-\alpha_t^2)(1-\sqrt{\bar{\alpha}_t})}{(1-\bar{\alpha}_{t})\sqrt{\bar{\alpha}_t}}\bm{\epsilon}_\mu}_{(d)}+(c)
\end{aligned}
\end{equation}
By reranging the terms, we can get:

\begin{equation}
\begin{aligned}
 \bm{\mu}_p(\bm{x}_t,\bm{x}_0,\bm{\epsilon}_\mu)
&= \left[\frac{\sqrt {{\bar{\alpha}_{t-1}}} (1-\alpha_t^2)}{(1-\bar{\alpha}_t) \sqrt{\bar{\alpha}_t}} + \frac{\alpha_t\left(1-\bar{\alpha}_{t-1}\right)}{1-\bar{\alpha}_t}\right]\bm{x}_t +(d) + (c) \\     
&= \left[\frac{1-\alpha_t^2}{(1-\bar{\alpha}_t) {\alpha}_t} + \frac{\alpha_t\left(1-\bar{\alpha}_{t-1}\right)}{1-\bar{\alpha}_t}\right]\bm{x}_t  +(d) + (c) \\     
&= \frac{(1-\alpha_t^2)+\alpha_t^2(1-\bar{\alpha}_{t-1})}{\left(1-\bar{\alpha}_t\right) \alpha_t} \bm{x}_t  +(d) + (c) \\     
&= \frac{1-\alpha_t^2+\alpha_t^2-\bar{\alpha}_{t}}{\left(1-\bar{\alpha}_t\right) \alpha_t} \bm{x}_t+(d) + (c)  \\     
&= \frac{1}{\alpha_t} \bm{x}_t +(d) + (c) 
\end{aligned}
\end{equation}

\begin{equation}\label{eq.simply_epsilon_mu}
\begin{aligned}
(c)+(d) &= \frac{ (1-\alpha_t-\alpha_t^2\bar{\alpha}_{t-1}+\alpha_t\bar{\alpha}_{t-1}-\sqrt{\bar{\alpha}_{t-1}}+\sqrt{\bar{\alpha}_{t-1}}\alpha_t^2)\sqrt{\bar{\alpha}_t} - \sqrt{\bar{\alpha}_{t-1}}(1-\alpha_t^2)(1-\sqrt{\bar{\alpha}_t})}{(1-\bar{\alpha}_t)\sqrt{\bar{\alpha}_t}}\bm{\epsilon}_\mu\\
&=\frac{(\sqrt{\bar{\alpha}_t}-\sqrt{\bar{\alpha}_t}\alpha_t-\sqrt{\bar{\alpha}_t}\alpha_t^2\bar{\alpha}_{t-1}+\sqrt{\bar{\alpha}_t}\alpha_t\bar{\alpha}_{t-1}-\sqrt{\bar{\alpha}_t}\sqrt{\bar{\alpha}_{t-1}}+\sqrt{\bar{\alpha}_t}\sqrt{\bar{\alpha}_{t-1}}\alpha_t^2)-\sqrt{\bar{\alpha}_{t-1}}(1-\alpha_t^2)(1-\sqrt{\bar{\alpha}_t})}{(1-\bar{\alpha}_t)\sqrt{\bar{\alpha}_t}}\bm{\epsilon}_\mu\\
&=\frac{(\sqrt{\bar{\alpha}_t}-\sqrt{\bar{\alpha}_t}\alpha_t-\sqrt{\bar{\alpha}_t}\alpha_t^2\bar{\alpha}_{t-1}+\sqrt{\bar{\alpha}_t}\alpha_t\bar{\alpha}_{t-1}-\sqrt{\bar{\alpha}_{t-1}} + \sqrt{\bar{\alpha}_{t-1}}\alpha_t^2)}{(1-\bar{\alpha}_t)\sqrt{\bar{\alpha}_t}}\bm{\epsilon}_\mu\\
&=\frac{(\sqrt{\bar{\alpha}_t}-\sqrt{\bar{\alpha}_t}\alpha_t^2\bar{\alpha}_{t-1}+\sqrt{\bar{\alpha}_t}\alpha_t\bar{\alpha}_{t-1}-\sqrt{\bar{\alpha}_{t-1}})}{(1-\bar{\alpha}_t)\sqrt{\bar{\alpha}_t}}\bm{\epsilon}_\mu\\
&=\frac{(\alpha_t\sqrt{\bar{\alpha}_{t-1}}-\sqrt{\bar{\alpha}_t}\alpha_t^2\bar{\alpha}_{t-1}+\sqrt{\bar{\alpha}_t}\alpha_t\bar{\alpha}_{t-1}-\sqrt{\bar{\alpha}_{t-1}})}{(1-\bar{\alpha}_t)\sqrt{\bar{\alpha}_t}}\bm{\epsilon}_\mu\\
&=\frac{(1-\alpha_t)(\sqrt{\bar{\alpha}_t}\alpha_t\bar{\alpha}_{t-1}-\sqrt{\bar{\alpha}_{t-1}})}{(1-\bar{\alpha}_t)\sqrt{\bar{\alpha}_t}}\bm{\epsilon}_\mu\\
&=\frac{(1-\alpha_t)(\sqrt{\bar{\alpha}_t}\bar{\alpha}_{t}/\alpha_t-\sqrt{\bar{\alpha}_{t}}/\alpha_t)}{(1-\bar{\alpha}_t)\sqrt{\bar{\alpha}_t}}\bm{\epsilon}_\mu\\
&=\frac{(1-\alpha_t)(\bar{\alpha}_{t}-1)}{(1-\bar{\alpha}_t)\alpha_t}\bm{\epsilon}_\mu\\
&=\frac{\alpha_t-1}{\alpha_t}\bm{\epsilon}_\mu
\end{aligned}
\end{equation}
Then we can get:
\begin{equation}
\bm{\mu}_p(\bm{x}_t,\bm{x}_0,\bm{\epsilon}_\mu) = \frac{1}{\alpha_t} \bm{x}_t - \frac{1-\alpha_t}{\alpha_t}\bm{\epsilon}_\mu
\end{equation}
Finally, we have
\begin{equation}
\begin{aligned}
\text{(Noise Process)} \quad \quad p(\bm{x}_{t-1}|\bm{x}_{t},\bm{\epsilon}_\mu)=\mathcal{N}\left(\frac{1}{\alpha_t} \bm{x}_t-\frac{1-\alpha_t}{\alpha_t}\bm{\epsilon}_\mu, \left(\frac{(1-\alpha_t^2)(1-\bar{\alpha}_{t-1})}{1-\bar{\alpha}_{t}}\right)\sigma^2\bm{I}\right)
\end{aligned}  
\end{equation}
\begin{equation}
\begin{aligned}
\text{(Noise Process)} \quad  \quad \bm{x}_{t-1}=\frac{1}{\alpha_t}\bm{x}_{t} - \bm{\epsilon}_t, \bm{\epsilon}_t \sim \mathcal{N}\left(\frac{1-\alpha_t}{\alpha_t}\bm{\epsilon}_\mu, \left(\frac{(1-\alpha_t^2)(1-\bar{\alpha}_{t-1})}{1-\bar{\alpha}_{t}}\right)\sigma^2\bm{I}\right)
\end{aligned}  
\end{equation}
\end{proof}

\subsection{Probability Path}\label{sec:probability_path}
\renewcommand{\theproposition}{\ref{prop:probability_path}}
\begin{proposition}
For any general diffusion which defined in Prop.\ref{prop:RrayFlow}, we have the path probability (the probability of starting from $\hat{\bm{x}}_0$ forward to $\hat{\bm{\epsilon}}_\mu$ and backward to $\hat{\bm{x}}_0$)
\begin{equation}
p(\hat{\bm{x}}_0\rightarrow\hat{\bm{\epsilon}}_\mu\rightarrow \hat{\bm{x}}_0) = p(\bm{x}_{T}=\hat{\bm{\epsilon}}_\mu|\bm{x}_0=\hat{\bm{x}}_0)p(\bm{x}_0=\hat{\bm{x}}_0|\bm{x}_{T}=\hat{\bm{\epsilon}}_\mu)
\end{equation}
where
\begin{equation}
p(\bm{x}_{0}|\bm{x}_T=\hat{\bm{\epsilon}}_\mu)=\mathcal{N}\left(\frac{\hat{\bm{\epsilon}}_\mu}{\sqrt{\bar{\alpha}_T}}  + \sum_{s=1}^{T}\frac{(e)_s+(c)_s}{\sqrt{\bar{\alpha}_{s-1}/\bar{\alpha}_{t-1}}}, \sum_{s=1}^T\frac{\tilde{\beta}_s\bar{\alpha}_t}{\bar{\alpha}_s}\sigma^2\bm{I}\right)
\end{equation}
and
\begin{equation}
(e)_s = -\frac{\sqrt{\bar{\alpha}_{s-1}}(1-\alpha_s^2)}{(1-\bar{\alpha}_{s})\sqrt{\bar{\alpha}_s}}\mathbb{E}[\bar{\bm{\epsilon}}_s]
\end{equation}
\begin{equation}
(c)_s = \frac{1-\alpha_s-\alpha_s^2\bar{\alpha}_{s-1}+\alpha_t\bar{\alpha}_{s-1}-\sqrt{\bar{\alpha}_{s-1}}+\sqrt{\bar{\alpha}_{s-1}}\alpha_s^2}{1-\bar{\alpha}_{s}}\bm{\epsilon}_\mu
\end{equation}
\end{proposition}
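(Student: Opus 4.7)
The plan is to decompose the round-trip probability via the Markov property and then reduce each factor to a Gaussian calculation already supplied by Proposition~\ref{prop:RrayFlow}. The factorization $p(\hat{\bm{x}}_0 \to \hat{\bm{\epsilon}}_\mu \to \hat{\bm{x}}_0) = p(\bm{x}_T = \hat{\bm{\epsilon}}_\mu \mid \bm{x}_0 = \hat{\bm{x}}_0)\cdot p(\bm{x}_0 = \hat{\bm{x}}_0 \mid \bm{x}_T = \hat{\bm{\epsilon}}_\mu)$ is immediate from the Markov structure of the forward process, and the forward factor is obtained directly from the Noise Injection identity of Proposition~\ref{prop:RrayFlow} evaluated at $\bm{x}_T = \hat{\bm{\epsilon}}_\mu$. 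All the substantive work therefore lies in computing the backward factor.

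For the backward leg, I would unroll the reverse Markov chain step by step. Each reverse transition is Gaussian and can be written in additive form as
\begin{equation*}
\bm{x}_{s-1} = \frac{1}{\alpha_s}\bm{x}_s + (e)_s + (c)_s + \sqrt{\tilde{\beta}_s}\,\sigma\,\bm{\eta}_s,
\end{equation*}
where $\bm{\eta}_s \sim \mathcal{N}(\bm{0},\bm{I})$ are mutually independent. I keep the coefficient of $\bm{\epsilon}_\mu$ in its split $(e)_s + (c)_s$ form, because this is precisely the decomposition that appears in the derivation of Appendix~\ref{sec:RayFlow_trajectory} before the collapse $(c)+(d)=\frac{\alpha_t-1}{\alpha_t}\bm{\epsilon}_\mu$ of Eq.~\eqref{eq.simply_epsilon_mu}; it also makes the dependence on $\mathbb{E}[\bar{\bm{\epsilon}}_s]$ explicit. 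Substituting iteratively from $s=T$ down to $s=1$, starting from $\bm{x}_T=\hat{\bm{\epsilon}}_\mu$, each deterministic term introduced at step $s$ picks up an accumulated multiplicative factor $\prod_{u=1}^{s-1}\alpha_u^{-1}=1/\sqrt{\bar{\alpha}_{s-1}}$ before reaching $\bm{x}_0$, and similarly for the noise injection $\sqrt{\tilde{\beta}_s}\,\sigma\,\bm{\eta}_s$.

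The resulting expression for $\bm{x}_0$ is then an affine function of $\hat{\bm{\epsilon}}_\mu$ and the independent Gaussians $\{\bm{\eta}_s\}_{s=1}^T$, hence Gaussian. Collecting the deterministic part yields the stated mean $\hat{\bm{\epsilon}}_\mu/\sqrt{\bar{\alpha}_T}+\sum_{s=1}^T ((e)_s+(c)_s)/\sqrt{\bar{\alpha}_{s-1}}$, while summing the independent variance contributions gives the stated covariance, using closure of the Gaussian family under affine combinations. The main obstacle is the careful bookkeeping of how each freshly injected noise and bias term accumulates the telescoping product $\prod_{u=1}^{s-1}\alpha_u^{-1}$ as it is transported from step $s-1$ all the way back to step $0$; once this scaling is tracked correctly and the identity $\prod_{s=1}^T\alpha_s=\sqrt{\bar{\alpha}_T}$ is used, the rest is routine Gaussian algebra and the formula drops out.
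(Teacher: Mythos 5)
Your proposal is correct and follows essentially the same route as the paper: both treat the backward chain as a linear-Gaussian recursion started at $\bm{x}_T=\hat{\bm{\epsilon}}_\mu$, keep the mean in the split $\frac{1}{\alpha_s}\bm{x}_s+(e)_s+(c)_s$ form, and accumulate the telescoping factors $1/\sqrt{\bar{\alpha}_{s-1}}$ on the bias terms and $1/\bar{\alpha}_{s-1}$ on the variances. The only difference is presentational — you unroll the recursion in one shot via reparameterization with independent $\bm{\eta}_s$, whereas the paper iterates the laws of total expectation and total variance one marginalization at a time and then invokes induction; the bookkeeping and the resulting Gaussian are identical.
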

\renewcommand{\theproposition}{\thesection.\arabic{proposition}}

\begin{proof}
According to Eq.\eqref{eq.mu_p}, we have 
\begin{equation}
\begin{aligned}
\bm{\mu}_p(\bm{x}_t,\bm{x}_0,\bm{\epsilon}_\mu)&=\frac{\sqrt{\bar{\alpha}_{t-1}}(1-\alpha_t^2)}{1-\bar{\alpha}_{t}}\frac{\bm{x}_t-\mathbb{E}[\bar{\bm{\epsilon}}_t]}{\sqrt{\bar{\alpha}_t}}+ \frac{\alpha_t(1-\bar{\alpha}_{t-1})}{1-\bar{\alpha}_{t}}\bm{x}_t+(c)_t, \bar{\bm{\epsilon}}_t\sim\mathcal{N}((1-\sqrt{\bar{\alpha}_t})\bm{\epsilon}_\mu, (1-\bar{\alpha}_t)\sigma^2\bm{I})\\
&=\frac{\sqrt{\bar{\alpha}_{t-1}}(1-\alpha_t^2)}{(1-\bar{\alpha}_{t})\sqrt{\bar{\alpha}_t}}\bm{x}_t+ \frac{\alpha_t(1-\bar{\alpha}_{t-1})}{1-\bar{\alpha}_{t}}\bm{x}_t\underbrace{-\frac{\sqrt{\bar{\alpha}_{t-1}}(1-\alpha_t^2)\mathbb{E}[\bar{\bm{\epsilon}}_t]}{(1-\bar{\alpha}_{t})\sqrt{\bar{\alpha}_t}}}_{(e)_t}+(c)_t\\
&=\frac{1}{\alpha_t}\bm{x}_t + (e)_t + (c)_t,
\end{aligned}
\end{equation}
Then we can write the backward process probability:
\begin{equation}
p(\bm{x}_{t-1}|\bm{x}_{t},\bm{x}_T=\hat{\bm{\epsilon}}_\mu)=\mathcal{N}\left(\frac{1}{\alpha_t} \bm{x}_t+(e)_t+(c)_t, \tilde{\beta}_t\sigma^2\bm{I}\right)
\end{equation}
For the convenience of subsequent analysis, we start from timestep $T$
\begin{equation}
\begin{aligned}
p(\bm{x}_{T-1}|\bm{x}_T=\hat{\bm{\epsilon}}_\mu)&=\mathcal{N}\left(\frac{1}{\alpha_T} \hat{\bm{\epsilon}}_\mu+(e)_T+(c)_T, \tilde{\beta}_T\sigma^2\bm{I}\right)
\end{aligned}
\end{equation}
Then we can obtain the marginal distribution $p(\bm{x}_{T-2}|\bm{x}_T=\hat{\bm{\epsilon}}_\mu)$.
\begin{equation}
\begin{aligned}
p(\bm{x}_{T-2}|\bm{x}_T=\hat{\bm{\epsilon}}_\mu)&=\int_{\mathbb{R}^n}p(\bm{x}_{T-2}|\bm{x}_{T-1}) p(\bm{x}_{T-1}|\bm{x}_T=\hat{\bm{\epsilon}}_\mu) d\bm{x}_{T-1}\\
&=\int_{\mathbb{R}^n} \mathcal{N}\left(\frac{\bm{x}_{T-1}}{\alpha_{T-1}} + (e)_{T-1} + (c)_{T-1}, \tilde{\beta}_{T-1}\sigma^2\bm{I}\right)\mathcal{N}\left(\frac{1}{\alpha_T} \hat{\bm{\epsilon}}_\mu+ (e)_{T} + (c)_{T},\tilde{\beta}_{T}\sigma^2\bm{I}\right) d\bm{x}_{T-1}
\end{aligned}
\end{equation}
It's hard to calculate $p(\bm{x}_{T-2}|\bm{x}_T=\hat{\bm{\epsilon}}_\mu)$ directly. However, we can ensure that $p(\bm{x}_{T-2}|\bm{x}_T=\hat{\bm{\epsilon}}_\mu)$ is a gaussian distribution. Hence, we just need to calculate the paramter of gaussian distribution (mean and variance).
\begin{equation}
\begin{aligned}
&\quad \mathbb{E}_{\bm{x}_{T-2}\sim p(\bm{x}_{T-2}|\bm{x}_T=\hat{\bm{\epsilon}}_\mu)}[\bm{x}_{T-2}|\bm{x}_T=\hat{\bm{\epsilon}}_\mu] \\
&= \mathbb{E}_{\bm{x}_{T-2}\sim p(\bm{x}_{T-2}|\bm{x}_T=\hat{\bm{\epsilon}}_\mu)}[\mathbb{E}_{\bm{x}_{T-1}\sim p(\bm{x}_{T-1}|\bm{x}_T=\hat{\bm{\epsilon}}_\mu)}[\bm{x}_{T-2}|\bm{x}_{T-1}]|\bm{x}_T=\hat{\bm{\epsilon}}_\mu]\\
&= \mathbb{E}_{\bm{x}_{T-2}\sim p(\bm{x}_{T-2}|\bm{x}_T=\hat{\bm{\epsilon}}_\mu)}\left[\mathbb{E}_{\bm{x}_{T-1}\sim p(\bm{x}_{T-1}|\bm{x}_T=\hat{\bm{\epsilon}}_\mu)}\left[\frac{\bm{x}_{T-1}}{\alpha_{T-1}} +(e)_{T-1}+(c)_{T-1}|\bm{x}_{T-1}\right]|\bm{x}_T=\hat{\bm{\epsilon}}_\mu\right]\\
&= \mathbb{E}_{\bm{x}_{T-2}\sim p(\bm{x}_{T-2}|\bm{x}_T=\hat{\bm{\epsilon}}_\mu)}\left[\frac{1}{\alpha_{T-1}}\left(\frac{1}{\alpha_T} \hat{\bm{\epsilon}}_\mu+(e)_{T}+(c)_{T}\right)+(e)_{T-1} +(c)_{T-1}\right]\\
&=\frac{\hat{\bm{\epsilon}}_\mu}{\alpha_T\alpha_{T-1}} +\frac{(e)_{T}+(c)_{T}}{\alpha_{T-1}}+(e)_{T-1}+(c)_{T-1}
\end{aligned}
\end{equation}
Due to the fact that $\mathbb{V}[Y] = \mathbb{E}[\mathbb{V}[Y|X]] + \mathbb{V}[\mathbb{E}[Y|X]]$, hence, we can caculate the variance by:
\begin{equation}
\begin{aligned}
&\quad \mathbb{V}_{\bm{x}_{T-2}\sim p(\bm{x}_{T-2}|\bm{x}_T=\hat{\bm{\epsilon}}_\mu)}[\bm{x}_{T-2}|\bm{x}_T=\hat{\bm{\epsilon}}_\mu] \\
&= \mathbb{E}_{\bm{x}_{T-1} \sim p(\bm{x}_{T-1}|\bm{x}_T=\hat{\bm{\epsilon}}_\mu)}\left[\mathbb{V}_{\bm{x}_{T-2} \sim p(\bm{x}_{T-2}|\bm{x}_{T-1})}[\bm{x}_{T-2} | \bm{x}_{T-1}] | \bm{x}_T = \hat{\bm{\epsilon}}_\mu \right] \\
&\quad + \mathbb{V}_{\bm{x}_{T-1} \sim p(\bm{x}_{T-1}|\bm{x}_T=\hat{\bm{\epsilon}}_\mu)}\left[\mathbb{E}_{\bm{x}_{T-2} \sim p(\bm{x}_{T-2}|\bm{x}_{T-1})}[\bm{x}_{T-2} | \bm{x}_{T-1}] | \bm{x}_T = \hat{\bm{\epsilon}}_\mu \right]\\
&= \mathbb{E}_{\bm{x}_{T-1} \sim p(\bm{x}_{T-1}|\bm{x}_T=\hat{\bm{\epsilon}}_\mu)}\left[\tilde{\beta}_{T-1}\sigma^2\bm{I} | \bm{x}_T = \hat{\bm{\epsilon}}_\mu \right] \\
&\quad + \mathbb{V}_{\bm{x}_{T-1} \sim p(\bm{x}_{T-1}|\bm{x}_T=\hat{\bm{\epsilon}}_\mu)}\left[ \frac{1}{\alpha_{T-1}} \bm{x}_{T-1}-\frac{1-\alpha_{T-1}}{\alpha_{T-1}}\bm{\epsilon}_\mu | \bm{x}_T = \hat{\bm{\epsilon}}_\mu \right]\\
&= \tilde{\beta}_{T-1}\sigma^2\bm{I} + \left(\frac{1}{\alpha_{T-1}}\right)^2 \tilde{\beta}_T\sigma^2\bm{I}\\
&= \left( \tilde{\beta}_{T-1} + \frac{\tilde{\beta}_T}{\alpha_{T-1}^2} \right) \sigma^2\bm{I}
\end{aligned}
\end{equation}
where $\tilde{\beta}_t = \left(\frac{(1-\alpha_t^2)(1-\bar{\alpha}_{t-1})}{1-\bar{\alpha}_{t}}\right)$, After obtaining the paramter of $p(\bm{x}_{T-2}|\bm{x}_T=\hat{\bm{\epsilon}}_\mu)$, we can get
\begin{equation}
p(\bm{x}_{T-2}|\bm{x}_T=\hat{\bm{\epsilon}}_\mu)=\mathcal{N}\left(\frac{\hat{\bm{\epsilon}}_\mu}{\sqrt{\bar{\alpha}_T/\bar{\alpha}_{T-2}}}  +\frac{(e)_{T}+(c)_{T}}{\alpha_{T-1}}+(e)_{T-1}+(c)_{T-1}, \left( \tilde{\beta}_{T-1} + \frac{\tilde{\beta}_T}{\alpha_{T-1}^2} \right) \sigma^2\bm{I}\right)
\end{equation}
It's easy to get the general term $p(\bm{x}_{t-1}|\bm{x}_T=\hat{\bm{\epsilon}}_\mu)$ by mathematical induction.
\begin{equation}
\begin{aligned}
p(\bm{x}_{t-1}|\bm{x}_T=\hat{\bm{\epsilon}}_\mu)&=\mathcal{N}\left(\frac{\hat{\bm{\epsilon}}_\mu}{\sqrt{\bar{\alpha}_T/\bar{\alpha}_{t-1}}} + \sum_{s=t}^{T}\frac{(e)_s+(c)_s}{\sqrt{\bar{\alpha}_{s-1}/\bar{\alpha}_{t-1}}}, \sum_{s=t}^T\frac{\tilde{\beta}_s\bar{\alpha}_t}{\bar{\alpha}_s}\sigma^2\bm{I}\right)
\end{aligned}
\end{equation}
Finally, we can inference the probability density distribution at the timestep $0$.
\begin{equation}
\begin{aligned}
p(\bm{x}_{0}|\bm{x}_T=\hat{\bm{\epsilon}}_\mu)&=\mathcal{N}\left(\frac{\hat{\bm{\epsilon}}_\mu}{\sqrt{\bar{\alpha}_T}}  + \sum_{s=1}^{T}\frac{(e)_s+(c)_s}{\sqrt{\bar{\alpha}_{s-1}/\bar{\alpha}_{t-1}}}, \sum_{s=1}^T\frac{\tilde{\beta}_s\alpha_1}{\bar{\alpha}_s}\sigma^2\bm{I}\right)
\end{aligned}
\end{equation}
\end{proof}

\subsection{Optimal Probability Path}\label{sec:optimal_probability_path}
\renewcommand{\thetheorem}{\ref{thm:optimal_parameters}}
\begin{theorem}
Let $\bm{S}_{\bar{\bm{\epsilon}}}=\{\bar{\bm{\epsilon}}_t\}_{t=1}^T$ be the noise added in the forward process, $\bm{\epsilon}_\mu$, $\sigma$ be the parameters of the target distribution $\mathcal{N}(\bm{\epsilon}_\mu, \sigma^2\bm{I})$. For sample $\hat{\bm{x}}_0$, we can obtain the optimal parameters that maximize the path probability.
\begin{equation}
\begin{aligned}
\underset{\bm{S}_{\bar{\bm{\epsilon}}},\hat{\bm{\epsilon}}_\mu,\bm{\epsilon}_\mu,\sigma}{\arg\max}\ \ p(\bm{x}_{0}=\hat{\bm{x}}_0|\bm{x}_T=\hat{\bm{\epsilon}}_\mu)\prod_{t=1}^T p(\bar{\bm{\epsilon}}_t=\mathbb{E}[\bar{\bm{\epsilon}}_t])
\end{aligned}
\end{equation}
where the optimal parameters are defined by:
\begin{equation}
\begin{aligned}
    &\bm{S}_{\bar{\bm{\epsilon}}}^*=\{(1-\sqrt{\bar{\alpha}_t})\bm{\epsilon}_\mu\}_{t=1}^T, \ \sigma^* \rightarrow 0\\
    &\hat{\bm{\epsilon}}_\mu^* = \sqrt{\bar{\alpha}_T}\hat{\bm{x}}_0 + (1-\sqrt{\bar{\alpha}_T})\bm{\epsilon}_\mu,\ \bm{\epsilon}_\mu^*=\mathbb{E}_t[\mathbb{E}[\bar{\bm{\epsilon}_t}]]
\end{aligned}
\end{equation}
\end{theorem}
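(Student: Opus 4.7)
The plan is to rewrite the objective in log form, isolate the $\sigma$-scaling of each Gaussian factor, and then collapse the optimization by a mean-matching argument. Writing $\log J = \log p(\bm{x}_0 = \hat{\bm{x}}_0 \mid \bm{x}_T = \hat{\bm{\epsilon}}_\mu) + \sum_t \log p(\bar{\bm{\epsilon}}_t = \mathbb{E}[\bar{\bm{\epsilon}}_t])$, the $T$ noise factors come from Prop.\ref{prop:RrayFlow} with $\bar{\bm{\epsilon}}_t \sim \mathcal{N}((1-\sqrt{\bar{\alpha}_t})\bm{\epsilon}_\mu, (1-\bar{\alpha}_t)\sigma^2\bm{I})$, and the backward factor is $\mathcal{N}(\hat{\bm{x}}_0; \mu_0, V\sigma^2\bm{I})$ from Prop.\ref{prop:probability_path}, where $V = \sum_s \tilde{\beta}_s/\bar{\alpha}_s$ (using $\bar{\alpha}_0 = 1$).

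Since a Gaussian density is maximized at its mean, the noise factors immediately select $\bm{S}_{\bar{\bm{\epsilon}}}^* = \{(1-\sqrt{\bar{\alpha}_t})\bm{\epsilon}_\mu\}_{t=1}^T$, at which the $t$-th factor evaluates to $(2\pi(1-\bar{\alpha}_t)\sigma^2)^{-d/2}$. Collecting all $\sigma$-dependent pieces yields
\begin{equation*}
\log J = -\frac{(T+1)d}{2}\log\sigma^2 - \frac{\|\hat{\bm{x}}_0 - \mu_0\|^2}{2V\sigma^2} + C(\alpha),
\end{equation*}
with $C(\alpha)$ depending only on the fixed schedule. Sending $\sigma \to 0^+$ drives the quadratic penalty to $-\infty$ unless $\mu_0 = \hat{\bm{x}}_0$; once this constraint is imposed, the $-\log\sigma^2$ term diverges to $+\infty$, so the supremum is achieved in the limit $\sigma^* \to 0$.

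Next I would solve the constraint $\mu_0 = \hat{\bm{x}}_0$ for $\hat{\bm{\epsilon}}_\mu^*$. Substituting $\mathbb{E}[\bar{\bm{\epsilon}}_s] = (1-\sqrt{\bar{\alpha}_s})\bm{\epsilon}_\mu$ into $(e)_s$ reproduces the $(d)$-term of the Prop.\ref{prop:RrayFlow} derivation, and Eq.\eqref{eq.simply_epsilon_mu} then gives $(e)_s + (c)_s = \frac{\alpha_s - 1}{\alpha_s}\bm{\epsilon}_\mu$. Using $\sqrt{\bar{\alpha}_s} = \alpha_s\sqrt{\bar{\alpha}_{s-1}}$, the sum in $\mu_0$ telescopes:
\begin{equation*}
\sum_{s=1}^T \frac{(e)_s + (c)_s}{\sqrt{\bar{\alpha}_{s-1}}} = \bm{\epsilon}_\mu \sum_{s=1}^T \left(\frac{1}{\sqrt{\bar{\alpha}_{s-1}}} - \frac{1}{\sqrt{\bar{\alpha}_s}}\right) = \left(1 - \frac{1}{\sqrt{\bar{\alpha}_T}}\right)\bm{\epsilon}_\mu.
\end{equation*}
Solving $\hat{\bm{\epsilon}}_\mu/\sqrt{\bar{\alpha}_T} + (1 - 1/\sqrt{\bar{\alpha}_T})\bm{\epsilon}_\mu = \hat{\bm{x}}_0$ yields $\hat{\bm{\epsilon}}_\mu^* = \sqrt{\bar{\alpha}_T}\hat{\bm{x}}_0 + (1 - \sqrt{\bar{\alpha}_T})\bm{\epsilon}_\mu$, which is precisely the endpoint of the forward chain when every noise sample is taken at its mean.

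Finally, for $\bm{\epsilon}_\mu^*$: the mean-matching constraint is already absorbed by $\hat{\bm{\epsilon}}_\mu^*$, leaving $\bm{\epsilon}_\mu$ as a free direction in the stated objective. I would close this degeneracy by the self-consistency requirement from the introduction, namely that the target-distribution mean must coincide with the time-averaged noise expectation $\mathbb{E}_t[\mathbb{E}[\bar{\bm{\epsilon}}_t]]$ produced by the pre-trained predictor, giving $\bm{\epsilon}_\mu^* = \mathbb{E}_t[\mathbb{E}[\bar{\bm{\epsilon}}_t]]$. The main obstacle is precisely this last step — the optimization in the theorem is genuinely degenerate in $\bm{\epsilon}_\mu$, so its claimed optimum rests on an external consistency argument rather than a direct calculus computation; a secondary nuisance is that the $\sigma \to 0$ optimum is a supremum in the extended reals rather than an attained maximum.
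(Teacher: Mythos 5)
Your proposal is correct and reaches the paper's stated optimum, but it runs in the opposite direction from the paper's own argument. The paper \emph{verifies} rather than derives: it plugs the claimed optimal parameters into the backward chain, recomputes the marginals $p(\bm{x}_{t-1}|\bm{x}_T=\hat{\bm{\epsilon}}_\mu^*)$ by induction (using the same $(e)_s+(c)_s=\frac{\alpha_s-1}{\alpha_s}\bm{\epsilon}_\mu$ collapse you obtain by telescoping), observes that the terminal mean equals $\hat{\bm{x}}_0$ exactly when $\hat{\bm{\epsilon}}_\mu^*=\sqrt{\bar{\alpha}_T}\hat{\bm{x}}_0+(1-\sqrt{\bar{\alpha}_T})\bm{\epsilon}_\mu$, and then asserts that the resulting degenerate Gaussian gives ``probability $1$,'' hence dominance over any other choice. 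Your variational route --- log-decomposing the objective, maximizing each forward noise factor at its mean, isolating the $\sigma$-scaling as $-\frac{(T+1)d}{2}\log\sigma^2-\frac{\|\hat{\bm{x}}_0-\mu_0\|^2}{2V\sigma^2}$, and reading off the mean-matching constraint from the $\sigma\to 0$ limit --- actually \emph{derives} $\bm{S}_{\bar{\bm{\epsilon}}}^*$, $\hat{\bm{\epsilon}}_\mu^*$, and $\sigma^*$ from the objective, and it explicitly handles the $\prod_t p(\bar{\bm{\epsilon}}_t=\mathbb{E}[\bar{\bm{\epsilon}}_t])$ factor that the paper's proof silently drops. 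The two caveats you raise are real and are not resolved by the paper either: the objective is genuinely flat in $\bm{\epsilon}_\mu$ (the paper never derives $\bm{\epsilon}_\mu^*=\mathbb{E}_t[\mathbb{E}[\bar{\bm{\epsilon}}_t]]$ from the optimization; it is imposed by the external consistency requirement, exactly as you say), and the $\sigma\to 0$ ``maximum'' is a supremum of densities diverging to $+\infty$, which the paper papers over by conflating a degenerate density with a probability equal to $1$. Your version is the more honest proof of what the theorem can actually claim.
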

\renewcommand{\thetheorem}{\thesection.\arabic{theorem}}

\begin{proof}
According to Eq.\eqref{eq.simply_epsilon_mu}, we can rewrite the backward process probability distribution.
\begin{equation}
p(\bm{x}_{t-1}|\bm{x}_{t},\bm{x}_T=\hat{\bm{\epsilon}}_\mu^*)=\mathcal{N}\left(\frac{1}{\alpha_t} \bm{x}_t-\frac{1-\alpha_T}{\alpha_T}\bm{\epsilon}_\mu^*, \tilde{\beta}_t(\sigma^*)^2\bm{I}\right)
\end{equation}
For the convenience of subsequent analysis, we start from timestep $T$
\begin{equation}
\begin{aligned}
p(\bm{x}_{T-1}|\bm{x}_T=\hat{\bm{\epsilon}}_\mu^*)&=\mathcal{N}\left(\frac{1}{\alpha_T} \hat{\bm{\epsilon}}_\mu^*-\frac{1-\alpha_T}{\alpha_T}\bm{\epsilon}_\mu^*, \tilde{\beta}_T(\sigma^*)^2\bm{I}\right)
\end{aligned}
\end{equation}
Then we can obtain the marginal distribution $p(\bm{x}_{T-2}|\bm{x}_T=\hat{\bm{\epsilon}}_\mu^*)$.
\begin{equation}
\begin{aligned}
p(\bm{x}_{T-2}|\bm{x}_T=\hat{\bm{\epsilon}}_\mu^*)&=\int_{\mathbb{R}^n}p(\bm{x}_{T-2}|\bm{x}_{T-1}) p(\bm{x}_{T-1}|\bm{x}_T=\hat{\bm{\epsilon}}_\mu^*) d\bm{x}_{T-1}\\
&=\int_{\mathbb{R}^n} \mathcal{N}\left(\frac{\bm{x}_{T-1}}{\alpha_{T-1}} -\frac{1-\alpha_{T-1}}{\alpha_{T-1}}\bm{\epsilon}_\mu^*, \tilde{\beta}_{T-1}(\sigma^*)a^2\bm{I}\right)\mathcal{N}\left(\frac{1}{\alpha_T} \hat{\bm{\epsilon}}_\mu-\frac{1-\alpha_T}{\alpha_T}\bm{\epsilon}_\mu^*,\tilde{\beta}_{T}(\sigma^*)^2\bm{I}\right) d\bm{x}_{T-1}
\end{aligned}
\end{equation}
Similar to Prop.\ref{prop:probability_path}. It's hard to calculate $p(\bm{x}_{T-2}|\bm{x}_T=\hat{\bm{\epsilon}}_\mu^*)$ directly. However, we can ensure that $p(\bm{x}_{T-2}|\bm{x}_T=\hat{\bm{\epsilon}}_\mu^*)$ is a gaussian distribution. Hence, we just need to calculate the paramter of gaussian distribution (mean and variance).
\begin{equation}
\begin{aligned}
&\quad \mathbb{E}_{\bm{x}_{T-2}\sim p(\bm{x}_{T-2}|\bm{x}_T=\hat{\bm{\epsilon}}_\mu^*)}[\bm{x}_{T-2}|\bm{x}_T=\hat{\bm{\epsilon}}_\mu^*] \\
&= \mathbb{E}_{\bm{x}_{T-2}\sim p(\bm{x}_{T-2}|\bm{x}_T=\hat{\bm{\epsilon}}_\mu^*)}[\mathbb{E}_{\bm{x}_{T-1}\sim p(\bm{x}_{T-1}|\bm{x}_T=\hat{\bm{\epsilon}}_\mu^*)}[\bm{x}_{T-2}|\bm{x}_{T-1}]|\bm{x}_T=\hat{\bm{\epsilon}}_\mu^*]\\
&= \mathbb{E}_{\bm{x}_{T-2}\sim p(\bm{x}_{T-2}|\bm{x}_T=\hat{\bm{\epsilon}}_\mu^*)}\left[\mathbb{E}_{\bm{x}_{T-1}\sim p(\bm{x}_{T-1}|\bm{x}_T=\hat{\bm{\epsilon}}_\mu^*)}\left[\frac{\bm{x}_{T-1}}{\alpha_{T-1}} -\frac{1-\alpha_{T-1}}{\alpha_{T-1}}\bm{\epsilon}_\mu^*|\bm{x}_{T-1}\right]|\bm{x}_T=\hat{\bm{\epsilon}}_\mu^*\right]\\
&= \mathbb{E}_{\bm{x}_{T-2}\sim p(\bm{x}_{T-2}|\bm{x}_T=\hat{\bm{\epsilon}}_\mu^*)}\left[\frac{1}{\alpha_{T-1}}\left(\frac{1}{\alpha_T} \hat{\bm{\epsilon}}_\mu^*-\frac{1-\alpha_T}{\alpha_T}\bm{\epsilon}_\mu^*\right)-\frac{1-\alpha_{T-1}}{\alpha_{T-1}}\bm{\epsilon}_\mu^*|\bm{x}_{T-1}\right]\\
&=\frac{\hat{\bm{\epsilon}}_\mu^*}{\alpha_T\alpha_{T-1}} -\frac{1-\alpha_{T}\alpha_{T-1}}{\alpha_T\alpha_{T-1}}\bm{\epsilon}_\mu^*\\
&=\frac{\hat{\bm{\epsilon}}_\mu^*}{\sqrt{\bar{\alpha}_T/\bar{\alpha}_{T-2}}} -\frac{1-\sqrt{\bar{\alpha}_T/\bar{\alpha}_{T-2}}}{\sqrt{\bar{\alpha}_T/\bar{\alpha}_{T-2}}}\bm{\epsilon}_\mu^*
\end{aligned}
\end{equation}
and for the variance, we have
\begin{equation}
\begin{aligned}
&\quad \mathbb{V}_{\bm{x}_{T-2}\sim p(\bm{x}_{T-2}|\bm{x}_T=\hat{\bm{\epsilon}}_\mu^*)}[\bm{x}_{T-2}|\bm{x}_T=\hat{\bm{\epsilon}}_\mu^*] \\
&= \mathbb{E}_{\bm{x}_{T-1} \sim p(\bm{x}_{T-1}|\bm{x}_T=\hat{\bm{\epsilon}}_\mu^*)}\left[\mathbb{V}_{\bm{x}_{T-2} \sim p(\bm{x}_{T-2}|\bm{x}_{T-1})}[\bm{x}_{T-2} | \bm{x}_{T-1}] | \bm{x}_T = \hat{\bm{\epsilon}}_\mu^* \right] \\
&\quad + \mathbb{V}_{\bm{x}_{T-1} \sim p(\bm{x}_{T-1}|\bm{x}_T=\hat{\bm{\epsilon}}_\mu^*)}\left[\mathbb{E}_{\bm{x}_{T-2} \sim p(\bm{x}_{T-2}|\bm{x}_{T-1})}[\bm{x}_{T-2} | \bm{x}_{T-1}] | \bm{x}_T = \hat{\bm{\epsilon}}_\mu^* \right]\\
&= \mathbb{E}_{\bm{x}_{T-1} \sim p(\bm{x}_{T-1}|\bm{x}_T=\hat{\bm{\epsilon}}_\mu^*)}\left[\tilde{\beta}_{T-1}\sigma^2\bm{I} | \bm{x}_T = \hat{\bm{\epsilon}}_\mu^*\right] \\
&\quad + \mathbb{V}_{\bm{x}_{T-1} \sim p(\bm{x}_{T-1}|\bm{x}_T=\hat{\bm{\epsilon}}_\mu^*)}\left[ \frac{1}{\alpha_{T-1}} \bm{x}_{T-1}-\frac{1-\alpha_{T-1}}{\alpha_{T-1}}\bm{\epsilon}_\mu^* | \bm{x}_T = \hat{\bm{\epsilon}}_\mu^* \right]\\
&= \tilde{\beta}_{T-1}(\sigma^*)^2\bm{I} + \left(\frac{1}{\alpha_{T-1}}\right)^2 \tilde{\beta}_T(\sigma^*)^2\bm{I}\\
&= \left( \tilde{\beta}_{T-1} + \frac{\tilde{\beta}_T}{\alpha_{T-1}^2} \right) (\sigma^*)^2\bm{I}
\end{aligned}
\end{equation}
After obtaining the paramter of $p(\bm{x}_{T-2}|\bm{x}_T=\hat{\bm{\epsilon}}_\mu^*)$, we can get
\begin{equation}
p(\bm{x}_{T-2}|\bm{x}_T=\hat{\bm{\epsilon}}_\mu)=\mathcal{N}\left(\frac{\hat{\bm{\epsilon}}_\mu}{\sqrt{\bar{\alpha}_T/\bar{\alpha}_{T-2}}} -\frac{1-\sqrt{\bar{\alpha}_T/\bar{\alpha}_{T-2}}}{\sqrt{\bar{\alpha}_T/\bar{\alpha}_{T-2}}}\bm{\epsilon}_\mu, \left( \tilde{\beta}_{T-1} + \frac{\tilde{\beta}_T}{\alpha_{T-1}^2} \right) \sigma^2\bm{I}\right)
\end{equation}
It's easy to get the general term $p(\bm{x}_{t-1}|\bm{x}_T=\hat{\bm{\epsilon}}_\mu)$ by mathematical induction.
\begin{equation}
\begin{aligned}
p(\bm{x}_{t-1}|\bm{x}_T=\hat{\bm{\epsilon}}_\mu)&=\mathcal{N}\left(\frac{\hat{\bm{\epsilon}}_\mu}{\sqrt{\bar{\alpha}_T/\bar{\alpha}_{t-1}}} -\frac{1-\sqrt{\bar{\alpha}_T/\bar{\alpha}_{t-1}}}{\sqrt{\bar{\alpha}_T/\bar{\alpha}_{t-1}}}\bm{\epsilon}_\mu, \sum_{s=t}^T\frac{\tilde{\beta}_s\bar{\alpha}_t}{\bar{\alpha}_s}\sigma^2\bm{I}\right)
\end{aligned}
\end{equation}
Finally, we can inference the probability density distribution at the timestep $0$.
\begin{equation}
\begin{aligned}
p(\bm{x}_{0}|\bm{x}_T=\hat{\bm{\epsilon}}_\mu^*)&=\mathcal{N}\left(\frac{\hat{\bm{\epsilon}}_\mu^*}{\sqrt{\bar{\alpha}_T}} -\frac{1-\sqrt{\bar{\alpha}_T}}{\sqrt{\bar{\alpha}_T}}\bm{\epsilon}_\mu^*, \sum_{s=t}^T\frac{\tilde{\beta}_s\bar{\alpha}_t}{\bar{\alpha}_s}(\sigma^*)^2\bm{I}\right)\\
&=\mathcal{N}\left(\frac{\sqrt{\bar{\alpha}_T}\hat{\bm{x}}_0 + (1-\sqrt{\bar{\alpha}_T})\bm{\epsilon}_\mu^*}{\sqrt{\bar{\alpha}_T}} -\frac{1-\sqrt{\bar{\alpha}_T}}{\sqrt{\bar{\alpha}_T}}\bm{\epsilon}_\mu^*, \sum_{s=t}^T\frac{\tilde{\beta}_s\bar{\alpha}_t}{\bar{\alpha}_s}(\sigma^*)^2\bm{I}\right)\\
&=\mathcal{N}\left(\hat{\bm{x}}_0, \sum_{s=t}^T\frac{\tilde{\beta}_s\bar{\alpha}_t}{\bar{\alpha}_s}(\sigma^*)^2\bm{I}\right)=\mathcal{N}\left(\bm{0},\bm{I}\right)
\end{aligned}
\end{equation}
Hence, we can induce
\begin{equation}
p(\bm{x}_{0}=\hat{\bm{x}}_0|\bm{x}_T=\hat{\bm{\epsilon}}_\mu^*) = 1\geq p(\bm{x}_{0}=\hat{\bm{x}}_0|\bm{x}_T=\hat{\bm{\epsilon}}_\mu)
\end{equation}
$p(\bm{x}_{0}=\hat{\bm{x}}_0|\bm{x}_T=\hat{\bm{\epsilon}}_\mu^*)$ is the optimal probability path, and $\bm{S}_{\bar{\bm{\epsilon}}}^*,\hat{\bm{\epsilon}}_\mu^*,\bm{\epsilon}_{\mu}^*, \sigma^*$ are optimal parameters.
\end{proof}

\subsection{Optimal Denoiser}
Let us assume that out training set consists of a finite number of samples $\{\bm{x}_0^{(1)}, \cdots, \bm{x}_0^{(K)}\}$ and target mean set $\{\bm{\epsilon}^{(1)}_\mu,\cdots,\bm{\epsilon}^{(K)}_\mu\}$. This implies $p(\bm{x}_0)$ is representsed by a mixture of Dirac delta distributions:
\begin{equation}
p(\bm{x}_0) = \frac{1}{K}\sum_{i=1}^K\delta(\bm{x}_0-\bm{x}_0^{(i)})
\end{equation}
Let us now consider the denoising loss. By expanding the expectations, we can rewrite the formula as an integral over the noisy samples $\bm{x}$
\begin{equation}
\begin{aligned}
&\quad \mathbb{E}_{\bm{x}_0^{(i)}\sim p(\bm{x}_0)}[\mathbb{E}_{\bm{x}_t\sim p(\bm{x}_t|\bm{x}_0)}[\|\bm{\epsilon}_\theta(\bm{x}_t)-\bm{\epsilon}_\mu^{(i)}\|_2^2]]\\
&=\mathbb{E}_{\bm{x}_0^{(i)}\sim p(\bm{x}_0)}\left[\int_{\mathbb{R}^n}\mathcal{N}(\sqrt{\bar{\alpha}_t}\bm{x}_0^{(i)}+(1-\sqrt{\bar{\alpha}_t})\bm{\epsilon}_\mu^{(i)}, (1-\bar{\alpha}_t)\bm{I})\|\bm{\epsilon}_\theta(\bm{x}_t)-\bm{\epsilon}_\mu^{(i)}\|_2^2 \ d\bm{x}_t\right]\\
&=\frac{1}{K}\sum_{i=1}^K \int_{\mathbb{R}^n}\mathcal{N}(\sqrt{\bar{\alpha}_t}\bm{x}_0^{(i)}+(1-\sqrt{\bar{\alpha}_t})\bm{\epsilon}_\mu^{(i)}, (1-\bar{\alpha}_t)\bm{I})\|\bm{\epsilon}_\theta(\bm{x}_t)-\bm{\epsilon}_\mu^{(i)}\|_2^2 \ d\bm{x}_t\\
&=\int_{\mathbb{R}^n} \underbrace{\frac{1}{K}\sum_{i=1}^K\mathcal{N}(\sqrt{\bar{\alpha}_t}\bm{x}_0^{(i)}+(1-\sqrt{\bar{\alpha}_t})\bm{\epsilon}_\mu^{(i)}, (1-\bar{\alpha}_t)\bm{I})\|\bm{\epsilon}_\theta(\bm{x}_t)-\bm{\epsilon}_\mu^{(i)}\|_2^2}_{\mathcal{L}(\bm{\epsilon}_\theta)} \ d\bm{x}_t
\end{aligned}
\end{equation}
We can minimize $\mathcal{L}(\bm{\epsilon}_\theta)$ independently for each $\bm{x}_t$:
\begin{equation}
\bm{\epsilon}_\theta^* = \arg\min_{\bm{\epsilon}_\theta}\mathcal{L}(\bm{\epsilon}_\theta) 
\end{equation}
This is a convex optimization problem; its solution is uniquely identified by setting the gradient \wrt $\bm{\epsilon}_\theta$ to zero:
\begin{equation}
\begin{aligned}
&\quad \nabla_{\bm{\epsilon}_\theta} \mathcal{L}(\bm{\epsilon}_\theta)\\
&= \nabla_{\bm{\epsilon}_\theta} \left[\frac{1}{K}\sum_{i=1}^K\mathcal{N}(\sqrt{\bar{\alpha}_t}\bm{x}_0^{(i)}+(1-\sqrt{\bar{\alpha}_t})\bm{\epsilon}_\mu^{(i)}, (1-\bar{\alpha}_t)\bm{I})\|\bm{\epsilon}_\theta(\bm{x}_t)-\bm{\epsilon}_\mu^{(i)}\|_2^2\right]\\
&=\left[\frac{1}{K}\sum_{i=1}^K\mathcal{N}(\sqrt{\bar{\alpha}_t}\bm{x}_0^{(i)}+(1-\sqrt{\bar{\alpha}_t})\bm{\epsilon}_\mu^{(i)}, (1-\bar{\alpha}_t)\bm{I})\right]\nabla_{\bm{\epsilon}_\theta} \|\bm{\epsilon}_\theta(\bm{x}_t)-\bm{\epsilon}_\mu^{(i)}\|_2^2\\
&=\left[\frac{1}{K}\sum_{i=1}^K\mathcal{N}(\sqrt{\bar{\alpha}_t}\bm{x}_0^{(i)}+(1-\sqrt{\bar{\alpha}_t})\bm{\epsilon}_\mu^{(i)}, (1-\bar{\alpha}_t)\bm{I})\right](2\bm{\epsilon}_\theta(\bm{x}_t)-2\bm{\epsilon}_\mu^{(i)})\\
&=\left[\frac{1}{K}\sum_{i=1}^K\mathcal{N}(\sqrt{\bar{\alpha}_t}\bm{x}_0^{(i)}+(1-\sqrt{\bar{\alpha}_t})\bm{\epsilon}_\mu^{(i)}, (1-\bar{\alpha}_t)\bm{I})\right]\bm{\epsilon}_\theta(\bm{x}_t)-\left[\frac{1}{K}\sum_{i=1}^K\mathcal{N}(\sqrt{\bar{\alpha}_t}\bm{x}_0^{(i)}+(1-\sqrt{\bar{\alpha}_t})\bm{\epsilon}_\mu^{(i)}, (1-\bar{\alpha}_t)\bm{I})\right]\bm{\epsilon}_\mu^{(i)}
\end{aligned}
\end{equation}
Then we can solve the optimal denoiser by letting $\nabla_{\bm{\epsilon}_\theta}\mathcal{L}(\bm{\epsilon}_\theta) =0 $
\begin{equation}
    \bm{\epsilon}_\theta^*(\bm{x}_t)=\frac{\frac{1}{K}\sum_{i=1}^K\mathcal{N}(\sqrt{\bar{\alpha}_t}\bm{x}_0^{(i)}+(1-\sqrt{\bar{\alpha}_t})\bm{\epsilon}_\mu^{(i)}, (1-\bar{\alpha}_t)\bm{I})\bm{\epsilon}_\mu^{(i)}}{\frac{1}{K}\sum_{i=1}^K\mathcal{N}(\sqrt{\bar{\alpha}_t}\bm{x}_0^{(i)}+(1-\sqrt{\bar{\alpha}_t})\bm{\epsilon}_\mu^{(i)}, (1-\bar{\alpha}_t)\bm{I})}
\end{equation}

\section{Timestep Importance Sampling}
\subsection{Stein Identity}\label{sec:setin_identity}
The KSD is thus given by the norm of \(\bm{\beta}\) in \(\mathcal{H}^d\):


\begin{lemma}
For any bounded and smooth function $f(x)$, we have $\mathbb{E}_p[s_q(x)f(x)+\nabla_x f(x)]=0$, where $s_q(x) = \nabla_x \ln q(x)$ is the score function of distribution $q$.
\end{lemma}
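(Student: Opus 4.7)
The plan is to prove this by a direct integration-by-parts argument, exploiting the fact that the score function $s_q(x) = \nabla_x \ln q(x)$ satisfies the key identity $q(x)\, s_q(x) = \nabla_x q(x)$. (I also assume the expectation on the left is with respect to $q$ rather than $p$; otherwise the identity does not hold in general, and the statement appears to contain a typo given the subscript on the score function.)

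First, I would rewrite the expectation as an integral and apply the log-derivative trick:
\begin{equation*}
\mathbb{E}_{q}\bigl[s_q(x) f(x) + \nabla_x f(x)\bigr] = \int q(x)\, s_q(x)\, f(x)\, dx + \int q(x)\, \nabla_x f(x)\, dx.
\end{equation*}
Since $q(x)\, s_q(x) = q(x)\, \nabla_x \ln q(x) = \nabla_x q(x)$, the first term becomes $\int \nabla_x q(x)\, f(x)\, dx$. Combining with the second term and applying the product rule in reverse, the integrand equals $\nabla_x\bigl(q(x) f(x)\bigr)$.

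Next, I would invoke the divergence theorem (or, in one dimension, the fundamental theorem of calculus) to convert $\int \nabla_x (q(x) f(x))\, dx$ into a boundary term at infinity:
\begin{equation*}
\int_{\mathbb{R}^d} \nabla_x \bigl(q(x) f(x)\bigr)\, dx = \lim_{R \to \infty} \oint_{\|x\|=R} q(x) f(x)\, \bm{n}\, dS.
\end{equation*}
This boundary flux vanishes because $f$ is bounded by hypothesis and $q$ is a smooth probability density, which under standard tail-decay assumptions satisfies $q(x) \to 0$ fast enough that $q(x) f(x) \cdot R^{d-1} \to 0$. Hence the whole expression is zero, proving the lemma.

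The main obstacle is the boundary term: the lemma is stated only with ``bounded and smooth'' hypotheses on $f$, which alone do not guarantee the surface integral vanishes — one genuinely needs a mild regularity condition on $q$ (for instance, $q$ having finite Fisher information, or more simply $q(x) \|x\|^{d-1} \to 0$). In a self-contained proof I would add this as an implicit standing assumption on the densities considered, which is standard in the Stein-discrepancy literature and is satisfied by all distributions encountered in our Time Sampler construction. Apart from that, the calculation is a routine application of the log-derivative trick and integration by parts.
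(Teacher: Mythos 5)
Your proof is correct, but it follows a different --- and considerably cleaner --- route than the paper's. You correctly diagnose that the lemma as stated, with the expectation under $p$ but the score of $q$, cannot hold unconditionally: since $\mathbb{E}_p[s_p f+\nabla_x f]=0$, one has $\mathbb{E}_p[s_q f+\nabla_x f]=\mathbb{E}_p[(s_q-s_p)f]$, which vanishes for all $f$ only when $p=q$; this non-vanishing is precisely what makes the Stein discrepancy a discrepancy (and is the relation the paper itself invokes later in its KSD section). Your fix --- taking the expectation under $q$ --- yields the standard Stein identity, which you prove in the standard way: the log-derivative trick $q\,s_q=\nabla_x q$ collapses the integrand to $\nabla_x(qf)$, and the divergence theorem reduces everything to a boundary flux that vanishes under a mild tail condition on $q$. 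The paper instead keeps the expectation under $p$ and attempts an integration by parts with $u=f$ and $dv=\nabla_x\ln q(x)\,p(x)\,dx$; that argument is muddled (the quantity $A=\int\nabla_x\ln q(x)p(x)\,dx$ is a constant yet is said to ``approach zero at infinity,'' and the claim that $\int\nabla_x f\,(p-A)\,dx=0$ iff $p\equiv A$ is unjustified), and it ultimately concludes only that the identity holds \emph{when} $p=q$ --- implicitly conceding the same typo you identified. Your version is preferable: it is shorter, it isolates the hypothesis that is genuinely needed (decay of $q$ at infinity, not merely boundedness of $f$), and it matches how the identity is actually used downstream, where the Stein operator is built from the target $q$ and $\mathbb{E}_q[\mathcal{A}_q f]=0$ is the required fact.
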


\begin{proof}
Expanding the expectation:
\begin{equation}\label{eq.2}
    \int s_q(x)f(x)p(x) dx + \int \nabla_x f(x) p(x) dx = 0
\end{equation}

Let's focus on the first term using integration by parts:
\begin{equation}
\begin{aligned}
    \int s_q(x)f(x)p(x)dx &= \int\nabla_x\ln q(x) f(x) p(x) dx
\end{aligned}
\end{equation}

Let $u=f(x)$ and $dv=\nabla_x\ln q(x)p(x)dx$. Then $du=\nabla_x f(x) dx$ and $v=\int \nabla_x \ln q(x)p(x)dx$.

Applying integration by parts:
\begin{equation}
\begin{aligned}
\int s_q(x)f(x)p(x)dx &= \left[f(x)\int \nabla_x \ln q(x)p(x)dx\right]^\infty_{-\infty} - \int \nabla_x f(x) \left(\int \nabla_x \ln q(x)p(x)dx\right) dx
\end{aligned}
\end{equation}

Let $A = \int \nabla_x \ln q(x)p(x)dx$. Substituting back into Eq.\eqref{eq.2}:
\begin{equation}
\begin{aligned}
    \int s_q(x)f(x)p(x) dx + \int \nabla_x f(x) p(x) dx = \left[f(x)A\right]^\infty_{-\infty}-\int \nabla_x f(x)(p(x)-A)dx
\end{aligned}
\end{equation}

Now, let's examine the two terms on the right-hand side:

1. $\left[f(x)A\right]^\infty_{-\infty}$: 
   This term vanishes because $f(x)$ is bounded and $q(x)$ approaches zero at infinity, implying $A$ also approaches zero at infinity.

2. $\int \nabla_x f(x)(p(x)-A)dx$:
   This term equals zero if and only if $p(x) = A$ for all $x$.

Since $A$ is a constant, we have:
\begin{equation}
\nabla_x p(x) = \nabla_x A = 0
\end{equation}

This implies:
\begin{equation}
\frac{\nabla_x p(x)}{p(x)} = \nabla_x \ln q(x)
\end{equation}

Which is true if and only if $p(x) = q(x)$, as $\nabla_x \ln p(x) = \frac{\nabla_x p(x)}{p(x)}$.

Therefore, when $p(x) = q(x)$, both terms on the right-hand side are zero, proving the theorem.
\end{proof}

\subsection{Stein Discrepancy} \label{sec:stein_discrepancy}

Let's develop the concept of Stein Discrepancy as a measure of distance between probability distributions.

\subsubsection{Basic Definition}
Consider two smooth distributions $p(\mathbf{x})$ and $q(\mathbf{x})$ on $\mathbb{R}^d$. The Stein score function for distribution $q$ is defined as:
\begin{equation}
s_q(\mathbf{x}) = \nabla_{\mathbf{x}} \ln q(\mathbf{x})
\end{equation}

A fundamental property (proved in Sec.\ref{sec:setin_identity}) states that:
\begin{equation}
\mathbb{E}p[s_q(\mathbf{x})f(\mathbf{x}) + \nabla{\mathbf{x}}f(\mathbf{x})] = 0
\end{equation}
holds if and only if $p(\mathbf{x}) = q(\mathbf{x})$, where $f$ is a smooth function.

\subsubsection{Stein Discrepancy Measure}
Based on this property, we can define a discrepancy measure between distributions:
\begin{equation}
S(p,q) = \max_{f \in \mathcal{F}} (\mathbb{E}p[s_q(\mathbf{x})f(\mathbf{x}) + \nabla{\mathbf{x}}f(\mathbf{x})])^2
\end{equation}
where $\mathcal{F}$ is the set of smooth functions. This measure is positive when $p \neq q$, but is generally difficult to compute directly.

\subsubsection{RKHS Framework}
To make the discrepancy measure computationally tractable, we can utilize Reproducing Kernel Hilbert Space (RKHS) theory. Let's recall key RKHS concepts:

For a positive definite kernel $k(\mathbf{x}, \mathbf{y})$, Mercer's theorem gives its spectral decomposition:
\begin{equation}
k(\mathbf{x}, \mathbf{y}) = \sum_j \lambda_j \mathbf{e}_j(\mathbf{x}) \mathbf{e}_j(\mathbf{y})^T
\end{equation}
where ${\mathbf{e}_j}$ are orthogonal basis functions and ${\lambda_j}$ are eigenvalues.
The RKHS $\mathcal{H}$ generated by kernel $k(\cdot, \cdot): \mathcal{X} \times \mathcal{X} \rightarrow \mathbb{R}$ has two key properties:
$k(\mathbf{x}, \cdot) \in \mathcal{H}$ for any $\mathbf{x} \in \mathcal{X}$
Reproducing property: $f(\mathbf{x}) = \langle f, k(\mathbf{x}, \cdot) \rangle_{\mathcal{H}}$ for any $f \in \mathcal{H}$
We extend this to $\mathcal{H}^d = \mathcal{H} \times \cdots \times \mathcal{H}$ ($d$ times) for vector-valued functions with inner product:
\begin{equation}
\langle \mathbf{f}, \mathbf{g} \rangle_{\mathcal{H}^d} = \sum_{i=1}^d \langle f_i, g_i \rangle_{\mathcal{H}}
\end{equation}
\subsubsection{Kernelized Stein Discrepancy (KSD)}
The Stein operator $\mathcal{A}p$ for distribution $p$ is defined as:
\begin{equation}
\mathcal{A}p f(\mathbf{x}) = s_p(\mathbf{x})\cdot f(\mathbf{x}) + \nabla{\mathbf{x}}\cdot f(\mathbf{x}) = \frac{1}{p(\mathbf{x})} \nabla{\mathbf{x}}\cdot [p(\mathbf{x})f(\mathbf{x})]
\end{equation}

This operator satisfies:
\begin{equation}
\int_{\mathbf{x} \in \mathcal{X}} \nabla_{\mathbf{x}}\cdot (f(\mathbf{x})p(\mathbf{x}))d\mathbf{x} = 0
\end{equation}

A key relationship for KSD is:
\begin{equation}
\mathbb{E}_p[\mathcal{A}_q\mathbf{f}(\mathbf{x})] = \mathbb{E}_p[\mathcal{A}_q\mathbf{f}(\mathbf{x})-\mathcal{A}_p\mathbf{f}(\mathbf{x})] = \mathbb{E}_p[(s_q(\mathbf{x})-s_p(\mathbf{x}))\mathbf{f}(\mathbf{x})^T]
\end{equation}

This formulation provides a computationally tractable way to measure discrepancy between distributions using kernel methods.

\subsection{Optimal sampling Distribution}\label{sec:optimal_sampling_distribution}
\renewcommand{\theproposition}{\ref{prop:optimal_sampling_distribution}}
\begin{proposition}
The optimal sampling distribution for Eq.\eqref{Eq.diffusion_train} with minimal variance is:
\begin{equation}
\begin{aligned}
q^*(t|\bm{x}_0,\bm{\epsilon}_\mu)\propto\xi_t(\bm{x}_0,\bm{\epsilon}_\mu)p(t),
\end{aligned}
\end{equation}
where $\xi_t(\bm{x}_0,\bm{\epsilon}_\mu)=\|\bm{\epsilon}_{\bm{\theta}}(\sqrt{\bar{\alpha}_t}\hat{\bm{x}_0}+(1-\sqrt{\bar{\alpha}_t})\bm{\epsilon}_\mu)-\bm{\epsilon}_\mu\|_2^2$.
which means for any probability  distribution $p$, we have
\begin{equation*}
\begin{aligned}
\mathbb{V}_{t\sim q^*(t),(\bm{x}_0,\bm{\epsilon}_\mu)}[\xi_t(\bm{x}_0,\bm{\epsilon}_\mu)]\leq\mathbb{V}_{t\sim p(t),(\bm{x}_0,\bm{\epsilon}_\mu)}[\xi_t(\bm{x}_0,\bm{\epsilon}_\mu)]
\end{aligned}
\end{equation*}
\end{proposition}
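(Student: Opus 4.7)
The plan is to apply the classical optimal importance-sampling argument to the training-loss integrand in Eq.\eqref{Eq.diffusion_train}, conditioning on the data pair $(\bm{x}_0, \bm{\epsilon}_\mu)$ and then aggregating via the law of total variance. I would first fix $(\bm{x}_0, \bm{\epsilon}_\mu)$ and treat the estimation of $\mathbb{E}_{t \sim p(t)}[\xi_t(\bm{x}_0,\bm{\epsilon}_\mu)]$ as a one-dimensional importance-sampling problem with target $p(t)$ and proposal $q(t \mid \bm{x}_0, \bm{\epsilon}_\mu)$. From the variance formula recalled in the Preliminaries,
$$\sigma_q^2 \;=\; \mathbb{E}_{t \sim q}\!\left[\xi_t^{\,2}\,\frac{p^2(t)}{q^2(t)}\right] - \bm{\mu}^2,$$
only the second-moment term depends on $q$, so the optimization reduces to minimizing $\int \xi_t^{\,2}(\bm{x}_0,\bm{\epsilon}_\mu)\,p^2(t)/q(t)\,dt$ subject to the constraints $\int q(t)\,dt = 1$ and $q(t) \geq 0$.

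The core step is a Cauchy--Schwarz inequality of the form
$$\left(\int \xi_t(\bm{x}_0,\bm{\epsilon}_\mu)\,p(t)\,dt\right)^{\!2} = \left(\int \frac{\xi_t\,p(t)}{\sqrt{q(t)}}\cdot \sqrt{q(t)}\,dt\right)^{\!2} \leq \int \frac{\xi_t^{\,2}\,p^2(t)}{q(t)}\,dt \cdot \int q(t)\,dt.$$
Using $\int q(t)\,dt = 1$ and the fact that $\xi_t \geq 0$ since it is a squared norm, this yields a lower bound on the second moment that is independent of $q$. The equality condition in Cauchy--Schwarz forces $\xi_t\,p(t)/\sqrt{q(t)} \propto \sqrt{q(t)}$, i.e.\ $q(t) \propto \xi_t(\bm{x}_0,\bm{\epsilon}_\mu)\,p(t)$, and normalizing yields the claimed $q^*(t\mid \bm{x}_0, \bm{\epsilon}_\mu)$.

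To conclude the joint variance inequality, I would invoke the law of total variance to split $\mathbb{V}_{t,(\bm{x}_0,\bm{\epsilon}_\mu)}[\xi_t]$ into the expectation of the conditional variance and the variance of the conditional mean. The second term, being $\mathbb{V}_{(\bm{x}_0,\bm{\epsilon}_\mu)}[\mathbb{E}_{t \sim q(\cdot\mid \bm{x}_0,\bm{\epsilon}_\mu)}[\xi_t]]$, is unchanged under an unbiased importance-sampling reweighting; so minimizing the joint variance reduces to minimizing the conditional variance pointwise in $(\bm{x}_0,\bm{\epsilon}_\mu)$, which is precisely what the Cauchy--Schwarz step achieves.

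The main obstacle, more conceptual than technical, is handling the fact that the proposition as stated compares variances of $\xi_t$ rather than of the importance-weighted estimator; this requires being careful that the two sides are interpreted with the appropriate weighting so that the comparison is non-vacuous, and that the feasible class of proposals $q(\cdot\mid\bm{x}_0,\bm{\epsilon}_\mu)$ strictly contains the original $p(t)$. Once this identification is made, the result follows from the standard optimality argument above, with no further analytic machinery required.
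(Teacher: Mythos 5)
Your proof is correct and arrives at the same optimizer, but it takes a cleaner route than the paper's. The paper writes the objective as $\int \xi_t\,p(t)^2/q(t)\,dt$ (with $\xi_t$ rather than $\xi_t^2$), solves the constrained problem by a Lagrangian stationarity condition, obtains $q^*(t)=\sqrt{\xi_t\,p(t)^2/\lambda}$, and then asserts $q^*\propto\xi_t\,p(t)$ --- which does not follow from its own stationarity condition (that would give $q^*\propto\sqrt{\xi_t}\,p(t)$); the stated conclusion is only consistent if the objective carries $\xi_t^2$, exactly as in the importance-sampling variance formula from the Preliminaries that you invoke. Your Cauchy--Schwarz argument simultaneously produces the $q$-independent lower bound $\bigl(\int\xi_t\,p(t)\,dt\bigr)^2$ on the second moment and identifies the equality case $q\propto\xi_t\,p$ (using $\xi_t\ge 0$), so it certifies global optimality rather than mere stationarity and repairs the paper's internal inconsistency; the paper only brings in Cauchy--Schwarz afterwards, and only to assert $\sigma_{q^*}^2\le\sigma_p^2$ without detail. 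Your law-of-total-variance step for passing from the conditional problem to the joint variance over $(\bm{x}_0,\bm{\epsilon}_\mu)$, and your caveat that the displayed inequality must be read as comparing importance-weighted estimators (otherwise $\mathbb{V}_{q^*}[\xi_t]\le\mathbb{V}_{p}[\xi_t]$ is not what optimal importance sampling delivers), are both absent from the paper and are genuine improvements in rigor.
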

\renewcommand{\thetheorem}{\thesection.\arabic{theorem}}

\begin{proof}
We aim to minimize the variance:
\[
\mathbb{V}_{t \sim q(t), (\bm{x}_0, \bm{\epsilon}_\mu)} \left[ \xi_t(\bm{x}_0, \bm{\epsilon}_\mu) \right]
\]

This can be expressed as:

\[
\int \frac{\xi_t(\bm{x}_0, \bm{\epsilon}_\mu) p(t)^2}{q(t)} \, dt
\]

subject to \( \int q(t) \, dt = 1 \). The Lagrangian is:

\[
\int \frac{\xi_t(\bm{x}_0, \bm{\epsilon}_\mu) p(t)^2}{q(t)} \, dt + \lambda \left( \int q(t) \, dt - 1 \right)
\]

Taking the derivative with respect to \( q(t) \) and setting it to zero gives:

\[
-\frac{\xi_t(\bm{x}_0, \bm{\epsilon}_\mu) p(t)^2}{q(t)^2} + \lambda = 0
\]

Solving for \( q(t) \), we get:

\[
q^*(t) = \sqrt{\frac{\xi_t(\bm{x}_0, \bm{\epsilon}_\mu) p(t)^2}{\lambda}}
\]

This implies:

\[
q^*(t) \propto \xi_t(\bm{x}_0, \bm{\epsilon}_\mu) p(t)
\]

Using the Cauchy-Schwarz inequality, we show:

\[
\sigma_{q^*}^2 \leq \sigma_{p}^2
\]

Thus, the distribution \( q^*(t|\bm{x}_0,\bm{\epsilon}_\mu) \) minimizes the variance for the given problem, proving the proposition.

\end{proof}

\end{document}